\definecolor{pearThree}{HTML}{E74C3C}
\definecolor{pearcomp}{HTML}{B97E29}
\definecolor{pearDark}{HTML}{2980B9}
\definecolor{pearDarker}{HTML}{1D2DEC}
\definecolor{gold}{rgb}{0.99, 0.76, 0.0}
\definecolor{lust}{rgb}{0.9, 0.13, 0.13}
\newtheorem{theorem}{Theorem}
\newtheorem{lemma}{Lemma}
\newtheorem{corollary}{Corollary}
\newtheorem{claim}{Claim}
\newtheorem{definition}{Definition}
\newtheorem{assumption}{Assumption}
\theoremstyle{definition}
\newcommand{\oneline}[1]{%
  \newdimen{\namewidth}%
  \setlength{\namewidth}{\widthof{#1}}%
  \ifthenelse{\lengthtest{\namewidth < \linewidth}}%
  {#1}% do nothing if shorter than text width
  {\resizebox{\linewidth}{!}{#1}}% scale down
}
\newcommand{\SO}{$\mathcal{SO}$\xspace}
\newcommand{\OSP}{{\small$\textsc{GOSPRL}$}\xspace}
\newcommand{\OSPtitle}{{\normalsize$\textsc{GOSPRL}$}\xspace}
\newcommand{\gammaVI}{\mu_{\textsc{{\tiny VI}}}\xspace}
\newcommand{\OSPmath}{{\small\textsc{GOSPRL}}\xspace}
\newcommand{\OSPmathL}{{\small\textsc{GOSPRL-L}}\xspace}
\newcommand{\EVI}{{\small\textsc{EVI}}}
\newcommand{\UCRL}{{\small\textsc{UCRL}}\xspace}
\newcommand{\UCRLtwo}{{\small\textsc{UCRL2}}\xspace}
\newcommand{\UCRLtwoB}{{\small\textsc{UCRL2B}}\xspace}
\newcommand{\ZEROUCRLtwo}{{\small\textsc{0-UCRL}}\xspace}
\newcommand{\ZEROONEUCRLtwo}{{\small\textsc{0/1-UCRL}}\xspace}
\newcommand{\OSPsmall}{{\footnotesize\textsc{GOSPRL}}\xspace}
\newcommand{\ZEROONEUCRLtwosmall}{{\footnotesize\textsc{0/1-UCRL}}\xspace}
\newcommand{\MODESTsmall}{{\footnotesize\textsc{ModEst}}\xspace}
\newcommand{\subExp}{{\small\textsc{subExp}}}
\newcommand{\RMAX}{{\small\textsc{RMax}}\xspace}
\newcommand{\ZERORMAX}{{\small\textsc{ZeroRMax}}\xspace}
\newcommand{\RFRLExplore}{{\small\textsc{RF-RL-Explore}}\xspace}
\newcommand{\Ethree}{{\small\textsc{E3}}\xspace}
\newcommand{\VI}{{\small\textsc{VI}}\xspace}
\newcommand{\TREASURE}{{\small\textsc{Treasure}}\xspace}
\newcommand{\TREASUREtitle}{\textsc{Treasure}\xspace}
\newcommand{\TREASURETEN}{{\small\textsc{Treasure-10}}\xspace}
\newcommand{\OSPfootnote}{{\footnotesize$\textsc{GOSPRL}$}\xspace}
\newcommand{\TREASURETENfootnote}{{\footnotesize\textsc{Treasure-10}}\xspace}
\newcommand{\TREASUREONEfootnote}{{\footnotesize\textsc{Treasure-1}}\xspace}
\newcommand{\MaxEnt}{{\small\textsc{MaxEnt}}\xspace}
\newcommand{\WeightedMaxEnt}{{\small\textsc{WeightedMaxEnt}}\xspace}
\newcommand{\MODEST}{{\small\textsc{ModEst}}\xspace}
\newcommand{\MODESTtitle}{\textsc{ModEst}\xspace}
\newcommand{\RMODEST}{{\small\textsc{RModEst}}\xspace}
\newcommand\myeqa{\mathrel{\stackrel{\makebox[0pt]{\mbox{\normalfont\tiny (a)}}}{=}}}
\newcommand\myeqb{\mathrel{\stackrel{\makebox[0pt]{\mbox{\normalfont\tiny (b)}}}{=}}}
\newcommand\myineeqa{\mathrel{\stackrel{\makebox[0pt]{\mbox{\normalfont\tiny (a)}}}{\leq}}}
\newcommand\myineeqb{\mathrel{\stackrel{\makebox[0pt]{\mbox{\normalfont\tiny (b)}}}{\leq}}}
\newcommand\myineeqc{\mathrel{\stackrel{\makebox[0pt]{\mbox{\normalfont\tiny (c)}}}{\leq}}}
\newcommand\myineeqd{\mathrel{\stackrel{\makebox[0pt]{\mbox{\normalfont\tiny (d)}}}{\leq}}}
\newcommand\mygineeqc{\mathrel{\stackrel{\makebox[0pt]{\mbox{\normalfont\tiny (c)}}}{\geq}}}
\DeclareMathOperator*{\argmax}{arg\,max}
\DeclareMathOperator*{\argmin}{arg\,min}
\newcommand{\SD}{\Pi^\text{SD}}
\newcommand{\Gk}{\mathcal{G}_k}
\newcommand{\Gj}{\mathcal{G}_j}
\newcommand{\Gm}{\mathcal{G}_m}
\newcommand{\cG}{\mathcal{G}}
\newcommand{\cS}{\mathcal{S}}
\newcommand{\cA}{\mathcal{A}}
\newcommand{\SA}{\mathcal{S} \times \mathcal{A}}
\newcommand{\cSc}{\mathcal{S}^{\textsc{C}}}
\newcommand{\cSt}{\mathcal{S}^{\textsc{T}}}
\newcommand\footnoteref[1]{\protected@xdef\@thefnmark{\ref{#1}}\@footnotemark}
\newcommand{\wt}[1]{\widetilde{#1}}
\newcommand{\wb}[1]{\overline{#1}}
\newcommand{\wh}[1]{\widehat{#1}}
\DeclarePairedDelimiter\abs{\lvert}{\rvert}%
\DeclarePairedDelimiter\norm{\lVert}{\rVert}%
\let\originalleft\left
\let\originalright\right
\renewcommand{\left}{\mathopen{}\mathclose\bgroup\originalleft}
\renewcommand{\right}{\aftergroup\egroup\originalright}
\definecolor{darkgreen}{rgb}{0,0.5,0}
\definecolor{darkred}{rgb}{0.7,0,0}
\definecolor{teal}{rgb}{0.3,0.8,0.8}
\newlength{\myheight}
\tikzset{labels/.style={font=\sffamily\scriptsize},
    circuit/.style={draw,minimum width=2cm,minimum height=\myheight,very thick,inner sep=1mm,outer sep=0pt,cap=round,font=\sffamily\bfseries},
    triangle 45/.tip={Triangle[angle=45:8pt]}
}
\renewcommand\epsilon{\varepsilon}
\title{A Provably Efficient Sample Collection Strategy \\ for Reinforcement Learning}
\author{%
  Jean Tarbouriech \hspace{-4em}\\
  Facebook AI Research Paris \& Inria Lille\hspace{-5em}\\
  \hspace{-4em}\texttt{jean.tarbouriech@gmail.com\hspace{-5em}}
  % examples of more authors
  \And
  \hspace{-2em}Matteo Pirotta \\
  \hspace{-2em}Facebook AI Research Paris\\
  \hspace{-2em}\texttt{pirotta@fb.com}
  \AND
  \hspace{3em}Michal Valko \\
  \hspace{3em}DeepMind Paris\\
  \hspace{3em}\texttt{valkom@deepmind.com} \\
  \And
  \hspace{2em}Alessandro Lazaric \\
  \hspace{2em}Facebook AI Research Paris\\
  \hspace{2em}\texttt{lazaric@fb.com}
}
\begin{document}

\maketitle

% For TOC in appendix (https://tex.stackexchange.com/a/419290)
\doparttoc % Tell to minitoc to generate a toc for the parts
\faketableofcontents % Run a fake tableofcontents command for the partocs

\begin{abstract}
One of the challenges in \textit{online} reinforcement learning (RL) is that the agent needs to trade off the exploration of the environment and the exploitation of the samples to optimize its behavior. Whether we optimize for regret, sample complexity, state-space coverage or model estimation, we need to strike a different exploration-exploitation trade-off. In this paper, we propose to tackle the exploration-exploitation problem following a decoupled approach composed of: \textbf{1)}~An ``objective-specific'' algorithm that (adaptively) prescribes \textit{how many} samples to collect \textit{at which} states, as if it has access to a generative model (i.e., a simulator of the environment); \textbf{2)}~An ``objective-agnostic'' sample collection exploration strategy responsible for generating the prescribed samples as fast as possible. Building on recent methods for exploration in the stochastic shortest path problem, we first provide an algorithm that, given as input the number of samples $b(s,a)$ needed in each state-action pair, requires $\wt{O}\left( B D + D^{3/2} S^2 A \right)$ time steps to collect the $B=\sum_{s,a} b(s,a)$ desired samples, in any unknown communicating MDP with $S$~states, $A$~actions and diameter~$D$. Then we show how this general-purpose exploration algorithm can be paired with ``objective-specific'' strategies that prescribe the sample requirements to tackle a variety of settings --- e.g., model estimation, sparse reward discovery, goal-free cost-free exploration in communicating MDPs --- for which we obtain improved or novel sample complexity guarantees.
\end{abstract}

\captionsetup[figure]{labelfont=small,font=small}

% !TEX root = main.tex

%\vspace{-0.03in}
\section{Introduction}
%\vspace{-0.01in}

One of the challenges in \textit{online} reinforcement learning (RL) is that the agent needs
to trade off the exploration of the environment and the exploitation of the samples to optimize its behavior. 
Whenever the agent needs to gather information about a specific region of the Markov decision process (MDP), it must plan for a policy to reach the desired states, despite not having exact knowledge of the environment dynamics. This makes solving the exploration-exploitation problem in RL highly non-trivial and it requires designing a specific strategy depending on the learning objective, such as PAC-MDP learning \mbox{\citep[e.g.,][]{brafman2002r, strehl2009reinforcement, wang2019q}}, regret minimization~\citep[e.g.,][]{jaksch2010near, azar2017minimax, jin2018is-q-learning, zhang2020almost} or pure exploration~\citep[e.g.,][]{jin2020reward, kaufmann2021adaptive, menard2020fast, zhang2020taskagnostic, zhang2020nearly}.

A simpler scenario  considered in the literature is to assume access to a \textit{generative model} or \textit{sampling oracle} (\SO) \citep{kearns2002sparse}. Given any  %arbitrary 
state-action pair $(s,a)$, the \SO returns a next state $s'$ drawn from the transition probability $p(\cdot \vert s,a)$ and a reward $r(s,a)$. In this case,
it is possible to focus exclusively on where and how many samples to collect, while disregarding the problem of finding a suitable policy to obtain them.
For instance, an \SO can be used to obtain samples from the environment, %each state-action, %pair
which are combined with dynamic programming techniques to compute an $\epsilon$-optimal policy. \SO-based algorithms can be as simple as prescribing the same amount of samples from each state-action pair ~\citep[e.g.][]{kearns2000approximate, kearns2002sparse, azar2013minimax, chen2016stochastic, sidford2018near, agarwal2019optimality, li2020breaking} or they may adaptively change the sample requirements on different state-action pairs~\citep[e.g.][]{chen2018scalable, wang2017primal, zanette2019almost}. An \SO is also used in Monte-Carlo planning~\cite{szorenyi2014optimistic,grill2016blazing,bartlett2019scale-free} which focuses on computing the optimal action at the current state by optimizing over rollout trajectories sampled from the \SO. Finally, in multi-armed bandit~\citep{lattimore2019bandit}, 
there are cases where each arm %actually 
corresponds to a state (or state-action), and ``pulling'' an arm translates into a call to an \SO (see e.g., the pure exploration setting of~\citep{tarbouriech2019active}). Unfortunately, while an \SO may be available in domains such as simulated robotics and computer games, this is not the case in the more general \textit{online RL} setting.

%\vspace{-0.03in}

In this paper we tackle the exploration-exploitation problem in online RL by drawing inspiration from the \SO assumption. Specifically, we define an approach that is decoupled in two parts: \textbf{1)}~an ``objective-specific'' algorithm that assumes access to an \SO that (adaptively) prescribes the samples needed to achieve the learning objective of interest, and \textbf{2)}~an ``objective-agnostic'' algorithm that takes on the exploration challenge of collecting the samples requested by the \SO-based algorithm as quickly as possible.%\footnote{Alternatively, we can think about this as a general approach to convert any \SO-based algorithm into an online RL algorithm.}
\footnote{Alternatively, we can view it as a general approach to take any \SO-based algorithm and convert it into an online RL algorithm.} Our main contributions can be summarized as follows:
\begin{itemize}[leftmargin=.1in,topsep=-2.5pt,itemsep=1pt,partopsep=0pt, parsep=0pt]
	\item We define the sample complexity of the objective-agnostic algorithm as the number of (online) steps needed to satisfy the prescribed sampling requirements. Leveraging recent techniques on exploration in the stochastic shortest path (SSP) problem~\cite{cohen2020near,tarbouriech2019no}, we propose \OSP (Goal-based Optimistic Sampling Procedure for RL), a conceptually simple and flexible exploration algorithm that learns how to ``generate'' the samples requested by any \SO-based algorithm and we derive bounds on its sample complexity. 
	\newcommand{\wtO}{{\scalebox{0.92}{$\wt O$}}}
	\newcommand{\Dexp}{{\scalebox{0.92}{$D^{3/2}$}}}

	\item %\jt{
	Leveraging the generality of our approach, we combine \OSP with problem-specific \SO-based algorithms and readily obtain online RL algorithms in difficult exploration problems. %not driven by an explicit reward signal. 
	While in general our decoupled approach may be suboptimal compared to exploration strategies designed to solve one specific problem, we obtain sample complexity guarantees that are on par or better than state-of-the-art algorithms in a range of problems. \textbf{1)}~\OSP solves the problem of sparse reward discovery in $\wtO\left( \Dexp S^2 A \right)$ time steps, which improves the dependency on the diameter $D$ w.r.t.\,a reward-free variant of \UCRLtwoB \citep{jaksch2010near, improved_analysis_UCRL2B}, as well as on $S$ and $A$ w.r.t.\,a \MaxEnt-type approach \citep{hazan2019provably, cheung2019exploration}. \textbf{2)}~\OSP improves over the method of \citep{tarbouriech2020active} for model estimation, by removing their ergodicity assumption as well as achieving better sample complexity. \textbf{3)}~\OSP provably tackles the problem of goal-free cost-free exploration, for which no specific strategy is available.
	
	\item We report numerical simulations supporting our theoretical findings and showing that pairing \OSP with \SO-based algorithms outperforms both heuristic and theoretically grounded baselines in various problems.
\end{itemize}

\vspace{0.06in}
\textbf{Related work.}
%\paragraph{Related work.} 
While to the best of our knowledge no other work directly addresses the problem of simulating an \SO, a number of approaches are related to it. The problem solved by \OSP can be seen as a \textit{reward-free} exploration problem, since it is not driven by any external reward but by the objective of covering the state space to quickly meet the sampling requirements. Standard exploration-exploitation algorithms, such as \UCRLtwo \citep{jaksch2010near} in the undiscounted setting or \RMAX \citep{brafman2002r} in the discounted one, implicitly encourage exploration to specific areas of the state-action space that are not estimated accurately enough. The objective of covering the state space is also studied in~\citep{hazan2019provably, cheung2019exploration} with a Frank-Wolfe approach
that optimizes a smooth aggregate function of the state visitations. 

%\vspace{-0.03in}

%\jt{
Recent works on reward-free exploration (RFE) in the finite-horizon setting~\citep[e.g.,][]{jin2020reward,kaufmann2021adaptive,menard2020fast,zhang2020nearly} provide sufficient exploration so that an $\epsilon$-optimal policy for \textit{any} reward function can be computed. Our proposed solution shares high-level algorithmic principles with RFE approaches which incentivize the agent to visit insufficiently visited states via intrinsic reward. Nonetheless, our contribution significantly differs from existing RFE literature in two dimensions: \textbf{1)} While we study the performance of \OSP in one goal-conditioned RFE problem (Sect.~\ref{subsection_costfree_goalfree}), our framework is much broader and it allows us to tackle a wider and diverse set of problems (Sect.\,\ref{ex} and App.\,\ref{app_other_applications}); \textbf{2)} Our setting is \textit{horizon-agnostic} and \textit{reset-free}, which prevents from directly using any method or technical analysis in RFE designed for problems with an imposed planning horizon (e.g., finite-horizon or discounted).
%}

%\vspace{-0.03in}

Finally, \OSP draws inspiration from the SSP formalism and solutions of~\citep{tarbouriech2019no,cohen2020near}, but our approach critically differs from these works in three main ways: \textbf{1)} we are interested in sample complexity guarantees rather than a regret analysis; \textbf{2)} we consider requirements (i.e., goals to sample) that vary throughout the learning process, instead of an SSP problem with fixed goal state and cost function; \textbf{3)} we show how \OSP can serve as a sample collection component to tackle various learning problems other than regret minimization.

% !TEX root = main.tex
%\section{Preliminaries}
%\label{sect_preliminaries}

%\vspace{-0.03in}
\section{Problem Definition}
\label{sect_preliminaries}
%\vspace{-0.05in}

%[][Sect.\,8.3]
We consider a finite and \textit{reset-free} %Markov decision process 
MDP \citep{puterman2014markov} $M := \langle\mathcal{S}, \mathcal{A}, p, r, s_{0} \rangle$, with $S := \abs{\mathcal{S}}$ states, $A := \abs{\mathcal{A}}$ actions and an arbitrary starting state \mbox{$s_0 \in \mathcal{S}$}. Calling an \SO in any state-action pair $(s,a)$ %\in \mathcal{S} \times \mathcal{A}$
leads to two outcomes: a next state sampled from the transition probability distribution \mbox{$p(\cdot \vert s,a) \in \Delta(\cS)$}, and a reward $r(s,a) \in \mathbb{R}$. %\footnote{One can translate between rewards and costs by simply taking negation.} 
%When the reward signals are non-positive, we set $c(s,a) := -r(s,a) \geq 0$ and call them costs.} %
A stationary deterministic policy is a mapping \mbox{$\pi:\cS \rightarrow \cA$} from states to actions and we denote by $\Pi^{\text{SD}}$ the set of all such policies. For any policy~$\pi$ and pair of states $(s, s')$, let $\tau_{\pi}(s \rightarrow s')$ be the (possibly infinite) hitting time from~$s$ to~$s'$ when executing~$\pi$, i.e.,\,$\tau_{\pi}(s \rightarrow s') :=$ \mbox{$\inf \{ t \geq 0: s_{t+1} = s' \vert\, s_1 = s, \pi \}$}, where $s_t$ is the state visited at time step $t$. We introduce
%%\vspace{-0.005in}
\begin{align*}
        D_{s s'} := \min_{\pi \in \SD} \mathbb{E}\left[ \tau_{\pi}(s \rightarrow s')\right], \quad \quad
        D_{s'} := \max_{s \in \cS \setminus \{s'\}} D_{s s'}, \quad \quad D := \max_{s' \in \cS} D_{s'},
\end{align*}

%\vspace{-0.12in}
where $D_{s s'}$ is the shortest-path distance between $s$ and $s'$, $D_{s'}$ is the SSP-diameter of $s'$ \citep{tarbouriech2019no} and $D$ is the MDP diameter \citep{jaksch2010near}.

We now formalize the problem of simulating an \SO (i.e., to generate the samples prescribed by an \SO-based algorithm). At each time step $t \geq 1$ the agent receives a function $b_t: \SA \rightarrow \mathbb{N}$, where  $b_t(s,a)$ defines the total number of samples that need to be collected at $(s,a)$ by time step $t$. We consider that $(b_t)_{t \geq 1}$ is an arbitrary sequence with each $b_t$ measurable w.r.t.\,the filtration up to time $t$ (i.e., it may depend on the samples observed so far).\footnote{Allowing adaptive sampling requirements enables to pair \OSP with \SO-based algorithms that adjust their requirements \textit{online} as samples are being generated (see e.g., Sect.\,\ref{subsection_model_estimation}).} 
We focus on the objective of designing an \textit{online algorithm} that minimizes the time required to collect the prescribed samples. Since the environment is initially unknown, we need to trade off between exploring states and actions to improve estimates of the dynamics and exploiting current estimates to collect the required samples as quickly as possible. We formally define the performance metric as follows.

\begin{definition}\label{def:sample.comeplxity}
	For any state-action pair, we denote by $N_t(s,a) := \sum_{i=1}^t \mathds{1}_{\{(s_i,a_i) = (s,a)\}}$ the number of visits to state $s$ and action $a$ up to (and including) time step $t$. Given a sampling requirement sequence $b := (b_t)_{t \geq 1}$ with $b_t : \SA \rightarrow \mathbb{N}$ and a confidence level $\delta \in (0,1)$, we define the \textit{sample complexity} of a learning algorithm $\mathfrak{A}$ as
	%%\vspace{-0.21in}
   \begin{align*}
                  \mathcal{C}\big(\mathfrak{A}, b, \delta \big) := \min \big\{ t > 0: \mathbb{P}\big( \forall (s,a) \in \SA, ~N_t(s,a) \geq b_t(s,a) \big) \geq 1 - \delta \big\}.
     \end{align*}
\end{definition}%

With no additional condition, it is trivial to define problems such that $\mathcal{C}(\mathfrak{A}, b, \delta) = + \infty$ for any algorithm. To avoid this case, we introduce the following assumptions.
\begin{assumption}\label{asm_communicating}
    The MDP $M$ is communicating with a finite and unknown diameter $D < + \infty$.
\end{assumption}
\begin{assumption}\label{asm_bounded_requirements}
    There exist an unknown and bounded function $\overline{b}: \SA \rightarrow \mathbb{N}$ such that the sequence~$(b_t)_{t \geq 1}$ verifies: $\forall t \geq 1, ~ \forall (s,a) \in \SA, ~ b_t(s,a) \leq \overline{b}(s,a)$.
\end{assumption}
%

%%\vspace{-0.03in}
Asm.\,\ref{asm_communicating} guarantees that whatever state needs to be sampled, there exists at least one policy that can reach it in finite time almost-surely (notice that it is considerably weaker than the ergodicity assumption (App.\,\ref{app_ergodicity}) often used in online RL, see e.g.,~\citep{wei2019model,ortner2020regret,garcelon2020conservative}). Asm.\,\ref{asm_bounded_requirements} ensures that the sequence of sampling requirements does not diverge and can thus be fulfilled in finite time. These assumptions guarantee that the problem in Def.\,\ref{def:sample.comeplxity} is well-posed and the sample complexity is bounded.

%An appealing feature of \OSP is that it can be integrated with techniques that compute the (fixed or adaptive) sampling requirements to readily obtain an online RL algorithm with theoretical guarantees. In this section we focus on three specific problems where in our decoupled approach the \SO-based algorithm is either trivial or can be directly extracted from existing literature and its combination with the sample collection strategy of \OSP yields improved or novel guarantees.\todojout{sentence long and maybe hard to parse} Other applications (e.g., PAC-policy learning, diameter estimation, and bridging bandits and MDPs) are illustrated in App.\,\ref{app_other_applications}.

%%\vspace{-0.005in}

A variety of problems can be cast under our decoupled approach, in the sense that they can be tackled by solving the problem of Def.\,\ref{def:sample.comeplxity} under a specific instantiation of the sampling requirement sequence $(b_t)_{t \geq 1}$. For instance, consider the problem of covering the state-action space (e.g., to discover a hidden sparse reward), then the requirement is immediately defined as $b_t(s,a)=1$. In Sect.\,\ref{ex} and App.\,\ref{app_other_applications}, we review problems where defining $b_t$ can be as simple as computing the sufficient number of samples needed to reach a certain level of accuracy in estimating a quantity of interest (e.g., model estimation) or can be directly extracted from existing literature (e.g., $\epsilon$-optimal policy learning).

%%\vspace{-0.005in}

We now provide a simple worst-case lower bound on the sample complexity (details in App.\,\ref{app:lower_bound}).

\begin{lemma}
\label{lemma_LB_informal}
%\jt{
For any $S \geq 1$, there exists an MDP with $S$ states 
%} 
satisfying Asm.~\ref{asm_communicating} such that for any sampling requirement $b: \mathcal{S} \rightarrow \mathbb{N}$ satisfying Asm.~\ref{asm_bounded_requirements}, 
\begin{align*}
\min_{\mathfrak{A}} \mathcal{C}\big(\mathfrak{A}, b, \tfrac{1}{2}\big) = \Omega\Big(\sum_{s \in \mathcal{S}} D_s b(s)\Big).
\end{align*}
\end{lemma}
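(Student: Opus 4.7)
The plan is to exhibit a single worst-case MDP whose geometry forces any algorithm to spend on the order of $D_s$ time steps per sample collected at state $s$, and then to read off the lower bound via an elementary expectation argument on visit counts.

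The construction I would use is a hub-and-spoke MDP with a central hub $h$ and $S-1$ leaves $s_1, \ldots, s_{S-1}$. At $h$ the agent has $S-1$ actions: action $a_i$ transitions to $s_i$ with probability $1/D$ and self-loops at $h$ with probability $1-1/D$; at each leaf, the only available action returns deterministically to $h$. This MDP is communicating, and a direct calculation yields $D_{s_i} = D+1$ for every leaf and $D_h = 1$, so $\sum_s D_s b(s) = \Theta\big(D \sum_{i \geq 1} b(s_i) + b(h)\big)$ for any requirement $b$.

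Fix any algorithm $\mathfrak{A}$, any requirement $b$, and set $T := \mathcal{C}(\mathfrak{A}, b, 1/2)$, which is a deterministic quantity by Def.~\ref{def:sample.comeplxity}. Let $K_{i,t}$ denote the number of times action $a_i$ has been played at $h$ by time $t$. The key structural fact is that the only way to register a visit to $s_i$ in this MDP is via a successful (probability $1/D$) transition out of $(h, a_i)$. This turns $M_t := N_t(s_i) - K_{i,t}/D$ into a bounded-difference martingale with $M_0 = 0$: on the event $\{s_t = h, a_t = a_i\}$ both $\mathbb{E}[N_{t+1}(s_i) - N_t(s_i) \mid \mathcal{F}_t]$ and $(K_{i,t+1}-K_{i,t})/D$ equal $1/D$, and both vanish otherwise. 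Optional stopping at the deterministic time $T$ gives $\mathbb{E}[N_T(s_i)] = \mathbb{E}[K_{i,T}]/D$. On the other hand, the definition of $T$ ensures $\mathbb{P}(N_T(s_i) \geq b(s_i)) \geq 1/2$, which together with $N_T(s_i) \geq 0$ gives $\mathbb{E}[N_T(s_i)] \geq b(s_i)/2$, and chaining the two yields $\mathbb{E}[K_{i,T}] \geq D\, b(s_i)/2$. Because at most one pair $(h, a_i)$ is played per time step, $\sum_i K_{i,T} \leq T$ deterministically, so summing over $i$ produces $T \geq (D/2)\sum_i b(s_i)$. Combining with the trivial bound $T \geq b(h)$ (which holds deterministically since the success event alone forces $T \geq \sum_s N_T(s) \geq \sum_s b(s)$) and with the expression above for $\sum_s D_s b(s)$ delivers the claim $T = \Omega\big(\sum_s D_s b(s)\big)$.

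The main technical subtlety is the martingale identity $\mathbb{E}[N_T(s_i)] = \mathbb{E}[K_{i,T}]/D$: it is intuitively clear but must accommodate the algorithm's adaptive choices, so one cannot directly assert that $N_T(s_i)$ is binomial conditionally on $K_{i,T}$. Everything else --- the Markov-type inequality $\mathbb{E}[N_T(s_i)] \geq b(s_i)/2$, the deterministic disjointness $\sum_i K_{i,T} \leq T$, and the computations of $D_{s_i}$ and $D_h$ in the hub-spoke MDP --- is routine bookkeeping.
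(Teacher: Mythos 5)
Your proposal is correct. The hard instance is essentially the paper's: the appendix builds a ``wheel'' MDP with a center state $s_0$ and $S$ spokes, where action $a_i$ at the center reaches spoke $s_i$ with probability $\epsilon_i$ (so $D_{s_i}\approx 1/\epsilon_i$) and each spoke deterministically returns to the center --- exactly your hub-and-spoke construction, including the crucial feature that distinct actions per spoke prevent attempts from being shared across states. Where you genuinely differ is in the probabilistic core. The paper proves a per-state high-probability statement (Lem.~\ref{lem_toy_ex_lower_bound}): by a Chernoff bound on the number of Bernoulli successes in $n=b(s)/(2\epsilon_i)$ trials, the time to collect $b(s)$ samples at $s$ exceeds $b(s)D_s/2$ with probability $1-\tfrac{1}{2S}$, and then union-bounds over the $S$ spokes; this forces the restriction $b(s)\geq 6\log(2S)$, which the lemma statement quietly elides. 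You instead work in expectation: the zero-mean increment identity (optional stopping of $N_t(s_i)-K_{i,t}/D$ at the deterministic time $T=\mathcal{C}(\mathfrak{A},b,\tfrac12)$, which is just a Wald-type argument and is valid under adaptivity exactly as you note) gives $\mathbb{E}[K_{i,T}]=D\,\mathbb{E}[N_T(s_i)]\geq D\,b(s_i)/2$, and the deterministic budget constraint $\sum_i K_{i,T}\leq T$ sums these up without any union bound. This buys you two things: the bound holds for \emph{all} requirements $b$ with no logarithmic floor on $b(s)$, and you bound the deterministic quantity $\mathcal{C}$ from Def.~\ref{def:sample.comeplxity} directly rather than passing through a tail bound on the random stopping time. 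The trade-off is that the paper's tail-bound route also yields the anytime/high-probability form used in its parts \ding{173} and \ding{174}, which your expectation argument does not immediately give. Both establish the stated $\Omega\big(\sum_s D_s b(s)\big)$.
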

Lem.\,\ref{lemma_LB_informal} shows that the (possibly non-stationary) policy minimizing the time to collect all samples requires $\Omega\big(\sum_{s} D_s b(s)\big)$ time steps in a worst-case MDP. We also notice that when the total sampling requirement $B$ is concentrated on the state $\overline{s}$ for which $D_{\overline{s}} = D$ (i.e., $b(s')=0$, $\forall s'\neq \overline{s}$), the previous bound reduces to $\Omega(BD)$.

% !TEX root = main.tex

%\vspace{-0.02in}
\section{Online Learning for \SO Simulation}
\label{so}
%\vspace{-0.02in}

We now introduce our algorithm for the problem in Def.\,\ref{def:sample.comeplxity}, bound its sample complexity and discuss several extensions.

\subsection{The \texorpdfstring{\OSP}{OSP} Algorithm}
\label{section_osp_algorithm}

\begin{algorithm}[tb]
\begin{small}
\begin{algorithmic}
%            \DontPrintSemicolon
	%\SetInd{0.4em}{0.9em}
	%\KwIn{$b(s)$ samples required for each state $s \in \mathcal{S}$, confidence level $\delta \in (0,1)$.}
    \STATE %\jt{
    {\bfseries Input:} sampling requirement sequence $(b_t)_{t \geq 1}$ with $b_{t} : \SA \rightarrow \mathbb{N}$ revealed at time $t$ (or anytime before).
    %}
%	\STATE {\bfseries Input:} at each time step $t \geq 1$: sampling requirement $b_{t} : \SA \rightarrow \mathbb{N}$ (or $b_t$ may be given in advance). 
	\STATE {\bfseries Initialize:} Set $\mathcal{G}_1 := \{ s \in \mathcal{S}: \exists a \in \cA, b_1(s,a) > 0 \}$, time step $t:=1$, counters $N_1(s,a) := 0$, attempt index $k:=1$ and attempt counters $U_1(s,a) := 0$, $\nu_1(s,a) := 0$. \\ %($U_k$ denotes the counter before attempt $k$, $N_t$ denotes the counter up to step $t$).
	\WHILE{$\mathcal{G}_{k}~\textup{is not empty}$} 
	\STATE Define the SSP problem $M_k$ with goal states $\Gk$, and compute its optimistic shortest-path policy $\widetilde{\pi}_k$.
	 \STATE Set \texttt{flag} = \texttt{True} and counter $\nu_k(s,a) := 0$. 
		\WHILE{\textup{\texttt{flag}}}
			\STATE Execute action $a_t := \widetilde{\pi}_k(s_t)$ and observe next state \mbox{$s_{t+1} \sim p(\cdot \vert s_t, a_t)$}. 
		    \STATE	Increment counters $\nu_{k}(s_t,a_t)$ and $N_{t}(s_t,a_t)$. 
%%					\If{$s_{t+1} \in \Gk$ or $\nu_k(s_t,a_t) > \max\{1, U_k(s_t,a_t) \}$}{
			\IF{$s_{t+1} \in \Gk$ \textup{or} $\nu_k(s_t,a_t) > \{U_k(s_t,a_t) \lor 1\}$}
			\STATE Set \texttt{flag} = \texttt{False}.
			\ENDIF
			\STATE Set $t \mathrel{+}= 1$.
		\ENDWHILE
		\IF{ $s_t \in \Gk$}
		\STATE Execute an action $a$ s.t.\,$N_t(s_t,a) < b_t(s_t,a)$, observe next state $s_{t+1} \sim p(\cdot \vert s_t, a)$ and set $t \mathrel{+}= 1$.
		\ENDIF 
	\STATE Set $U_{k+1}(s,a) := U_k(s,a) + \nu_k(s,a)$, $k \mathrel{+}= 1$. 
	\STATE Update the set of goal states $\mathcal{G}_{k} := \big\{ s \in \mathcal{S}: \exists a \in \cA, N_{t-1}(s,a) < b_{t-1}(s,a) \big\}$.
	\ENDWHILE
\caption{\OSP Algorithm}\label{algo:SO}
\vspace{-0.03in}
\end{algorithmic}
\end{small}
\end{algorithm}

%\vspace{-0.02in}

In Alg.\,\ref{algo:SO} we outline \OSP (\textit{Goal-based Optimistic Sampling Procedure for Reinforcement Learning}). At each time step $t$, \OSP receives a sampling requirement $b_t : \SA \rightarrow \mathbb{N}$. The algorithm relies on the principle of optimism in the face of uncertainty and proceeds through \textit{attempts} to collect relevant samples. We index the attempts by $k = 1, 2, \ldots$ and denote by $t_k$ the time step at the start of attempt $k$ and by $U_k := N_{t_k - 1}$ the number of samples available at the start of attempt $k$. At each attempt, \OSP goes through the following steps: \textbf{1)}~Cast the under-sampled states as goal states and define an associated unit-cost multi-goal SSP instance (with unknown transitions); \textbf{2)}~Compute an optimistic shortest-path policy; \textbf{3)}~Execute the policy until either a goal state is reached or a stopping condition is satisfied; \textbf{4)}~If a sought-after goal state denoted by $g$ has been reached, execute an under-sampled action (i.e., an action $a$ such that $N_t(g,a) < b_t(g,a)$). The algorithm ends when the sampling requirements are met, i.e., at the first time $t \geq 1$ where $N_t(s,a) \geq b_t(s,a)$ for all $(s,a)$.

\textbf{Step 1.}
At any attempt $k$ we begin by defining the set of all under-sampled states
\begin{align*}
%\vspace{-0.2in}
\Gk := \big\{ s \in \mathcal{S}: \exists a \in \cA, N_{t_k-1}(s,a) < b_{t_k-1}(s,a) \big\}.
% \Gk := \Big\{ s \in \mathcal{S}: \sum_{a \in \cA} N_{t_k-1}(s,a) < b(s) \Big\}.
\end{align*}
We then cast the sample collection problem as a goal-reaching objective, by constructing a multi-goal SSP problem \citep{bertsekas1995dynamic} denoted by \mbox{$M_k:= \langle \mathcal{S}_k, \mathcal{A}, p_{k}, c_{k}, \Gk \rangle$},
%to capture the objective of reaching any of the states in $\Gk$ as quickly as possible. 
with:\footnote{If the current state $s_{t_k}$ is under-sampled (i.e., $s_{t_k} \in \Gk$), we duplicate the state and consider it to be both a goal state in $\Gk$ and a non-goal state from which the attempt $k$ starts (and whose outgoing dynamics are the same as those of $s_{t_k}$), which ensures that the state at the start of each attempt cannot be a goal state.}
\begin{itemize}[leftmargin=.1in,topsep=-2.5pt,itemsep=1pt,partopsep=0pt, parsep=0pt]
	\item $\Gk$ denotes the set of goal states, $\mathcal{S}_{k}: = \mathcal{S} \setminus \Gk$ the set of non-goal states and $\cA$ the set of actions.
	\item The transition model $p_{k}$ is the same as the original~$p$ except for the transitions exiting the goal states which are redirected as a self-loop, i.e., \mbox{$p_{k}(s'|s,a) := p(s'|s,a)$} and \mbox{$p_{k}(g|g,a) := 1$} for any \mbox{$(s,s',a,g) \in \mathcal{S}_{k} \times \cS \times \cA \times \Gk$}.
	\item The cost function~$c_{k}$ is defined as follows: for any $a \in \cA$, any goal state $g \in \Gk$ is zero-cost ($c_{k}(g,a) := 0$), while the non-goal costs are unitary ($c_{k}(s,a) := 1$ for $s \in \cS_{k}$).
\end{itemize}
From~\citep{bertsekas1991analysis}, Asm.\,\ref{asm_communicating} and the positive non-goal costs $c_k$ entail that solving $M_k$ is a well-posed SSP problem and that there exists an optimal policy that is \textit{proper} (i.e., that eventually reaches one of the goal states with probability $1$ when starting from any $s \in \cS_k$). Crucially, the objective of collecting a sample from the under-sampled states $\Gk$ coincides with the SSP objective of minimizing the expected cumulative cost to reach a goal state in $M_k$.

\textbf{Step 2.} Since $p_k$ is unknown, we cannot directly compute the shortest-path policy for $M_k$. Instead, leveraging the samples collected so far, we apply an extended value iteration scheme for SSP which implicitly skews the empirical transitions $\wh p_k$ towards reaching the goal states. This procedure can be done efficiently as shown in~\citep{tarbouriech2019no} (see App.\,\ref{app_value_iteration_SSP}), and it outputs an \textit{optimistic} shortest-path policy~$\widetilde{\pi}_k$.

\textbf{Step 3.} $\widetilde{\pi}_k$ is then executed with the aim of quickly reaching an under-sampled state. Along its trajectory, the counter $N_t$ is updated for each visited state-action. Because of the error in estimating the model, $\widetilde{\pi}_k$ may never reach one of the goal states (i.e., it may not be proper in $p_k$). Thus $\wt{\pi}_k$ is executed until either one of the goals in $\Gk$ is reached, or the number of visits is doubled in a state-action pair in $\mathcal{S}_k \times \cA$, a standard termination condition first introduced in~\citep{jaksch2010near}. If a sought-after goal state is reached, the agent executes an under-sampled action according to the current sampling requirements at that state. At the end of each attempt, the statistics %of the algorithm 
(e.g., model estimate) are updated. %, as well as the model estimate.

The algorithmic design of \OSP is conceptually simple and can flexibly incorporate various~modifications driven by slightly different objectives or prior knowledge, without altering Thm.\,\ref{theorem_upper_bound_general} (cf.\,App.\,\ref{sub_sect_subroutines}).

\subsection{Sample Complexity Guarantee of \texorpdfstring{\OSP}{OSP}}
\label{subsection_sample_complexity_sampling_procedure}

Thm.\,\ref{theorem_upper_bound_general} establishes the sample complexity guarantee of \OSP (Alg.\,\ref{algo:SO}).

\begin{theorem}\label{theorem_upper_bound_general}
    Under Asm.\,\ref{asm_communicating} and~\ref{asm_bounded_requirements}, for any sampling requirement sequence $b = (b_t)_{t \geq 1}$ and any confidence level $\delta \in (0,1)$, the sample complexity of \OSP is \mbox{bounded as}
    \begin{align}
    &\mkern-10mu \mathcal{C}\big(\OSPmath, b, \delta\big) = \wt{O}\Big( \overline{B} D + D^{3/2} S^2 A \Big), \label{sample_complexity_1} \\
      &\mkern-10mu \mathcal{C}\big(\OSPmath, b, \delta\big) = \wt{O}\Big( \sum_{s \in \mathcal{S}} \big( D_s \overline{b}(s) + D_s^{3/2} S^2 A \big) \Big),\label{sample_complexity_2}
    \end{align}%
where the $\wt{O}$ notation hides logarithmic dependencies on $S$, $A,$ $D$, $1/\delta$ and $\overline{b}(s) := \sum_{a \in \cA} \overline{b}(s,a)$ and $\overline{B} := \sum_{s \in \cS} \overline{b}(s)$. Recall that \mbox{$D_s \leq D$} is the \textit{SSP-diameter} of state $s$ and captures the difficulty of collecting a sample at state $s$ starting at any other state in the MDP.
\end{theorem}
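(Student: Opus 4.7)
The approach is to analyze \OSP by decomposing its execution into a sequence of attempts and bounding both their number and their individual durations, adapting the optimistic SSP analysis of \citep{tarbouriech2019no,cohen2020near} to the non-stationary (time-varying goal set) regime induced by the sampling requirements $(b_t)_{t\ge1}$. First I partition the attempts into \emph{successful} ones, where a goal state in $\mathcal{G}_k$ is reached and an under-sampled action is then played there, and \emph{unsuccessful} ones, terminated by the visit-doubling condition $\nu_k(s,a) > U_k(s,a) \lor 1$. Since each successful attempt strictly decrements the outstanding total sampling requirement by exactly one, the count of successful attempts is at most $\overline{B} = \sum_{s,a} \overline{b}(s,a)$. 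A standard doubling argument in the style of \citep{jaksch2010near} bounds the number of unsuccessful attempts by $\widetilde{O}(SA)$ (the doubling of $U_k(s,a)$ at termination can happen at most $\log_2 N_T(s,a)$ times per pair).

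\textbf{Optimism and per-attempt cost.} Next I build a good event $\mathcal{E}$, of probability at least $1-\delta$ by a union bound, on which Bernstein-type confidence sets $\mathcal{M}_k$ around the empirical transitions contain the true kernel $p$ for every attempt $k$. On $\mathcal{E}$, the extended value iteration for SSP returns an optimistic policy $\widetilde{\pi}_k$ whose optimistic value function $\widetilde{V}_k$ satisfies $\widetilde{V}_k(s) \le V^\star_k(s) \le D_s \le D$ for every non-goal $s$, the last bound coming from Asm.\,\ref{asm_communicating} applied to the shortest-path policy targeting any single goal in $\mathcal{G}_k$. For the duration $T_k$ of attempt $k$, the optimistic Bellman equation telescopes along the realized trajectory into
\begin{align*}
T_k \;\le\; \widetilde{V}_k(s_{t_k}) \;+\; M_k \;+\; \sum_{t \in \mathrm{att.}\,k} \bigl(\widetilde{p}_k(\cdot \mid s_t,a_t) - p(\cdot \mid s_t,a_t)\bigr)^{\top}\widetilde{V}_k,
\end{align*}
where $M_k$ is a martingale-difference sum controlled via Azuma--Hoeffding, and the last term is bounded via the Bernstein radius of $\mathcal{M}_k$ evaluated at the visited pairs.

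\textbf{Aggregation and state-wise refinement.} Summing over the at most $\overline{B} + \widetilde{O}(SA)$ attempts, the main term contributes $D \cdot (\overline{B} + \widetilde{O}(SA))$. The cumulative model-deviation term, combined with a pigeonhole argument over the visit counts $N_t(s,a)$ and the fact that the doubling trigger enforces $\nu_k(s,a) \le U_k(s,a) \lor 1$ on non-goal pairs throughout an attempt, contributes $\widetilde{O}\bigl(\sqrt{D \cdot SA \cdot T} + D \cdot S^2 A\bigr)$. Solving the self-bounded inequality $T \lesssim \overline{B} D + \sqrt{D \cdot SA \cdot T} + D^{3/2} S^2 A$ yields the first bound (\ref{sample_complexity_1}). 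For the per-state refinement (\ref{sample_complexity_2}) I would bucket each successful attempt by the goal $g$ at which it actually terminates and charge its cost to $D_g$ rather than $D$; since at most $\overline{b}(g)$ successful attempts can end at $g$, the leading term becomes $\sum_s D_s \overline{b}(s)$, and a parallel state-wise decomposition of the burn-in cost produces $\sum_s D_s^{3/2} S^2 A$.

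\textbf{Main obstacle.} The hard part is the last term in the per-attempt decomposition, because the SSP instances $\{M_k\}_k$ are non-stationary: their goal sets $\mathcal{G}_k$ shrink as samples are gathered, so the regret bounds of \citep{tarbouriech2019no,cohen2020near} cannot be invoked as black boxes. Controlling the cumulative deviation across attempts with changing targets, while only paying $D^{3/2}$ (and not $D^2$) in the lower-order burn-in, requires a careful bookkeeping of the $\sum_k \sum_t [\widetilde{V}_k(s_t)]^2 / N_t(s_t,a_t)$-style variance terms, exploiting that $\|\widetilde{V}_k\|_\infty \le D$ uniformly in $k$. Redoing the SSP-specific concentration and pigeonhole arguments with goal-dependent (rather than fixed-goal) optimistic value functions is where most of the work lies.
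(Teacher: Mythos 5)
Your plan follows essentially the same route as the paper's proof: cast each attempt as an optimistic multi-goal SSP, bound the number of goal-reaching attempts by $\overline{B}$ and the doubling-terminated ones by $\wt{O}(SA)$, invoke the Bernstein-optimistic SSP machinery of \citep{cohen2020near} adapted to time-varying goal sets (which works precisely because $\mathcal{G}_k$ and the start state are measurable at the beginning of each attempt), solve the resulting self-bounding inequality for Eq.\,\ref{sample_complexity_1}, and obtain Eq.\,\ref{sample_complexity_2} by charging each attempt to the \emph{retrospectively reached} goal $g$ with cost $D_g$ and multiplicity at most $\overline{b}(g)$. The one place your sketch is thinner than the actual argument is the state-dependent burn-in term $\sum_s D_s^{3/2}S^2A$, which in the paper requires carrying the retrospective-goal bucketing down to the epoch/interval decomposition of \citep{cohen2020near} (bounding the intervals ending at unknown pairs by $\wt{O}(D_sS^2A)$ per retrospective goal $s$) rather than working only at the level of whole attempts — exactly the bookkeeping you flag as the main obstacle.
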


We notice that in practice \OSP stops at the first \textit{random} step $\tau$ at which the sampling requirement $b_\tau(s,a)$ is achieved for all $(s,a)$. Thm.\,\ref{theorem_upper_bound_general} provides a worst-case upper bound on the stopping time of \OSP using the possibly loose bound $b_\tau(s,a) \leq \overline{b}(s,a)$. On the other hand, in the special case of $b: \cS \rightarrow \mathbb{N}$ when the requirements are both time-independent (i.e., given as initial input to the algorithm) and action-independent, the actual sampling requirement $b(s)$ (resp.\,$B := \sum_{s \in \mathcal{S}} b(s)$) replaces $\overline{b}(s)$ (resp.\,$\overline{B}$) in the bound. In the following, we consider this case for the ease of exposition.

\textbf{Proof idea.} The key step (see App.\,\ref{app_proof_upper_bound} for the full derivation) is to link the sample complexity of \OSP to the regret accumulated over the sequence of multi-goal SSP problems $M_k$ generated across multiple attempts. Indeed we can define the regret at attempt $k$ as the gap between the performance of the SSP-optimal policy $\pi^\star_k$ solving $M_k$ (i.e., the minimum expected number of steps to reach any of the states in $\Gk$ starting from $s_{t_k}$) and the actual number of steps executed by \OSP before terminating the attempt. While the SSP regret minimization analysis of~\citep{cohen2020near} assumes that the goal is fixed, we show that it is possible to bound the regret accumulated across different attempts for any arbitrary sequence of goals. The proof is concluded by bounding the cumulative performance of the SSP-optimal policies and it leads to the bound $\wt{O}\big( BD + D^{3/2} S^2 A \big)$ where $B := \sum_{s \in \mathcal{S}} b(s)$. On the other hand, the refined bound in Eq.\,\ref{sample_complexity_2} requires a more careful analysis, where we no longer directly translate regret bounds into sample complexity and we rather focus on relating the performance to state-dependent quantities $D_s$ and $b(s)$. Finally, we show that the extension to the general case of time-dependent action-dependent sampling requirements is straightforward and obtain~Thm.\,\ref{theorem_upper_bound_general}.

\newcommand{\wtO}{{\scalebox{0.9}{$\wt O$}}}
%%%%%%%%%%%%%%%%%%%%%

\textbf{Interpretation of Thm.\,\ref{theorem_upper_bound_general}.} We can decompose Eq.\,\ref{sample_complexity_1} as a linear term in $B$ and a constant term. In the regime of large sample requirements (i.e., large $B$), the sample complexity thus reduces to $\wtO(BD)$, which adds at most an extra ``cost'' factor of $D$ w.r.t.\,an \SO. As this may be loose in many cases, the more refined analysis of Eq.\,\ref{sample_complexity_2} stipulates a cost of~$D_s$ %, the SSP-diameter of state~$s$, 
to collect a sample at state~$s$, which better captures the connectivity of the MDP. 
In fact the lower bound in Lem.\,\ref{lemma_LB_informal} shows that this cost of $D_s$ is \textit{unavoidable in the worst case}, and that \OSP is only constant and logarithmic terms off w.r.t.\,to the best sample complexity that can be achieved in the worst case. While an extra attempt of refinement would be to avoid being worst-case w.r.t.\,the starting state in the definition of $D_s$,\footnote{For instance, consider a simple deterministic chain with a requirement of one sample per state. If the~agent starts on the leftmost state, then a policy that keeps moving right has sample complexity $S$ without extra factor~$D$.} this seems particularly challenging as the randomness of the environment makes it hard to control and analyze the sequence of states traversed by the agent. Also note that existing bounds in SSP \citep{tarbouriech2019no, cohen2020near} are only worst-case and it remains an open question to derive finer (e.g., problem-dependent) bounds in SSP and how they could be leveraged in our case.

\textbf{Optimal solution.} %Our analysis reveals that 
\OSP %explicitly 
targets a \textit{greedy-optimal} strategy, which seeks to sequentially minimize each %expected
time to reach an under-sampled state. %in a sequential fashion. %For the sub-problem of collecting a desired sample, the SSP-diameter of the state is an adequate way to capture its difficulty. 
%\todojout{In the case of known dynamics $p$, \OSP corresponds to this greedy-optimal oracle.} %(see experiments of Table 1, App.\,K). 
Alternatively, one may wonder if %whether 
it is possible to design a learning algorithm that approaches the performance of the \textit{exact-optimal} solution, i.e.,~a~(non-stationary) policy explicitly minimizing the number of steps required to fulfill the sampling requirements.\footnote{Notice that as illustrated in the lower bound of Lem.\,\ref{lemma_LB_informal}, the exact-optimal and greedy-optimal have the same performance in the worst case.} Such strategy can be characterized as the optimal policy of an SSP problem for an MDP with state space augmented by the current sampling requirements and goal state corresponding to the case 
when all desired samples are collected. Even under known dynamics, the computational complexity of computing the optimal policy in this MDP (e.g., via value iteration) is exponential (scaling in~$B^S$). When the dynamics is unknown, it appears highly challenging to obtain any learning algorithm whose performance is comparable to the exact-optimal strategy for any finite sample requirement~$B$.%

\textbf{Beyond Communicating MDPs.} In App.\,\ref{app_beyond_communicating} we design an extension of \OSP to poorly or weakly communicating environments. In this setting, it is expected to assess online the \say{feasibility} of certain sampling requirements and discard them whenever associated to states that are \textit{too difficult} to reach~or unreachable. Given as input a \say{reachability} threshold $L$, we derive sample complexity guarantees for our variant of \OSP where the (possibly large or infinite) diameter $D$ is fittingly replaced by~$L$.

% !TEX root=main.tex

%\vspace{-0.02in}
\section{Applications of \OSPtitle}
\label{ex}
%\vspace{-0.02in}

\makeatletter
\newcommand*\mysize{%
   \@setfontsize\mysize{8.2}{9.0}%
}
\makeatother

An appealing feature of \OSP is that it can be integrated with techniques that compute the (fixed or adaptive) sampling requirements to readily obtain an online RL algorithm with theoretical guarantees. In this section we focus on three specific problems where in our decoupled approach the \SO-based algorithm is either trivial or can be directly extracted from existing literature, and its combination with the sample collection strategy of \OSP yields improved or novel guarantees. Other applications (e.g., PAC-policy learning, diameter estimation, bridging bandits and MDPs) are illustrated in App.\,\ref{app_other_applications}.

%\vspace{-0.03in}

\subsection{Sparse Reward Discovery (\TREASUREtitle)}
\label{subsection_treasure}

%\vspace{-0.02in}

A number of recent methods focus on the state-space coverage problem, where each state in the MDP needs to be reached as quickly as possible. This problem is often motivated by environments where a one-hot reward signal, called the \textit{treasure}, is hidden and can only be discovered by reaching a specific state and taking a specific action. Not only the environment but also the treasure state-action pair is unknown, and the agent does not receive any side information to guide its search (e.g., a measure of closeness to the treasure). Thus the agent must perform exhaustive exploration to find the treasure. 
\begin{definition}
	Given a confidence $\delta \in (0,1)$, the \textit{\TREASURE sample complexity} of a learning algorithm~$\mathfrak{A}$ is defined as $\mathcal{C}_{\TREASURE}(\mathfrak{A} , \delta)  := \min \big\{t > 0: \mathbb{P}\big( \forall (s,a) \in \SA , ~N_t(s,a) \geq 1 \big) \geq 1 - \delta \big\}$.
\end{definition}%
%
%\vspace{-0.01in}
%
In this case, a \SO-based algorithm would immediately solve the problem by collecting one sample from each state-action pair. As a result, we can directly apply \OSP for \TREASURE by simply setting $b(s,a) = 1$ for each $(s,a)$ and from Thm.\,\ref{theorem_upper_bound} with $B = SA$ we obtain the following guarantee.

%%\vspace{-0.1in}
\begin{lemma}\label{prop_ohrd}
\OSP with $b(s,a)=1$ verifies \, $\mathcal{C}_{\TREASURE}(\OSPmath, \delta) = \widetilde{O}\left( D^{3/2} S^2 A \right)$.
\end{lemma}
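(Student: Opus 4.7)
The plan is to observe that \TREASURE is exactly the instance of the sample collection problem in Def.\,\ref{def:sample.comeplxity} where the requirement function is the constant $b(s,a)\equiv 1$, and then to invoke Thm.\,\ref{theorem_upper_bound_general} with this choice.

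First, I would check that the two stopping events coincide: the condition $N_t(s,a)\geq 1$ for every $(s,a)\in\SA$ defining $\mathcal{C}_{\TREASURE}$ is literally the condition $N_t(s,a)\geq b_t(s,a)$ appearing in Def.\,\ref{def:sample.comeplxity} when $b_t\equiv 1$. In particular, Asm.\,\ref{asm_bounded_requirements} is satisfied with the trivial envelope $\overline{b}(s,a)=1$, and Asm.\,\ref{asm_communicating} is the standing assumption of Thm.\,\ref{theorem_upper_bound_general}. Hence $\mathcal{C}_{\TREASURE}(\OSPmath,\delta)=\mathcal{C}(\OSPmath,b,\delta)$.

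Second, I would plug the aggregates into Eq.\,\ref{sample_complexity_1}. With $\overline{b}(s,a)=1$ we have $\overline{B}=\sum_{s,a}\overline{b}(s,a)=SA$, so Thm.\,\ref{theorem_upper_bound_general} yields
\[
\mathcal{C}(\OSPmath,b,\delta) \;=\; \widetilde{O}\bigl(SA\cdot D + D^{3/2}S^2 A\bigr).
\]
Since $D\geq 1$ and $S\geq 1$ in any communicating MDP, we have $SAD \leq D^{3/2}S^2 A$, so the linear-in-$B$ term is absorbed into the constant term, giving the stated bound $\widetilde{O}\bigl(D^{3/2}S^2 A\bigr)$.

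There is essentially no obstacle here: the lemma is a direct specialization of the general result, and the only bookkeeping is to check the assumptions hold trivially for $b\equiv 1$ and that the $SAD$ term is dominated by $D^{3/2}S^2A$ in this regime. The entire technical content lives in Thm.\,\ref{theorem_upper_bound_general} itself.
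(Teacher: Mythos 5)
Your proposal is correct and matches the paper's argument: the paper likewise instantiates its sample-complexity theorem with $b(s,a)=1$, i.e. $B=SA$, and absorbs the resulting $SAD$ term into $D^{3/2}S^2A$ (valid since $D\geq 1$ and $S\geq 1$). The only cosmetic difference is that the paper cites the time- and action-independent corollary rather than the general theorem, which changes nothing here.
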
%
%
%\vspace{-0.02in}
We now compare this result to %a number of 
alternative approaches to the problem, showing that \OSP has state-of-the-art guarantee for \TREASURE (see App.\,\ref{app_alternative_approaches} for details).
\begin{itemize}[leftmargin=.1in,topsep=-2.5pt,itemsep=1pt,partopsep=0pt, parsep=0pt]

\item 
First, reward-free methods such as 
%\citep{jin2020reward} (both \ZERORMAX and \RFRLExplore) as well as \citep{zhang2020nearly, kaufmann2021adaptive, menard2020fast} 
\citep{jin2020reward, zhang2020nearly, kaufmann2021adaptive, menard2020fast} 
are designed for finite-horizon problems so their guarantees cannot be directly translated to sample complexity for the \TREASURE problem. Nonetheless, we draw inspiration from their algorithmic principles and analyze a \textit{reward-free} variant of \UCRLtwo \citep{jaksch2010near, improved_analysis_UCRL2B}. %(or rather of \UCRLtwoB \citep{improved_analysis_UCRL2B} which uses tighter Bernstein-based confidence intervals). 
Specifically we consider \ZEROONEUCRLtwo, which runs \UCRL by setting a reward of~$1$ to under-sampled states and~$0$ otherwise. 
However, we obtain a \TREASURE sample complexity for \ZEROONEUCRLtwo of $\wt{O}\left(\sum_{s \in \cS} D_s^3 S^2 A \right)$, which is always worse than the bound in Lem.\,\ref{prop_ohrd}.

\item Second, we can adapt the \MaxEnt approach \citep{hazan2019provably} to state-action coverage so that it targets a policy whose stationary state-action distribution $\lambda$ maximizes $H(\lambda) := - \sum_{s,a} \lambda(s,a) \log \lambda(s,a)$. While optimizing this entropy does not provably solve \TREASURE, it encourages us to take a \say{worst-case} approach w.r.t.\,the state-action visitations, and rather maximize $F(\lambda) := \min_{(s,a) \in \SA} \lambda(s,a)$. We show that the learning algorithm of \citep{cheung2019exploration} instantiated to maximize $F$ yields a \TREASURE sample complexity of at least $\Omega\left( \min\left\{ D^2 S^2 A / (\omega^{\star})^2, D^3 / (\omega^{\star})^3  \right\} \right)$ with $\omega^{\star} := \min_{\lambda} F(\lambda) \leq (SA)^{-1}$, which is significantly poorer than Lem.\,\ref{prop_ohrd}. In fact, in contrast to \MaxEnt-inspired methods that optimize for a single \textit{stationary} policy, \OSP realizes a non-stationary strategy that gradually collects the required samples by tackling successive learning~problems.

\end{itemize}

%\vspace{-0.02in}
\subsection{Model Estimation (\MODESTtitle)}
\label{subsection_model_estimation}

%\vspace{-0.03in}

We now study the problem of accurately estimating the unknown transition dynamics in a reward-free communicating environment. The objective was recently introduced in \citep{tarbouriech2020active} and we refer to it as the \textit{model-estimation problem}, or \MODEST for short. 

\begin{definition}\label{def_robust_modest}
    Given an accuracy level $\eta > 0$ and a confidence level $\delta \in (0,1)$, the \MODEST sample complexity of an online learning algorithm $\mathfrak{A}$ is defined as %respectively as
    \begin{align*}
        \mathcal{C}_{\MODEST}(\mathfrak{A}, \eta, \delta) := \min \big\{t > 0: \mathbb{P}\big( \forall (s,a) \in \mathcal{S} \times \mathcal{A}, \norm{\widehat{p}_{\mathfrak{A},t}(\cdot \vert s,a) - p(\cdot \vert s,a)}_{1} \leq \eta \big) \geq 1 - \delta \big\},
    \end{align*}
where $\widehat{p}_{\mathfrak{A},t}$ is the estimate (i.e., empirical average) of the transition dynamics $p$ after $t$ time steps.
\end{definition}%

%\vspace{-0.03in}
Unlike in \TREASURE, here the sampling requirements are not immediately prescribed by the problem. To define the \SO-based algorithm we first upper-bound the estimation error using an empirical Bernstein inequality and then invert it to derive the amount of samples~$b_t(s,a)$ needed to achieve the desired level of accuracy~$\eta$ (see App.\,\ref{app_modest}). Specifically, letting $\widehat{\sigma}^2_{t}(s'\vert s,a) := \wh p_t(s'\vert s,a) (1-\wh p_t(s'\vert s,a))$ be the estimated variance of the transition from $(s,a)$ to $s'$ after $t$ steps, we set

\makeatletter
\newcommand*\mysizeb{%
   \@setfontsize\mysizeb{9.5}{9.5}%
}
\makeatother
\vspace{-0.18in}
\begin{mysizeb}
\begin{align}\label{budget_modest}
    b_t(s,a) &:= \Big\lceil \frac{57 (\sum_{s'} \widehat{\sigma}_{t}(s'\vert s,a))^2 }{\eta^2}  \log^2\left(\frac{8 e (\sum_{s'} \widehat{\sigma}_{t}(s'\vert s,a))^2 \sqrt{2 S A} }{\sqrt{\delta}\eta}   \right) + \frac{24 S}{\eta} \log\left( \frac{24 S^2 A}{\delta \eta} \right) \Big\rceil.
\end{align}%
\end{mysizeb}%

%\vspace{-0.08in}
Since the estimated variance changes depending on the samples observed so far, the sampling requirements are \textit{adapted} over time. Given that $\widehat{\sigma}^2_{t}(s'\vert s,a) \leq 1/4$, $b_t(s,a)$ is always bounded so Thm.\,\ref{theorem_upper_bound} provides the following guarantee. %sample complexity bound.

\begin{lemma}\label{prop_arme}
%~%\vspace{-0.2in}
Let $\Gamma := \max_{s,a} \norm{p(\cdot \vert s,a)}_0 \leq S$ be the maximal support of $p(\cdot \vert s,a)$ over the state-action pairs $(s,a)$. Running \OSP with the sampling requirements in Eq.\,\ref{budget_modest} yields
\begin{align*}
    &\mathcal{C}_{\MODEST}(\OSPmath, \eta, \delta) = \widetilde{O}\Big( \displaystyle\frac{D \Gamma S A}{\eta^2} + \frac{D S^2 A}{\eta} +  D^{3/2} S^2 A \Big).
    \end{align*}%
%\end{small}%
\end{lemma}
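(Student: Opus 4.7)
The plan is to combine two steps: a concentration argument certifying that the adaptive requirement~(\ref{budget_modest}) is sufficient to obtain $\eta$-accuracy in $\ell_1$, and a uniform worst-case bound on $b_t(s,a)$ that enables invocation of Theorem~\ref{theorem_upper_bound_general}. On a good event of probability at least $1-\delta$, I will show that whenever $N_t(s,a) \geq b_t(s,a)$ for all $(s,a) \in \SA$, the empirical transition model satisfies $\|\wh p_t(\cdot | s,a) - p(\cdot | s,a)\|_1 \leq \eta$, so the \MODEST accuracy criterion of Definition~\ref{def_robust_modest} is met at the random time when \OSP halts.

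For the concentration step, I would apply a time-uniform empirical Bernstein inequality coordinate-wise to each $\wh p_t(s' | s,a)$, controlling $|\wh p_t(s' | s,a) - p(s' | s,a)|$ by $\wh\sigma_t(s' | s,a)/\sqrt{N_t(s,a)}$ plus a lower-order deterministic $O(1/N_t(s,a))$ term. Summing over $s'$ in the empirical support yields an $\ell_1$ deviation controlled by $(\sum_{s'} \wh\sigma_t(s' | s,a))/\sqrt{N_t(s,a)} + O(S/N_t(s,a))$, and inverting this relation against the target accuracy $\eta$ after a union bound over $\SA$ gives precisely the right-hand side of~(\ref{budget_modest}): the first summand matches the variance-dependent leading term and the $24 S/\eta$ summand absorbs the deterministic Bernstein bias and union-bound cost.

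For the second step I would bound $b_t(s,a)$ uniformly in $t$. Since the empirical distribution is supported on a subset of the true support of $p(\cdot | s,a)$, one has $\|\wh p_t(\cdot | s,a)\|_0 \leq \Gamma$, and Cauchy--Schwarz then gives
\[
\Big(\sum_{s'}\wh\sigma_t(s' | s,a)\Big)^2 \leq \Gamma \sum_{s'} \wh p_t(s' | s,a)(1-\wh p_t(s' | s,a)) \leq \Gamma,
\]
so that $b_t(s,a) \leq \overline b(s,a) = \wt O\big(\Gamma/\eta^2 + S/\eta\big)$ uniformly. Summing over $\SA$ yields $\overline B = \wt O(\Gamma SA/\eta^2 + S^2 A/\eta)$, and plugging into Eq.~(\ref{sample_complexity_1}) of Theorem~\ref{theorem_upper_bound_general} produces $\wt O\big(D \Gamma S A / \eta^2 + D S^2 A/\eta + D^{3/2} S^2 A\big)$, matching the claim.

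The main obstacle is the concentration step: the empirical Bernstein inequality must hold \emph{simultaneously} for every $(s,a)$ and at every random time at which \OSP might check its stopping condition, which requires a time-uniform (anytime) Bernstein bound together with a careful peeling/union argument. This is what pins down the precise constants and the doubly-logarithmic factor in~(\ref{budget_modest}). The sparsity-based inequality $(\sum_{s'} \wh\sigma_t(s'|s,a))^2 \leq \Gamma$ is then the key algebraic observation that replaces the naive $S$ factor by $\Gamma$ in the leading $1/\eta^2$ term of the final bound.
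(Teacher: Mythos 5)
Your proposal is correct and follows essentially the same route as the paper: a time-uniform empirical Bernstein bound inverted to justify the requirement in Eq.~\ref{budget_modest}, a Cauchy--Schwarz/support argument giving $(\sum_{s'}\wh\sigma_t(s'\vert s,a))^2 \le \Gamma$ so that $\overline b(s,a) = \wt O(\Gamma/\eta^2 + S/\eta)$, and then a plug-in of $\overline B$ into Eq.~\ref{sample_complexity_1} of Thm.~\ref{theorem_upper_bound_general}. The paper packages the inversion step as a standalone technical lemma and cites an existing bound $\sum_{s'}\wh\sigma_t(s'\vert s,a)\le\sqrt{\Gamma(s,a)-1}$, but these are the same arguments you sketch.
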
%
Lem.\,\ref{prop_arme} improves over the result of~\citep{tarbouriech2020active} in two important aspects. First, the % algorithm proposed by \citet{tarbouriech2020active} 
latter suffers from an inverse dependency on the stationary state-action distribution that optimizes a proxy objective function used in the derivation of their algorithm.
%polynomially on the inverse of the quantity $\min_{s,a} \lambda^{\star}(s,a) \leq (SA)^{-1}$, which not only may be very large but also requires prior knowledge on an \say{asymptotically optimal} stationary state-action distribution $\lambda^{\star}$. %Second, all the dependencies in $S$ and $A$ are better than in~\citet{tarbouriech2020active}. 
Second, while \citep{tarbouriech2020active} requires an ergodicity assumption, 
Lem.\,\ref{prop_arme} %does not require the ergodic assumption of~\citep{tarbouriech2020active}, in fact
is the first sample complexity result for \MODEST in the more general communicating setting. %of communicating MDPs.

%\vspace{-0.04in}

\subsection{Goal-Free \& Cost-Free Exploration in Communicating MDPs} 
\label{subsection_costfree_goalfree}

%\vspace{-0.02in}

We finally delve into the paradigm of \textit{reward-free exploration} introduced by~\citep{jin2020reward}: the objective of the agent is to collect enough information during the reward-free exploration phase, so that it can readily compute a near-optimal policy once \textit{any} reward function is provided. The problem has been analyzed in the \textit{finite-horizon} setting \citep[e.g.,][]{jin2020reward, menard2020fast, zhang2020nearly}. Here we study the more general and challenging setting of \textit{goal-conditioned} RL.\footnote{While an approach was proposed in \citep{tarbouriech2020improved}, it is restricted to considering only the \textit{incrementally} attainable goal states from a resettable reference state $s_0$.} We define the \textit{goal-free cost-free} objective as follows: after the exploration phase, the agent is expected to compute a near-optimal goal-conditioned policy for \textit{any} goal state and \textit{any} cost function (w.l.o.g.\,we consider a maximum possible cost $c_{\max}=1$). Recall that given a goal state $g$ and costs $c$, the (possibly unbounded) value function of a policy $\pi$ is
%\vspace{-0.22in}
\begin{align*}
    V^{\pi}(s \rightarrow g) := \mathbb{E}\bigg[ \sum_{t=1}^{\tau_{\pi}(s \rightarrow g)} c(s_t, \pi(s_t)) ~\big\vert~ s_1 = s \bigg].
\end{align*}

%\vspace{-0.1in}
Given a slack parameter $\theta \in [1, + \infty]$, we say that a policy $\wh \pi$ is $(\epsilon,\theta)$-optimal if\,\footnote{This reduces to standard $\epsilon$-optimality for $\theta = + \infty$. We only consider $\theta < + \infty$ in the case of minimum possible cost $c_{\min} = 0$ and it ensures that the algorithm targets proper policies (see App.\,\ref{app_extension_goalfree_costfree}).}
%\vspace{-0.02in}
\begin{align*}
    V^{\wh{\pi}}(s \rightarrow g) \leq \min_{\pi : \,\mathbb{E}\left[ \tau_{\pi}(s \rightarrow g) \right] \leq \theta D_{s, g}} V^{\pi}(s \rightarrow g) + \epsilon.
%\label{eq_planning_unsup}
\end{align*}
%
%\vspace{-0.02in}%
In this setting, constructing an efficient \SO-based algorithm is considerably more complex than \TREASURE and \MODEST. Relying on a sample complexity analysis for the fixed-goal SSP problem with a \textit{generative model} \citep{tarbou}, we define the (adaptive) number of samples needed in each state-action pair for our online objective. Although the number depends on the unknown diameter, we estimate $D$ using \OSP. The resulting sequence of sampling requirements is then fed online to \OSP. Combining the result of \citep{tarbou} and the properties of \OSP yields the following bound (see App.\,\ref{app_extension_goalfree_costfree}).

\newcommand{\whpi}{{\scalebox{0.92}{$\wh \pi$}}}

\begin{lemma}\label{theorem_goal_free_cost_free_main}
Consider any MDP satisfying Asm.\,\ref{asm_communicating} and the goal-free cost-free exploration problem with accuracy level $0 < \epsilon \leq 1$, confidence level $\delta \in (0,1)$, minimum cost $c_{\min} \in [0, 1]$, slack parameter $\theta \in [1, + \infty]$. We can instantiate \OSP so that its exploration phase (i.e., number of time steps) is bounded with probability at least $1-\delta$ by
%\vspace{-0.05in}
\begin{align*}
        \wt{O}\left( \frac{D^4 \Gamma S A}{\omega \epsilon^2} + \frac{D^3 S^2 A}{\omega \epsilon} + \frac{D^3 \Gamma S A}{\omega^2} \right),
%\label{eq_exploration_unsup}
\end{align*}%
%\vspace{-0.12in}%
where $\omega := \max\big\{ c_{\min}, \epsilon / (\theta D) \big\} > 0$ (thus, either \mbox{$c_{\min} = 0$} or $\theta = + \infty$, but not both simultaneously). Following the exploration phase, the algorithm can compute in the planning phase, for any goal state $g \in \mathcal{S}$ and any cost function $c$ in $[c_{\min}, 1]$, a policy $\whpi_{g,c}$ that is $(\epsilon,\theta)$-optimal.
\end{lemma}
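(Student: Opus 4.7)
The strategy is to decouple the objective into (i) a generative-model component that tells us how many samples per state-action pair suffice to compute $(\epsilon,\theta)$-optimal goal-conditioned policies, and (ii) \OSP as the sample-collection engine. Specifically, \citep{tarbou} analyses the fixed-goal SSP problem under a generative model and provides, for any goal $g \in \cS$ and any cost function $c \in [c_{\min},1]$, a per state-action sampling sufficient condition $n_{g,c}(s,a)$ such that, as soon as $N_t(s,a) \geq n_{g,c}(s,a)$ for every $(s,a)$, an $(\epsilon,\theta)$-optimal policy $\wh\pi_{g,c}$ can be computed offline. Inspection of their bound shows that it depends on $g$ and $c$ only through the structural quantities $D$, $\Gamma$, $S$, $A$, $\omega$, $\epsilon$, $\delta$, so one can take the supremum over all $(g,c)$ and obtain a \emph{uniform} requirement of the form $\bar n = \wt O\bigl( D^3 \Gamma /(\omega\epsilon^2) + D^2 S /(\omega\epsilon) + D^2 \Gamma /\omega^2 \bigr)$.

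The obstacle is that $\bar n$ depends on the unknown diameter, so I would proceed in two stages. First, run the diameter-estimation variant of \OSP (see App.\,\ref{app_other_applications}) to produce, in a number of steps negligible compared to the target bound, an estimate satisfying $D \leq \wh D = \Theta(D)$ with probability at least $1-\delta/2$. Second, pass to \OSP the (adaptive) requirement
\begin{align*}
b_t(s,a) := \bar n\bigl(\wh D,\epsilon,\omega,\delta/2,\Gamma\bigr),
\end{align*}
which stabilizes once $\wh D$ does and thereby satisfies Asm.\,\ref{asm_bounded_requirements}. Invoking Thm.\,\ref{theorem_upper_bound_general} with $\overline B = \bar n\cdot SA$ bounds the exploration phase by $\wt O(\bar n\, SAD + D^{3/2} S^2 A)$; substituting the expression of $\bar n$ and absorbing $D^{3/2} S^2 A$ into $D^3 S^2 A/(\omega\epsilon)$ (since $\omega\epsilon \leq 1 \leq D^{3/2}$) yields the claimed bound.

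For the planning phase, at query time for any $(g,c)$ one invokes the planning subroutine of \citep{tarbou} on the samples collected so far; by construction $N(s,a) \geq \bar n \geq n_{g,c}(s,a)$, which returns the desired $(\epsilon,\theta)$-optimal policy $\wh\pi_{g,c}$. A union bound over the failure events of diameter estimation, sample collection, and planning delivers the overall confidence $1-\delta$.

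The main difficulty lies in verifying the \emph{uniformity} of the generative-model sample requirement across all goals and all cost functions, with the claimed $\omega = \max\{c_{\min},\epsilon/(\theta D)\}$ dependency. The delicate case is $c_{\min}=0$: the slack parameter $\theta < \infty$ restricts attention to policies whose expected hitting time to $g$ is at most $\theta D_{s,g}$, which effectively caps the relevant value functions by $\theta D$ and forces the effective per-step cost scale to be at least $\epsilon/(\theta D)$, ensuring $\omega > 0$. Ensuring that the generative-model analysis of \citep{tarbou} extends (or can be re-cast) to handle an \emph{any-goal any-cost} query from a single round of exploration---rather than one exploration round per $(g,c)$---is the key technical step, and it is precisely what drives the $1/\omega$ and $1/\omega^2$ factors in the final bound.
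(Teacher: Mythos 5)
Your proposal is correct and follows essentially the same route as the paper: it invokes the generative-model sample complexity of \citep{tarbou} uniformly over all goals and costs to define the per-pair requirement $\phi(D,\omega)$, handles the unknown diameter via the \OSP-based estimation subroutine of App.\,\ref{app_diameter_estimation} (whose cost is subsumed), and converts the requirement into an online bound via Thm.\,\ref{theorem_upper_bound_general}. The treatment of the $c_{\min}=0$ case through the $\epsilon/(\theta D)$ cost perturbation and the resulting $\omega$ also matches the paper's argument.
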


%\vspace{-0.02in}
Lem.\,\ref{theorem_goal_free_cost_free_main} establishes the first sample complexity guarantee for general goal-free, cost-free exploration. While the objective is demanding and the upper bound on the length of the exploration phase can be large, the main purpose of this result is to showcase how \OSP can be readily instantiated to tackle a challenging exploration problem for which no existing solution can be easily leveraged. Comparing our analysis to the finite-horizon objective of \citep{jin2020reward} reveals two interesting properties:
\begin{itemize}[leftmargin=.1in,topsep=-3pt,itemsep=1pt,partopsep=0pt, parsep=0pt]
\item \textbf{\textit{The goal-free aspect:}} moving from finite-horizon to goal-conditioned renders \textit{unavoidable} both the communicating requirement (Asm.\,\ref{asm_communicating}) and the bound's dependency on the unknown diameter $D$ (which partly captures the role of the known horizon $H$ in the bound of \citep{jin2020reward}).
\item \textbf{\textit{The cost-free aspect:}} in contrast to finite-horizon, the value of $c_{\min}$ has an important impact on the type of performance guarantees we can obtain; in particular our analysis distinguishes between positive and non-negative costs (as also done in existing SSP analysis \citep{bertsekas2013stochastic, tarbouriech2019no, cohen2020near}).
\end{itemize}

\begin{figure}[t]
	\centering
	\begin{minipage}{0.13\linewidth}
		%\centering
		\includegraphics[width=0.99\linewidth]{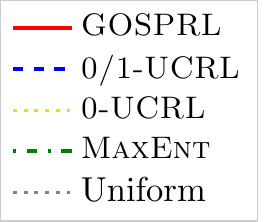}
	\end{minipage}\hfill
	\begin{minipage}{0.29\linewidth}
		\centering
		\hspace*{0.1cm}\includegraphics[width=1.03\linewidth,trim={0 0 0.1cm 0},clip]{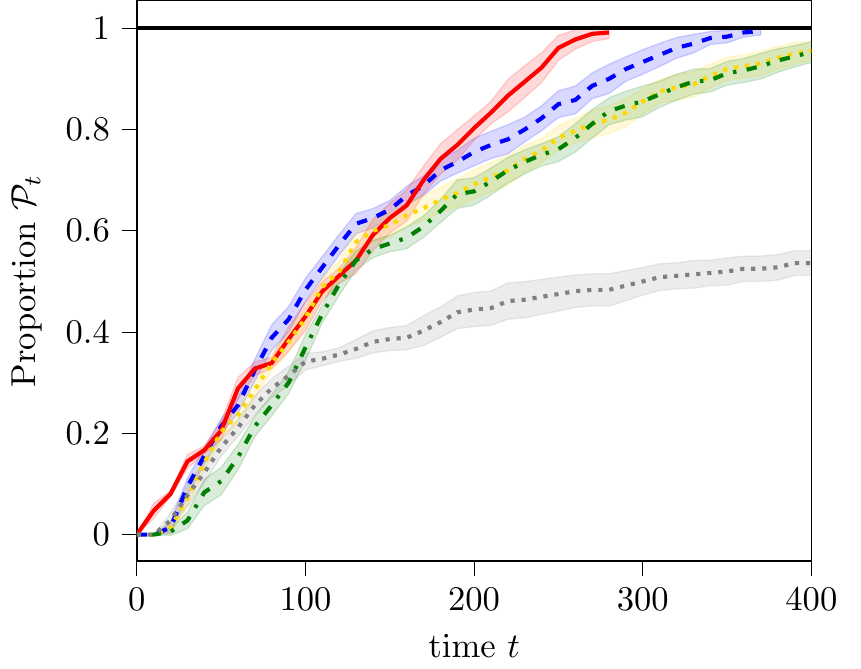}
	\end{minipage}\hfill
	\begin{minipage}{0.29\linewidth}
		\centering
		\hspace*{0.3cm}\includegraphics[width=0.96\linewidth,trim={0.68cm 0 0.1cm 0},clip]{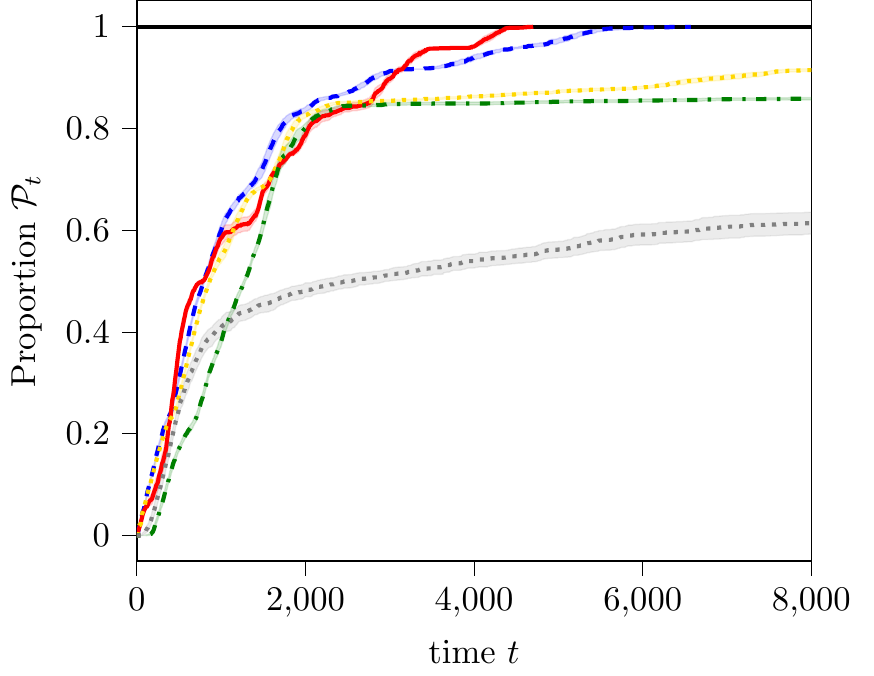}
	\end{minipage}\hfill
	\begin{minipage}{0.29\linewidth}
		\centering
		\hspace*{0.1cm}\includegraphics[width=0.96\linewidth,trim={0.68cm 0 0.1cm 0},clip]{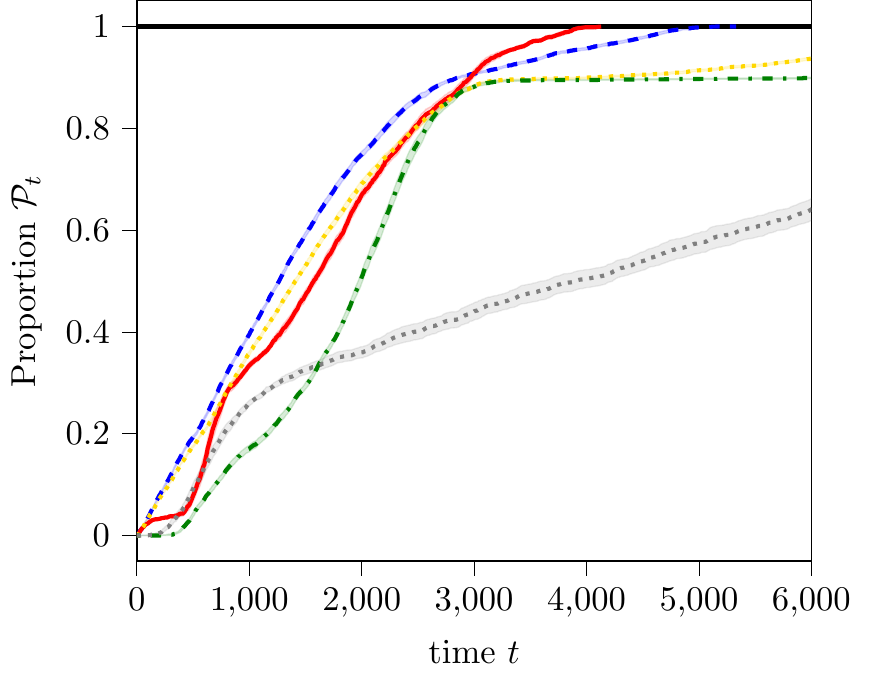}
	\end{minipage}
	\vspace{-0.05in}
	\caption{\small\TREASURETEN problem (i.e., with $b(s,a)=10$): Proportion $\mathcal{P}_t$ of states meeting the requirements at time $t$, averaged over $30$~runs. By definition of the sample complexity, the metric of interest is \textit{not} the rate of increase of $\mathcal{P}_t$ over time but only the time needed to reach the line of success $\mathcal{P}_t=1$. \textit{Left:} 6-state RiverSwim, \textit{Center:} 24-state corridor gridworld, \textit{Right:} 43-state $4$-room gridworld (see App.\,\ref{app_exp} for details on the domains).} 
	\label{fig:proportion_treasure}
%	\vspace{-0.12in}
\end{figure}

\begin{figure*}[t!]
	\centering
    \vspace{0.06in}
	\begin{minipage}{0.7\linewidth}
	\begin{minipage}{0.49\linewidth}
		\center
		\vspace*{-0.25cm}\includegraphics[width=0.99\linewidth]{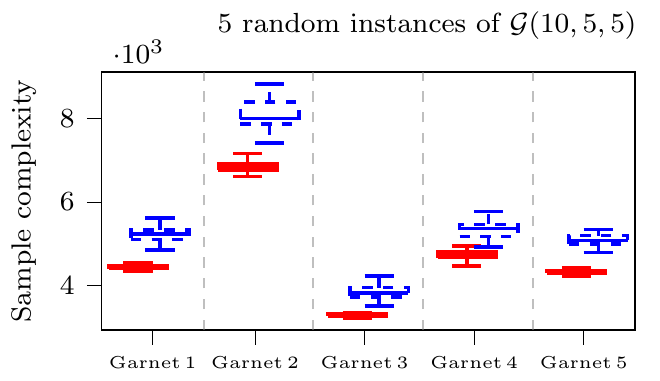}
	\end{minipage}\hfill
	\begin{minipage}{0.49\linewidth}
		\center
		\vspace*{-0.25cm}\includegraphics[width=0.97\linewidth,trim={0.5cm 0 0 0},clip]{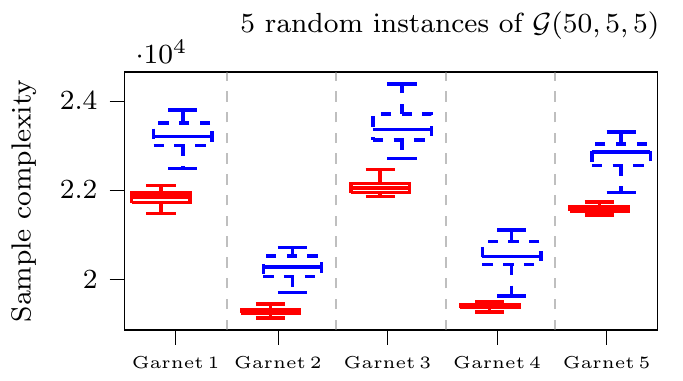}
	\end{minipage}
		\vspace{-0.03in}
	\captionof{figure}{\small Sample complexity boxplots of \OSPsmall (in \textcolor{lust}{red}) and \mbox{\ZEROONEUCRLtwosmall} (in \textcolor{blue}{blue}). Each column represents $30$ runs on a randomly generated Garnet $\mathcal{G}(S, A=5, \beta=5)$ with randomly generated state-action sampling requirements $b: \cS \times \cA \rightarrow \mathcal{U}(0, 100)$. \textit{Left:} $S=10$, \textit{Right:} $S=50$.}
	    \label{fig:boxplots}
	%%%%%%%%%%%%%
	\vspace{0.1in}
	\begin{minipage}{0.33\linewidth}
		\flushleft
		\hspace*{0.1cm}\includegraphics[width=1.04\linewidth,trim={0 0 0.1cm 0},clip]{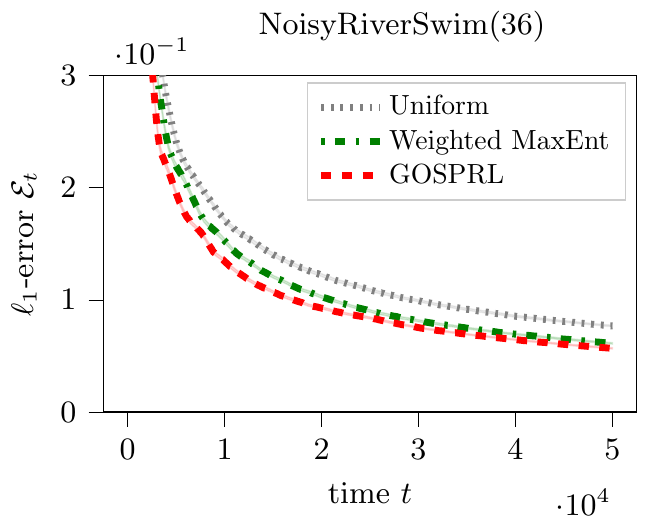}
	\end{minipage}\hfill
	\begin{minipage}{0.33\linewidth}
		\centering
		\hspace*{0.4cm}\includegraphics[width=0.97\linewidth,trim={0.5cm 0 0.1cm 0},clip]{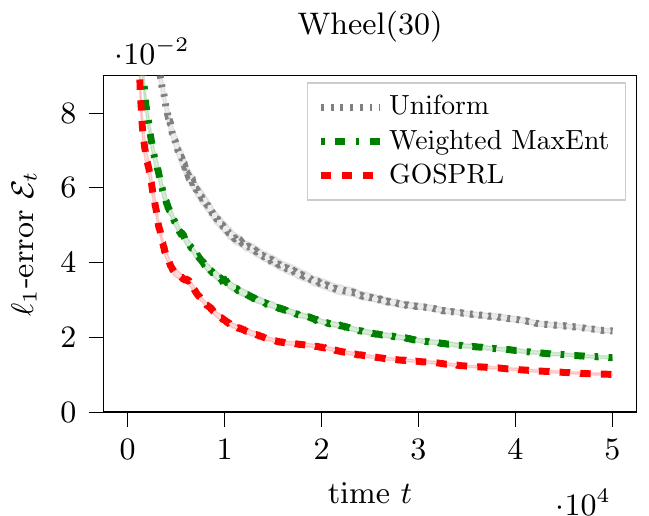}
	\end{minipage}\hfill
	\begin{minipage}{0.33\linewidth}
		\flushright
		\hspace*{0.4cm}\includegraphics[width=0.97\linewidth,trim={0.5cm 0 0.1cm 0},clip]{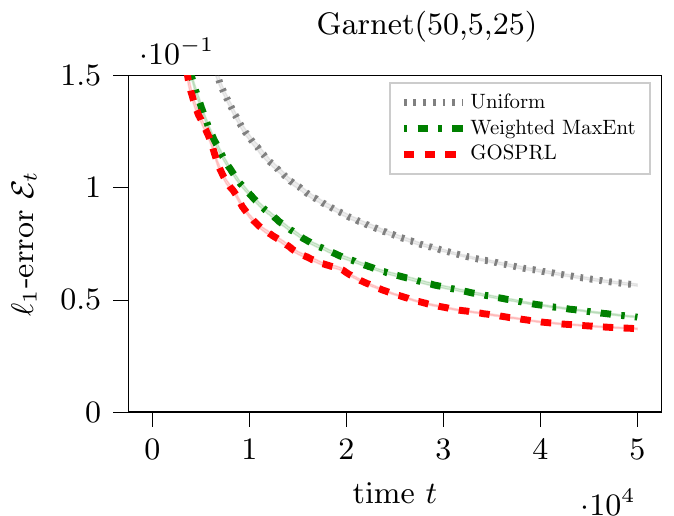}
	\end{minipage}
	\vspace{-0.03in}
	\captionof{figure}{\small \mbox{\MODESTsmall} problem: $\ell_1$-error $\mathcal{E}_t := (SA)^{-1} \cdot \sum_{s,a} \norm{ \wh{p}_t(\cdot \vert s,a) - p(\cdot \vert s,a)}_1$, averaged over 30 runs. \textit{Left:} NoisyRiverSwim(36), \textit{Center:} Wheel(30), \textit{Right:} Randomly generated Garnet $\mathcal{G}(50,5,25)$.}
	    \label{fig:modest}
	 %%%%%%%
	\end{minipage}%\hfill%
	\hfill\vline\hfill
		\begin{minipage}{0.25\linewidth}
		\flushright
		
    \begin{tikzpicture}
    \flushright
    \hspace*{-0.2cm}
                        \begin{scope}[scale=0.62]
	\tikzset{VertexStyle/.style = {draw, 
									shape          = circle,
	                                text           = black,
	                                inner sep      = 2pt,
	                                outer sep      = 0pt,
	                                minimum size   = %20
	                                12 pt}}
	\tikzset{VertexStyle2/.style = {shape          = circle,
	                                text           = black,
	                                inner sep      = 2pt,
	                                outer sep      = 0pt,
	                                minimum size   = 12 pt}}
	\tikzset{Action/.style = {draw, 
                					shape          = circle,
	                                text           = black,
	                                fill           = black,
	                                inner sep      = 2pt,
	                                outer sep      = 0pt}}

    % Node of the MDP
	\node[VertexStyle](s0) at (0,0) {$ s_{0} $};
	\node[Action](a0s0) at (.7,.7){};
	\node[VertexStyle](s1) at (2.5,0){$s_1$};
	\node[Action](a0s1) at (1.5,0.){};
	\node[VertexStyle](s2) at (5,0){$s_2$};
	\node[Action](a0s2) at (4.3,-0.7){};
	\node[Action](a1s2) at (5,1){};
    
	% Edges of the MDP
	\draw[->, >=latex, double, color=red](s0) to node[midway, right]{{\small $a_0$}} (a0s0);
    \draw[->, >=latex](a0s0) to [out=30,looseness=0.8] node[midway, xshift=1em]{$\nu$} (s1);   
	\draw[->, >=latex](a0s0) to [out=30,in=120,looseness=0.8] node[above]{$1-\nu$} (s2);

	\draw[->, >=latex, double, color=red](s1) to node[above,xshift=0.2em]{{\small $a_0$}} (a0s1);
	\draw[->, >=latex](a0s1) to (s0);
    
	\draw[->, >=latex, double, color=red](s2) to node[midway, left]{{\small $a_0$}} (a0s2);
    \draw[->, >=latex](a0s2) to [out=210, in=330, looseness=0.8] node[midway,xshift=-1em]{$\nu$} (s1);   
	\draw[->, >=latex](a0s2) to [out=210, in=300, looseness=0.8] node[above]{$1-\nu$} (s0);
	\draw[->, >=latex, double, color=red](s2) to node[midway, right]{{\small $a_1$}} (a1s2);
	\draw[->, >=latex](a1s2) to [out=30, in=30, looseness=1.2] (s2);

    % Define reward
    %\node [above, yshift=0.2em] at (a0s0) {\scriptsize $r=0$};
    %\node [below, xshift=-0.3em] at (a0s1) {\scriptsize $r \sim Be\left(\frac{1}{3}\right)$};
    %\node [below, yshift=-0.2em, xshift=1em] at (a0s2) {\scriptsize $r \sim Be\left(\frac{2}{3}\right)$};
    %\node [above] at (a1s2) {\scriptsize $r \sim Be\left(\frac{2}{3}\right)$};
    \end{scope}
    
\end{tikzpicture}
\includegraphics[width=0.99\linewidth]{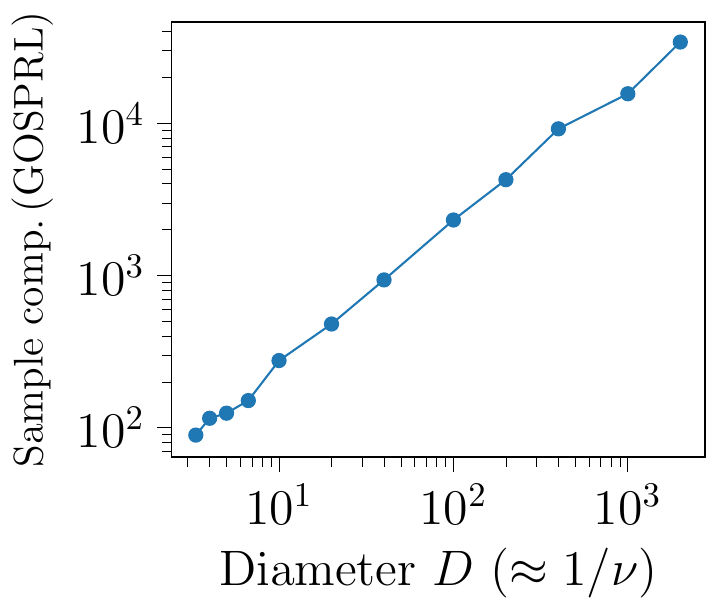}
\captionof{figure}{\small Simple three-state reward-free domain \citep{fruit2018efficient} and \TREASURETEN sample complexity of \OSP (averaged over 30 runs) as a function of the diameter $D \approx 1/\nu$.}
\label{toy_diameter}
		\end{minipage}
\vspace{-0.11in}
\end{figure*}

%\newpage~
%\newpage
%\vspace{-0.03in}

\section{Experiments}
\label{section_experiments}

%\vspace{-0.03in}

In this section we report a preliminary numerical validation of our theoretical findings. While \OSP can be integrated in many different contexts, here we focus on the problems where our theory suggests that \OSP performs better than state-of-the-art online learning methods.

\paragraph{\TREASURE-type problem.} We consider a \TREASURE-type problem (Sect.\,\ref{subsection_treasure}), where for all $(s,a)$ we set $b(s,a) = 10$ instead of $1$ (we call it the \TREASURETEN problem).\footnote{Since \OSPfootnote and our baselines are all based on upper confidence bounds, they tend to display similar behaviors in the initial phases of learning, since the estimates when $N(s,a) = 0$ are similar. As the number of samples required in each state-action increases, the difference between the algorithms' design starts making a real difference in the behavior and eventually their performance. This is why we study here \TREASURETENfootnote instead of the \TREASUREONEfootnote problem for which empirical performance is comparable between learning algorithms.} We begin by showing in Fig.\,\ref{toy_diameter} that it is easy to construct a worst-case problem where the sample complexity scales linearly with the diameter, which is consistent with the theoretical discussion in Sect.\,\ref{sect_preliminaries} and \ref{so}. 

%\vspace{-0.02in}

We compare to two heuristics based on \UCRLtwoB~\cite{jaksch2010near,improved_analysis_UCRL2B}: \mbox{\ZEROUCRLtwo}, where the reward used in computing the optimistic policy is set proportional to $([N(s,a) - b(s,a)]^+)^{-1/2}$, and \mbox{\ZEROONEUCRLtwo} with reward $1$ for undersampled state-action pairs and $0$ otherwise. We also compare with the \MaxEnt algorithm~\cite{cheung2019exploration} that maximizes entropy over the state-action space, and with a uniformly random baseline policy. We test on the RiverSwim domain~\cite{strehl2008analysis} and various gridworlds (see App.\,\ref{app_exp} for details and more results). Fig.\,\ref{fig:proportion_treasure} reports the proportion $\mathcal{P}_t$ of states that satisfy the sampling requirements at time $t$. Our metric of interest is the time needed to collect all required samples, and we see that \OSP reaches the $\mathcal{P}_t=1$ line of success consistently, and faster than \ZEROONEUCRLtwo, while the other heuristics struggle. The steady increase of $\mathcal{P}_t$ illustrates \OSP's design to progressively meet the sampling requirements, and not exhaust them state after state.

\paragraph{Random MDPs and sampling requirements.} To study the generality of \OSP to collect arbitrary sought-after samples, we further compare \OSP with \ZEROONEUCRLtwo which is the best heuristic from the previous experiment. We test on a variety of randomly generated \textit{configurations}, that we define as follows: each configuration corresponds to \textbf{i)} a randomly generated Garnet environment $\mathcal{G}(S,A,\beta)$ (with $S$ states, $A$ actions and branching factor~$\beta$, see \citep{bhatnagar2009natural}), and \textbf{ii)} randomly generated requirements $b(s,a) \in \mathcal{U}(0, \overline{U})$, where the maximum budget is set to $\overline{U}=100$ to have a wide range of possible requirements across each environment. The boxplots in Fig.\,\ref{fig:boxplots} provide aggregated statistics on the sample complexity for different configurations. We observe that \OSP consistently meets the sampling requirements faster than \ZEROONEUCRLtwo, as well as suffers from lower variance across runs.

%\vspace{-0.03in}

\paragraph{\MODEST problem.} Finally, we empirically evaluate \OSP for the \MODEST problem (Sect.\,\ref{subsection_model_estimation}). We compare to the fully online \WeightedMaxEnt heuristic, which weighs the state-action entropy components with an optimistic estimate of the next-state transition variance and was shown in~\citep{tarbouriech2020active} to perform empirically better than algorithms with theoretical guarantees. We test on the two environments (NoisyRiverSwim and Wheel) proposed in~\citep{tarbouriech2020active} for their high level of stochasticity, as well as on a randomly generated Garnet. To facilitate the comparison, we consider a \OSP-for-\MODEST algorithm where the sampling requirements are computed using a decreasing error $\eta$ (see App.\,\ref{app_exp} for details). We observe in Fig.\,\ref{fig:modest} that \OSP outperforms the \WeightedMaxEnt heuristic.

\section{Conclusion} \label{sec_conclusion}

In this paper, we introduced the online learning problem of simulating a sampling oracle (Sect.\,\ref{sect_preliminaries}) and derived the algorithm \OSP with its sample complexity guarantee (Sect.\,\ref{so}). We then illustrated how it can be used to tackle in a unifying fashion a variety of applications without having to design a specific online algorithm for each, while at the same time obtaining improved or novel sample complexity guarantees (Sect.\,\ref{ex}). Going forward, we believe that \OSP can be used as a competitive off-the-shelf baseline when a new application is introduced.

An exciting direction of future investigation is to %build on this work to 
extend the general sample collection problem and its various applications beyond the tabular setting. Handling a continuous state space or linear function approximation requires redefining the notion of reaching a specific state (e.g., via adequate discretization or by considering requirements based on the covariance matrix). Studying the SSP problem beyond tabular may provide insights, as recently initiated in \citep{vial2021regret} in linear function approximation under the assumption that all policies are proper. 
On the more algorithmic side, \OSP hinges on knowing the sampling requirement function $b_t$ and deriving a shortest-path policy $\widetilde{\pi}$. Interestingly, we can identify algorithmic counterparts to both modules in deep RL. The computation of $\widetilde{\pi}$ can be entrusted to a goal-conditioned network (using e.g.,~\citep{andrychowicz2017hindsight}), while the specification of $b_t$ can be related to goal-sampling selection mechanisms that elect hard-to-reach~\citep{florensa2018automatic} or rare~\citep{pong2020skew} states as goals.
%}

\vspace{0.3in}
%\newpage

% In the unusual situation where you want a paper to appear in the
% references without citing it in the main text, use \nocite
%\nocite{langley00}

%\bibliographystyle{plainnat}
\bibliographystyle{abbrv}
\bibliography{bibliography.bib}

%\newpage

%\vspace{0.2in}

%\input{checklist}

\newpage

\appendix

\captionsetup[figure]{labelfont=normalsize,font=normalsize}

\part{Appendix}

\parttoc

% !TEX root=main.tex

\section{Efficient Computation of Optimistic SSP Policy}
\label{app_value_iteration_SSP}

In this section, we recall how to compute an optimistic stochastic shortest path (SSP) policy using an extended value iteration (EVI) scheme tailored to SSP, as explained in~\citep{tarbouriech2019no}. Here we leverage a Bernstein-based construction of confidence intervals, as done by e.g.,~\citep{improved_analysis_UCRL2B, cohen2020near}. For details on the SSP formulation, we refer to e.g.,~\citep[][Sect.\,3]{bertsekas1995dynamic}.

Consider as input an SSP-MDP instance $M^{\dagger} := \langle \mathcal{S}^{\dagger}, \mathcal{A}, c, p, s^{\dagger} \rangle$, with goal $s^{\dagger}$, non-goal states $\cS^{\dagger} = \cS \setminus \{ s^{\dagger} \}$, actions~$\cA$, unknown dynamics $p$, and known cost function with costs in $[c_{\min}, 1]$ where $c_{\min} > 0$. We assume that there exists at least one proper policy (i.e., that reaches the goal $s^{\dagger}$ with probability one when starting from any state in $\cS^{\dagger}$). Note that in particular such condition is verified under Asm.\,\ref{asm_communicating}. We denote by $N(s,a)$ the current number of samples available at the state-action pair $(s,a)$ and set $N^{+}(s,a) := \max \{1,N(s,a) \} $. We also denote by $\wh{p}$ the current empirical average of transitions: $\wh{p}(s'|s,a) = N(s,a,s') / N(s,a)$.

The algorithm first computes a set of plausible SSP-MDPs defined as $$\mathcal{M}^{\dagger} := \{ \langle \mathcal{S}^{\dagger}, \mathcal{A}, c, \widetilde{p}, s^{\dagger} \rangle ~\vert ~ \widetilde{p}(s^{\dagger}|s^{\dagger},a) =~1,~ \widetilde{p}(s'|s,a) \in \mathcal{B}(s,a,s'),~ \sum_{s'} \wt{p}(s' \vert s,a) = 1\},$$ where for any $(s,a) \in \cS^{\dagger}\times \cA$, $\mathcal{B}(s,a,s')$ is a high-probability confidence set on the dynamics of the true SSP-MDP $M^{\dagger}$. Specifically, we define the compact sets $\mathcal{B}(s,a,s') := [ \wh{p}(s' \vert s,a) - \beta(s,a,s'), \wh{p}(s' \vert s,a) + \beta(s,a,s')] \cap [0, 1]$, where
\begin{align*}
    \beta(s,a,s') := 2 \sqrt{\frac{\wh{\sigma}^2(s' \vert s,a)}{N^+(s,a)} \log\left(\frac{2SAN^+(s,a)}{\delta}\right)} + \frac{6 \log\left(\frac{2SAN^+(s,a)}{\delta}\right)}{N^+(s,a)},
\end{align*}
where $\wh{\sigma}^2(s' \vert s,a) := \wh{p}(s' \vert s,a) (1 - \wh{p}(s' \vert s,a))$ is the variance of the empirical transition $\wh{p}(s' \vert s,a)$. Importantly, the choice of $\beta(s,a,s')$ guarantees that $M^{\dagger} \in \mathcal{M}^{\dagger}$ with high probability. Indeed, let us now spell out the high-probability event. Denote by $\mathcal{E}$ the event under which for any time step $t \geq 1$ and for any state-action pair $(s,a) \in \cS \times \cA$ and next state $s' \in \cS$, it holds that
\begin{align}\label{eq_bernstein}
    \abs{\widehat{p}_{t}(s' \vert s,a) - p(s' \vert s,a)} \leq \beta_t(s,a,s').
\end{align}
Given the way the confidence intervals are constructed using the empirical Bernstein inequality \citep[see e.g.,][]{improved_analysis_UCRL2B, cohen2020near}, we have $\mathbb{P}(\mathcal{E}) \geq 1 - \delta$. Throughout the remainder of the analysis, we will assume that the event $\mathcal{E}$ holds.

Once $\mathcal{M}^{\dagger}$ has been computed, the algorithm applies an extended value iteration (\EVI) scheme to compute a policy with lowest optimistic value. Formally, it defines the extended optimal Bellman operator $\widetilde{\mathcal{L}}$ such that for any vector $\wt{v} \in \mathbb{R}^{S^{\dagger}}$ and non-goal state $s \in \cS^{\dagger}$,
\begin{align*}
    \wt{\mathcal{L}}\wt{v}(s) := \min_{a \in \cA} \Big\{ c(s,a) + \min_{\widetilde{p}\in \mathcal{B}(s,a)} \sum_{s' \in  \cS^{\dagger}} \wt{p}(s' \vert s,a) \wt{v}(s') \Big\}.
\end{align*}
We consider an initial vector $\wt{v}_0 := 0$ and set iteratively $\wt{v}_{i+1} := \wt{\mathcal{L}} \wt{v}_{i}$. For a predefined \VI precision $\gammaVI > 0$, the stopping condition is reached for the first iteration $j$ such that $\norm{\wt{v}_{j+1} - \wt{v}_{j} }_{\infty} \leq \gammaVI$. The policy $\wt{\pi}$ is then selected to be the optimistic greedy policy w.r.t.\,the vector $\wt{v}_j$. While $\wt{v}_j$ is \textit{not} the value function of $\wt{\pi}$ in the optimistic model $\wt{p}$, which we denote by $\wt{V}^{\wt{\pi}}$, both quantities can be related according to the following lemma, which is a simple adaptation of~\citep[][Lem.~4 \& App.\,E]{tarbouriech2019no}. We denote by ${V}^\star$ (resp.\,$\wt{V}^\star$) the optimal value function in the true (resp.\,optimistic) SSP instance.

\begin{lemma}\label{lemma_app_value_iteration_SSP}
Under the event $\mathcal{E}$, the following component-wise inequalities hold: \textbf{1)} $\wt{v}_j \leq V^{\star}$, ~ \textbf{2)} $\wt{v}_j \leq \wt{V}^{\star} \leq \wt{V}^{\wt{\pi}}$, ~ \textbf{3)} If the \VI precision level verifies $\gammaVI \leq \frac{c_{\min}}{2}$, then $\wt{V}^{\wt{\pi}} \leq \left(1 + \frac{2 \gammaVI}{c_{\min}}\right) \wt{v}_j$.
\end{lemma}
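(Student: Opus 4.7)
All three claims follow from standard extended value iteration arguments for SSP, adapted from~\citep{tarbouriech2019no}. Let $\mathcal{L}$ denote the optimal Bellman operator of the true SSP instance $M^{\dagger}$, and observe that both $\mathcal{L}$ and $\widetilde{\mathcal{L}}$ are monotone on non-negative vectors. For part 1, use that under $\mathcal{E}$ the true transition $p(\cdot \vert s,a)$ lies in $\mathcal{B}(s,a)$, so the minimization inside $\widetilde{\mathcal{L}}$ only lowers each one-step Bellman backup: $\widetilde{\mathcal{L}} v \leq \mathcal{L} v$ pointwise for every $v$. Induction from $\widetilde{v}_0 = 0 \leq V^{\star}$ (costs are non-negative) then yields $\widetilde{v}_{i+1} = \widetilde{\mathcal{L}} \widetilde{v}_i \leq \mathcal{L} \widetilde{v}_i \leq \mathcal{L} V^{\star} = V^{\star}$ for all $i \geq 0$, proving part 1.

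For part 2, first note that $\widetilde{v}_1 = \widetilde{\mathcal{L}} \mathbf{0} \geq \mathbf{0} = \widetilde{v}_0$, so monotonicity of $\widetilde{\mathcal{L}}$ implies that the sequence $(\widetilde{v}_i)$ is non-decreasing. Asm.~\ref{asm_communicating} transfers the existence of a proper policy to the optimistic model, so by classical SSP theory~\citep{bertsekas1991analysis} $\widetilde{\mathcal{L}}$ admits a unique finite fixed point $\widetilde{V}^{\star}$ toward which the monotone sequence converges, which gives $\widetilde{v}_j \leq \widetilde{V}^{\star}$. The inequality $\widetilde{V}^{\star} \leq \widetilde{V}^{\widetilde{\pi}}$ is immediate by definition of $\widetilde{V}^{\star}$ as the minimum value over all policies in the optimistic SSP.

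Part 3 is the main technical step. Let $\widetilde{\mathcal{L}}^{\widetilde{\pi}}$ be the Bellman operator of $\widetilde{\pi}$ in the specific optimistic model attaining the minimization in $\widetilde{\mathcal{L}} \widetilde{v}_j$; by construction of the greedy policy, $\widetilde{\mathcal{L}}^{\widetilde{\pi}} \widetilde{v}_j = \widetilde{\mathcal{L}} \widetilde{v}_j = \widetilde{v}_{j+1}$, and the stopping criterion yields $\widetilde{\mathcal{L}}^{\widetilde{\pi}} \widetilde{v}_j - \widetilde{v}_j \leq \gammaVI \mathbf{1}$ component-wise. Writing $\widetilde{\mathcal{L}}^{\widetilde{\pi}} v = c^{\widetilde{\pi}} + \widetilde{P}^{\widetilde{\pi}} v$ and iterating the affine operator gives $(\widetilde{\mathcal{L}}^{\widetilde{\pi}})^{n} \widetilde{v}_j - \widetilde{v}_j \leq \gammaVI \sum_{k=0}^{n-1} (\widetilde{P}^{\widetilde{\pi}})^{k} \mathbf{1}$; once $\widetilde{\pi}$ is shown to be proper in the optimistic model (which follows from finiteness of $\widetilde{V}^{\star}$ combined with the Bellman equation), taking $n \to \infty$ produces $\widetilde{V}^{\widetilde{\pi}} \leq \widetilde{v}_j + \gammaVI \cdot \widetilde{T}^{\widetilde{\pi}}$, where $\widetilde{T}^{\widetilde{\pi}}$ is the expected hitting time of $\widetilde{\pi}$ to the goal in the optimistic model. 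Since costs are at least $c_{\min}$, $\widetilde{V}^{\widetilde{\pi}} \geq c_{\min} \widetilde{T}^{\widetilde{\pi}}$, so $\widetilde{T}^{\widetilde{\pi}} \leq \widetilde{V}^{\widetilde{\pi}}/c_{\min}$, and the previous display rearranges into $(1 - \gammaVI/c_{\min}) \widetilde{V}^{\widetilde{\pi}} \leq \widetilde{v}_j$. Under $\gammaVI \leq c_{\min}/2$, the elementary bound $1/(1-x) \leq 1 + 2x$ for $x \in [0, 1/2]$ closes the argument. The main obstacle is precisely this paragraph: justifying properness of $\widetilde{\pi}$ so the affine iteration converges, and then combining the iterated bound with the $c_{\min}$-based lower inequality to promote an additive error into a multiplicative one. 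The remaining pieces are routine monotonicity-and-induction arguments once the optimism inequality $\widetilde{\mathcal{L}} v \leq \mathcal{L} v$ is in place.
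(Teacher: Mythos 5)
Your proof follows essentially the same route as the paper, which itself only cites \citep[Lem.~4 \& App.~E]{tarbouriech2019no} for this lemma: optimism of $\wt{\mathcal{L}}$ under $\mathcal{E}$ plus monotone induction for parts 1--2, and iteration of the affine operator $\wt{\mathcal{L}}^{\wt\pi}$ combined with the $c_{\min}$ lower bound on costs for part 3. The only soft spot is your justification of properness of $\wt\pi$ in the optimistic model: finiteness of $\wt V^{\star}$ does not by itself imply that the policy greedy w.r.t.\ the \emph{approximate} vector $\wt v_j$ is proper. The clean argument uses the stopping criterion itself: $\wt{\mathcal{L}}^{\wt\pi}\wt v_j \leq \wt v_j + \gammaVI\mathbf{1}$ gives $\wt P^{\wt\pi}\wt v_j \leq \wt v_j - (c_{\min}-\gammaVI)\mathbf{1} \leq \wt v_j - \tfrac{c_{\min}}{2}\mathbf{1}$ on non-goal states, a drift condition on the nonnegative function $\wt v_j$ that forces absorption at the goal with expected hitting time at most $2\norm{\wt v_j}_\infty/c_{\min}$; this is where the hypothesis $\gammaVI \leq c_{\min}/2$ is actually needed for properness, not just for the final algebraic rearrangement. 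With that substitution your argument is complete.
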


Note that for the purposes of \OSP (Alg.\,\ref{algo:SO}), the \VI precision  $\gammaVI$ can for example be selected as in \citep{tarbouriech2019no} equal to $1/(2t_k)$ with $t_k$ the current time step, which only translates in a negligible, lower-order error in the sample complexity result of Thm.\,\ref{theorem_upper_bound}.

%%%%%%%%%%%%%%%%%%%%%%%%%%%%%%%%%%%%%%%%%%%%%%%%%%%%%%%%%%%%%%%%%%%%%%%%%%%%%%%%%%%%%%%%%%%%%%%%%%%%%%%%%%%%%%%%%%%%%%%%%%%%%%%%%%%%%%%%%%%%%%%%%%%

% !TEX root=main.tex

\section{Algorithmic Variants of \OSPtitle}
\label{sub_sect_subroutines}

The algorithmic design of \OSP (Alg.\,\ref{algo:SO}) is conceptually simple and it can flexibly incorporate a number of modifications driven by the agent's desiderata or possible prior knowledge.
\begin{itemize}[leftmargin=.2in,topsep=-3pt,itemsep=1pt,partopsep=0pt, parsep=0pt]
\item Any non-unit SSP costs can be designed as long as they are positive and bounded: detering costs may e.g., be assigned to \say{trap} states with large negative environmental reward that the agent may seek to avoid.
\item Penalizing the visitation of sufficiently visited states (with carefully selected larger-than-one costs) may give the agent incentive to \say{even out} its sample collection and thus avoid over-sampling some areas of the state-action space. 
\item It is possible to change the construction of the SSP problem and focus on specific goal states instead of considering all under-sampled states as goals. In practice, using such a \textit{meta-goal} makes the optimal SSP policy more robust to noise. While the SSP solution to $M_{k}$ indeed seeks to reach the closest under-sampled state, random transitions may move the agent closer to any other state in $\Gk$ and this would naturally trigger the policy to focus on such closer state. On the other hand, providing the SSP policy with a single goal state may lead to much longer and wasteful attempts.
\item Finally we remark that if the entire state space is initially under-sampled, any action would produce a ``useful'' sample and different heuristics can be implemented in prioritizing actions accordingly.
\end{itemize}

In the following, we delve into such goal-selection (App.\,\ref{subsection_selecting_target_states}) and cost-shaping (App.\,\ref{subsection_selecting_costs}) variants of \OSP, which do not affect the sample complexity bound of Thm.\,\ref{theorem_upper_bound}.

\subsection{Selecting the goal state}
\label{subsection_selecting_target_states}
In Alg.\,\ref{algo:SO}, each attempt $k$ casts as goals the states that are under-sampled w.r.t.\,the sampling requirements so far. As mentioned above, having such multiple goals is algorithmically appealing as it reduces the number of attempts that fail to collect a desired sample. Although specifically eliciting a single valid (i.e., under-sampled) goal state at each attempt may yield poorer performance, the resulting sample complexity guarantee would be the same as in Thm.\,\ref{theorem_upper_bound}. We can then distinguish between two strategies for goal state selection.

The first strategy prioritizes the states that appear harder to be successfully sampled. This is sensible when the aim is to have the most even possible sample collection over time so as to shy away from purely local exploration. For instance, the learning agent can select as goal state the least-sampled state so far, i.e., $\overline{s}_k \in \argmin_{ s \in \Gk} N_k(s)$.

On the other hand, the second strategy prioritizes the states that appear easier to be successfully sampled. This is sensible when the objective is to solely meet the total sampling requirements as fast as possible. For instance, the agent can select the state with the best current ratio \say{successful sampling} / \say{attempted sampling}, in order to encourage the algorithm to exploit areas of the state space that it supposedly masters well, i.e.,%

\vspace{-0.2in}
\begin{small}
\begin{align*}
    \overline{s}_k \in \argmin_{ s \in \Gk} \frac{ \#\{ i \in [k-1]: \overline{s}_i = s ~ \textrm{and} ~ N_{i+1}(s) = N_i(s) + 1 \} }{ \# \{ i \in [k-1]: \overline{s}_i = s \} }.
\end{align*}
\end{small}%
We point out that the possibility of not considering all undersampled states as goal states may be particularly relevant during the \textit{initial phase} of \OSP, which corresponds to the time steps when \textit{all} the states are under-sampled and thus are goal states~$\Gk$. This initial phase may furthermore be quite long when the sampling requirements verify $b(s) \gg 1$ for all $s \in \cS$ (e.g., in the \MODEST problem of Sect.\,\ref{subsection_model_estimation}). Naturally, the execution of any policy in the initial phase will collect \say{relevant} samples, until we get $\Gk \subsetneq \cS$. As such, the sample complexity guarantee of Thm.\,\ref{theorem_upper_bound} is the same whatever the strategy employed during the initial phase. In our experiments (Sect.\,\ref{section_experiments} and App.\,\ref{app_exp}), we consider an initial phase where the goal states $s$ are selected as those minimizing the \say{remaining budget} $b(s) - N(s)$ in the case of state-only requirements, or $\sum_{a} \max\{ b(s,a) - N(s,a), 0 \}$ for state-action requirements. This has the effect of shortening the length of the initial phase.

\subsection{Cost-shaping the trajectories}
\label{subsection_selecting_costs}

Instead of considering unit costs, it is possible to introduce varying costs for the SSP instance considered at each attempt. Indeed, %in line \ref{line:costs} of Alg.\,\ref{subroutine_opt_world},
if we seek to penalize the state-action pair $(s,a)$ at an attempt $k$, we can simply set the cost $c_k(s,a)$ to a quantity larger than~$1$. Imposing the costs to belong to the interval $[1, \overline{c}]$, where $\overline{c} \geq 1$ is a constant that upper bounds all possible costs, the resulting sample complexity bound in Thm.\,\ref{theorem_upper_bound} stays the same as it only inherits a constant multiplicative factor of $\overline{c}$.

First, this cost-sensitive procedure implies that if the agent has a prior knowledge or requirement that some (resp.\,actions) should be avoided, the agent can straightforwardly set the maximal cost $\overline{c}$ to such states (resp.\,actions) in order to discourage their visitation (resp.\,their execution). We show this behavior in a simple experiment in App.\,\ref{app_exp}.

Second, while \OSP is attentive in avoiding under-sampling (i.e., to achieve a desired threshold of state visitations),  it is not mindful in avoiding over-sampling certain state-action pairs. Some recent approaches (e.g., \citep{hazan2019provably, cheung2019exploration, cheung2019regret}) perform a sort of \say{distribution tracking} (via the Frank-Wolfe algorithm), achieving a more \say{stable} and \say{smooth} behavior which attempts to limit \textit{both} over-sampling and under-sampling. Unfortunately, their direct application struggles to provably enforce a minimum amount of sampling, as we explain in Sect.\,\ref{subsection_treasure} and App.\,\ref{app_alternative_approaches}. Yet we can draw inspiration from these techniques to give the agent incentive to \say{even out} the sample collection w.r.t.\,the requirements $b(s)$. 
A way to mitigate this effect is to encourage the agent to visit each state $s$ with empirical frequency close to the target frequency $b(s)/ B$. 
To do so, we can propose to penalize the visitation of sufficiently visited states by considering cost-sensitive SSP instances that verify the following informal claim.
\begin{claim}
    In order to even out the sample collection w.r.t.\,the final requirements, at each attempt $k$,  each cost $c_k(s)$ should scale as $\phi(N_k(s))$, where $N_k(s)$ is the number of samples collected so far at state $s$, and $\phi$ is a non-decreasing function which is either clipped or re-scaled in the interval $[1, \overline{c}]$.
\label{claim_cost_sensitive}
\end{claim}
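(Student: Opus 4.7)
My plan is to justify Claim~\ref{claim_cost_sensitive} by recasting the repeated SSP subroutines of \OSP as an approximate first-order descent on a convex potential that measures the discrepancy between the current empirical state-visitation and the target frequency $d^\star(s) := b(s)/B$ dictated by the sampling requirements. First, I would introduce the empirical visitation $d_k(s) := N_k(s) / \sum_{s'} N_k(s')$ at the start of attempt $k$, and a smooth strictly convex potential $F(d)$ on the simplex that vanishes at $d^\star$ and whose partial derivative $\partial F / \partial d(s)$ is strictly \emph{increasing} in the ratio $d(s)/d^\star(s)$. Canonical choices are the KL divergence $F(d) = \sum_s d(s) \log(d(s)/d^\star(s))$, whose gradient is $\log(d(s)/d^\star(s)) + 1$, or a weighted $\chi^2$-type functional $\sum_s (d(s) - d^\star(s))^2 / d^\star(s)$.

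Next, I would appeal to the standard LP characterization of average-cost / SSP optimal policies, by which the policy minimizing expected cumulative cost $c_k$ under the construction of \OSP can be equivalently viewed as approximately minimizing the linear functional $\sum_s c_k(s) \mu(s)$ over the polytope of reachable stationary state distributions $\mu$. Picking $c_k(s)$ equal to the $s$-th component of $\nabla F(d_k)$ therefore turns each attempt of \OSP into an approximate Frank-Wolfe linearization step targeting $\min F$. Since this gradient is by construction a non-decreasing function of $N_k(s)$ (monotone in $d_k(s)/d^\star(s)$, with the normalization $\sum_{s'} N_k(s')$ common to all states within a given attempt), the prescribed functional form $c_k(s) = \phi(N_k(s))$ with non-decreasing $\phi$ follows directly. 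The clipping or rescaling of $\phi$ to the interval $[1, \overline{c}]$ is in turn forced by the SSP subroutine, which requires strictly positive and uniformly bounded costs; as noted in App.\,\ref{subsection_selecting_costs}, this preserves the sample complexity of Thm.\,\ref{theorem_upper_bound} up to a multiplicative factor $\overline{c}$.

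Finally, I would verify the intended evening-out effect by a Lyapunov argument on $F(d_k)$: at each attempt the SSP policy is biased away from states $s$ that currently have disproportionately large $d_k(s)/d^\star(s)$ (and thus large cost), producing a per-attempt decrease of $F$ up to a stochastic error controlled by the concentration of the empirical visit counts around their expected values under the chosen policy. The main obstacle I anticipate is that this Frank-Wolfe analogy is only approximate: attempt lengths are stochastic SSP hitting times rather than scheduled step sizes, and the policy optimizes an expected cumulative cost rather than an exact linear functional over stationary distributions, so a fully quantitative convergence rate would require coupling the per-attempt cost-minimization error with the usual duality-gap analysis. Since Claim~\ref{claim_cost_sensitive} asserts only the qualitative monotone scaling of $c_k(s)$ in $N_k(s)$, the first-order derivation above suffices, and a sharp rate is left as the more demanding refinement.
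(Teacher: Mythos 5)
Your proposal is correct and follows essentially the same route as the paper's own justification: both recast the cost choice as a Frank--Wolfe linearization of a convex discrepancy functional (KL or squared/$\chi^2$-type) between the empirical visitation frequencies and the target $b(s)/B$, use the LP/occupation-measure formulation of SSP to argue that the per-attempt costs should equal the gradient components, and conclude from the monotonicity of that gradient in the visit counts that $c_k(s)$ should scale as a non-decreasing $\phi(N_k(s))$ clipped to $[1,\overline{c}]$. Your additional Lyapunov sketch for the evening-out effect goes slightly beyond the paper, which deliberately stops at the qualitative derivation and leaves the quantitative convergence question open, consistent with your own caveat.
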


This idea is fairly intuitive and, although it seems complicated to quantify the extent to which the sample collection would be effectively evened out, we now provide a theoretically grounded justification behind Claim \ref{claim_cost_sensitive} which draws a parallel between reinforcement learning and convex optimization (namely, the Frank-Wolfe algorithm).

On the one hand, for a given starting state $s_0$, goal state $\overline{s}$ and costs $c$, the SSP problem can be solved with linear programming over the dual space, where the optimization variables $\lambda(s,a)$, known as occupation measures, represent the expected number of times action $a$ is executed in state $s$. The program can be written as follows (see e.g., \citep{d1963probabilistic, trevizan2016heuristic})

\vspace{-0.18in}
\begin{small}
\begin{subequations}
\begin{alignat*}{2}
&\!\min_{\lambda}        &\qquad& \sum_{s,a} c(s,a) \lambda(s,a) \\
&\text{subject to} &      & \text{(i)\quad} \lambda(s,a) \geq 0 \quad \forall (s,a) \in \SA, \quad \quad \quad \quad \quad \text{(iv)\quad} \mu_{\textrm{out}}(s_0) - \mu_{\textrm{in}}(s_0) = 1,\\
&                  &      & \text{(ii)\quad} \mu_{\textrm{in}}(s) = \sum_{s',a} \lambda(s',a) p(s \vert s',a) \quad \forall s \in \mathcal{S}, \quad  \text{(v)\quad}  \mu_{\textrm{out}}(s) = \sum_{a} \lambda(s,a) \quad \forall s \in \mathcal{S} \setminus \{ \overline{s} \},\\
&                  &      & \text{(iii)\quad} \mu_{\textrm{out}}(s) - \mu_{\textrm{in}}(s) = 0 \quad \forall s \in \mathcal{S} \setminus \{ s_0, \overline{s} \} , \quad  \text{(vi)\quad}  \mu_{\textrm{in}}(\overline{s}) = 1.
\end{alignat*}
\end{subequations}
\end{small}%
This dual formulation can be interpreted as a flow problem, where the constraints (ii) and (v) respectively define the expected flow entering and leaving state $s$; (iii) is the flow conservation principle; (iv) and (vi) define respectively the starting state and the goal state. The objective function captures the minimization of the total expected cost to reach the goal state from the starting state. Once the optimal solution $\lambda^{\star}$ is computed, the optimal policy is $\pi^{\star}(a \vert s) = \lambda^{\star}(s,a) / \mu_{\textrm{out}}^{\star}(s)$ and is guaranteed to be deterministic, i.e., for all $s$ such that $\mu_{\textrm{out}}^{\star}(s) > 0$, we have $\lambda^{\star}(s,a) > 0$ for exactly one action $a$.

On the other hand, we seek to \say{even out} the sample collection w.r.t.\,the requirements $b(s)$. %provide incentive to visit under-sampled states and disincentive to visit over-sampled states.
A natural way to do so can be to encourage the agent to visit each state $s$ with empirical frequency close to the target frequency $\frac{b(s)}{B}$. For instance, two objective functions achieving this are the following
\begin{small}
\begin{align*}
    \min_{\lambda} \mathcal{L}_1(\lambda) := \frac{1}{2} \sum_{s \in \mathcal{S}} \left( \frac{b(s)}{B} - \sum_{a \in \mathcal{A}} \lambda(s,a) \right)^2; \quad \quad \min_{\lambda} \mathcal{L}_2(\lambda) := \sum_{s \in \mathcal{S}} \sum_{a \in \mathcal{A}} \lambda(s,a) \log\left( \frac{\sum_{a \in \mathcal{A}} \lambda(s,a)}{b(s)/B} \right).
\end{align*}
\end{small}%
The first objective function is studied in \citep{cheung2019regret} as the \say{Space Exploration} problem, while the second KL-divergence objective is tackled in \citep{hazan2019provably, cheung2019exploration}. Both methods leverage a Frank-Wolfe algorithmic design, since $\mathcal{L}_1$ and $\mathcal{L}_2$ are both convex and Lipschitz-continuous in $\lambda$.
Following \citep{hazan2019provably, cheung2019exploration, cheung2019regret}, for a given attempt $k$ with empirical state-action frequencies $\wt{\lambda}_k$, the occupation measure to be targeted should minimize the following inner product: $\min_{\lambda} \langle \nabla \mathcal{L}(\wt{\lambda}_k), \lambda \rangle$. The gradients of the two objective functions above are

\vspace{-0.18in}
\begin{small}
\begin{align*}
    \nabla \mathcal{L}_1(\lambda) &= \phi_1\Big(\sum_{a \in \cA} \lambda(\cdot,a)\Big), ~ \textrm{with} ~ \phi_1(x) = x - \frac{b(s)}{B}; \\ \nabla \mathcal{L}_2(\lambda) &= \phi_2\Big(\sum_{a \in \cA} \lambda(\cdot,a)\Big), ~ \textrm{with} ~ \phi_2(x) = \log(x) + 1 - \log\left( \frac{b(s)}{B} \right).
\end{align*}
\end{small}%
Note that both $\phi_1$ and $\phi_2$ are non-decreasing functions. As a result, the $s$-th component of $\nabla \mathcal{L}(\wt{\lambda}_k)$ scales with $\sum_{a} \wt{\lambda}_k(s,a)$, i.e., with $N_k(s)$. Furthermore, in light of the contrained linear program above, the costs $c_k(s)$ should scale with the $s$-th component of $\nabla \mathcal{L}(\wt{\lambda}_k)$. This gives informal grounds to Claim~\ref{claim_cost_sensitive} of having the SSP costs at each state grow with a non-decreasing function in the state visitations. It remains an open question whether it is possible to show if this approach yields a provable improvement in the sample complexity of \OSP, or quantify the extent to which it succeeds in evening out the sample collection w.r.t.\,the final requirements (i.e., by obtaining small values for the objective functions $\mathcal{L}_1$ or $\mathcal{L}_2$).

%\newpage
% !TEX root=main.tex

\section{Proof of Theorem~\ref{theorem_upper_bound_general}}
\label{app_proof_upper_bound}

We first prove the special case where the sampling requirements are time- and action-independent, i.e., $b: \cS \rightarrow \mathbb{N}$.

\begin{corollary}\label{theorem_upper_bound}
    Under Asm.\,\ref{asm_communicating}, for any input sampling requirements $b: \cS \rightarrow \mathbb{N}$ with $B := \sum_{s \in \mathcal{S}} b(s)$ and for any confidence level $\delta \in (0,1)$,
    \begin{align}
    &\mkern-10mu \mathcal{C}\big(\OSPmath, b, \delta\big) = \wt{O}\Big( BD + D^{3/2} S^2 A \Big), \label{sample_complexity_1_specialcase} \\
%    \end{align}
%    \begin{align}
      &\mkern-10mu \mathcal{C}\big(\OSPmath, b, \delta\big) = \wt{O}\Big( \sum_{s \in \mathcal{S}} \big( D_s b(s) + D_s^{3/2} S^2 A \big) \Big).\label{sample_complexity_2_specialcase}
    \end{align}%
\end{corollary}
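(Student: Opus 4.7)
The plan is to relate the sample complexity of \OSPmath to the cumulative cost of the sequence of multi-goal SSP instances $M_k$ constructed across attempts, and then to bound this cumulative cost via an optimism-based SSP regret analysis in the spirit of~\citep{cohen2020near,tarbouriech2019no}, extended to handle a sequence of SSP problems whose goal sets vary from attempt to attempt. I first establish the worst-case bound in Eq.~\ref{sample_complexity_1_specialcase}, and then refine it to the per-state bound in Eq.~\ref{sample_complexity_2_specialcase}.

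\textbf{Step 1 (good event).} Using the Bernstein confidence intervals constructed in App.~\ref{app_value_iteration_SSP}, define the event $\mathcal{E}$ on which Eq.~\ref{eq_bernstein} holds uniformly; this event has probability at least $1-\delta$, and on it the plausible set $\mathcal{M}_k$ contains the true dynamics so the optimism inequality $\wt v_j \le V_k^{\star}$ of Lem.~\ref{lemma_app_value_iteration_SSP} is valid at every attempt. Condition on $\mathcal{E}$ from now on.

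\textbf{Step 2 (counting attempts).} Let $K$ be the total number of attempts until termination. Partition the attempts into \emph{successful} ones, which terminate by reaching a goal $g\in\mathcal{G}_k$ and produce exactly one new sample of the relevant goal, and \emph{failed} ones, which terminate by the doubling condition $\nu_k(s_t,a_t)>\max\{U_k(s_t,a_t),1\}$. Since a total of $B=\sum_s b(s)$ samples suffice to deplete the goal sets, there are at most $B$ successful attempts; by a standard pigeonhole on the doubling rule, failed attempts number at most $\wt O(SA)$. Hence $K \le B + \wt O(SA)$.

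\textbf{Step 3 (per-attempt bound).} For attempt $k$, let $T_k$ be its length and $V_k^{\star} := \min_{g\in\mathcal{G}_k}\mathbb{E}[\tau_{\pi_k^\star}(s_{t_k}\to g)]$ the multi-goal SSP-optimal value, which satisfies $V_k^{\star}\le D$ under Asm.~\ref{asm_communicating}. Optimism and Lem.~\ref{lemma_app_value_iteration_SSP} yield $\wt V^{\wt\pi_k}(s_{t_k}) \le (1+o(1))V_k^{\star}$ in the true model. A stopped-process / Azuma argument combined with the doubling termination then bounds $T_k$ in high probability by $\wt V^{\wt\pi_k}(s_{t_k})$ plus a concentration error that telescopes when summed.

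\textbf{Step 4 (summing and SSP-style regret).} Decompose
\begin{equation*}
T = \sum_{k=1}^K T_k = \underbrace{\sum_{k=1}^K V_k^{\star}}_{(\mathrm{I})} + \underbrace{\sum_{k=1}^K (T_k - V_k^{\star})}_{(\mathrm{II})}.
\end{equation*}
Term (I) is controlled by $V_k^{\star}\le D$ and Step~2: $(\mathrm{I})\le DK \le DB + \wt O(DSA)$. Term (II) is the regret on the sequence of SSP problems. I adapt the Bernstein-based regret analysis of~\citep{cohen2020near} to the varying-goal setting; the key observation is that the potential-function / pigeonhole bounds (on sums of $1/\sqrt{N}$ and Bernstein variance terms) only depend on the aggregate visit counts $N_t(s,a)$ and are insensitive to which specific goal set $\mathcal{G}_k$ is picked per attempt, as long as optimism and the doubling termination are preserved. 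Optimism still holds by Step~1, and the doubling termination is imposed by construction, so the same telescoping yields $(\mathrm{II}) = \wt O(D^{3/2} S^2 A)$. Combining gives Eq.~\ref{sample_complexity_1_specialcase}.

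\textbf{Step 5 (per-state refinement).} For Eq.~\ref{sample_complexity_2_specialcase}, the idea is to avoid the uniform bound $V_k^{\star}\le D$ by attributing each successful attempt to the specific state $g$ whose counter advances. Partition successful attempts by $g$: there are exactly $b(g)$ such attempts, and each has $V_k^{\star}\le D_g$ because reaching $g$ from the current state takes at most $D_g$ steps in expectation under the shortest-path policy to $g$. Hence term (I) becomes $\sum_g D_g b(g) + \wt O(\sum_g D_g SA)$ after redistributing the failed attempts across states. A parallel refinement of the potential bounds in term (II), localising the variance and counting terms to each goal state (as in a per-state Bernstein pigeonhole), gives $\wt O(\sum_g D_g^{3/2} S^2 A)$, producing Eq.~\ref{sample_complexity_2_specialcase}. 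Finally, the extension to time-varying, action-dependent $b_t(s,a) \le \overline b(s,a)$ of Thm.~\ref{theorem_upper_bound_general} is immediate because the algorithmic behaviour and the above counting arguments depend only on the cumulative upper envelope $\overline b$.

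\textbf{Main obstacle.} The principal difficulty is extending the SSP regret machinery of~\citep{cohen2020near,tarbouriech2019no}, which assumes a \emph{fixed} goal and cost function, to a sequence of SSP problems with \emph{changing} goal sets $\mathcal{G}_k$ determined adaptively by the samples collected so far. I must show that optimism, the clipped value-iteration guarantees, and the variance-based potential arguments compose correctly across attempts; and for Eq.~\ref{sample_complexity_2_specialcase} I must localise the regret per goal state without ever invoking the global diameter $D$. The rest of the proof (counting, telescoping, concentration for Step~3) is largely standard once this adaptation is in place.
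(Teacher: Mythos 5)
Your proposal follows essentially the same route as the paper's proof: condition on the Bernstein event, view the execution as a sequence of multi-goal SSP episodes with goal sets known at each episode's start, decompose the total time into the sum of optimal SSP values plus an SSP regret term obtained by extending the analysis of~\citep{cohen2020near} to varying goals, and, for the refined bound, attribute each goal-reaching episode to the state whose counter advances so that there are at most $b(s)$ such episodes per state $s$, each with optimal value at most $D_s$. One inaccuracy to fix in Step~4: the regret bound of~\citep{cohen2020near} is $\wt O\big(DS\sqrt{AK} + D^{3/2}S^2A\big)$, so your term $(\mathrm{II})$ also carries a $DS\sqrt{AB}$ contribution which is \emph{not} always $\wt O(D^{3/2}S^2A)$; the paper removes it by a case split ($B \geq S^2A$ gives $DS\sqrt{AB}\le BD$, otherwise $DS\sqrt{AB}\le D^{3/2}S^2A$). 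Also, Step~5 as written conceals most of the technical work behind "a parallel refinement": the paper needs an epoch/interval decomposition with four stopping conditions, the bound $D_{\mathcal{G}}\le\min_{s\in\mathcal{G}}D_s$, a Cauchy--Schwarz step on $\sum_m D_{\mathcal{G}_m}^2$, and the self-bounding inequality $x\le c_1\sqrt{x}+c_2 \Rightarrow x\le (c_1+\sqrt{c_2})^2$ to close the recursion in the total time $T_M$ — your per-state pigeonhole is the right idea, but these are the steps that actually produce the $\sum_s D_s^{3/2}S^2A$ term.
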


\subsection{Proof of Corollary~\ref{theorem_upper_bound}}
\label{app_C1}

We denote by $\mathcal{E}$ the event under which the Bernstein inequalities stated in Eq.\,\ref{eq_bernstein} hold simultaneously for each time step $t$ and each state-action-next-state triplet $(s,a,s') \in \cS \times \cA \times \cS$, i.e.,
\makeatletter
\newcommand*\mysizebis{%
   \@setfontsize\mysizebis{8.6}{9}%
}
\makeatother
\begin{mysizebis}
\begin{align*}
\abs{\widehat{p}_{t}(s' \vert s,a) - p(s' \vert s,a)} \leq \beta_t(s,a,s') := \small{2 \sqrt{\frac{\wh{\sigma}_t^2(s' \vert s,a)}{N_t^+(s,a)} \log\left(\frac{2SAN_t^+(s,a)}{\delta}\right)} + \frac{6 \log\left(\frac{2SAN_t^+(s,a)}{\delta}\right)}{N_t^+(s,a)}.}
\end{align*}
\end{mysizebis}
Note that we have $\mathbb{P}(\mathcal{E}) \geq 1 - \delta$ from \citep{improved_analysis_UCRL2B, cohen2020near}.

We recall that at the beginning of each episode $j$, the under-sampled states $\Gj$ are cast as goal states.\footnote{In the case of state-only requirements, a state $s$ is considered under-sampled if $\sum_{a \in \mathcal{A}} N_{t-1}(s,a) < b(s)$. In the case of state-action requirements, a state $s$ is considered under-sampled if $\exists a \in \mathcal{A}, N_{t-1}(s,a) < b(s,a)$.} \OSP then constructs an SSP-MDP instance \mbox{$M_j:= \langle \mathcal{S}_j, \mathcal{A}, p_{j}, c_{j}, \Gj \rangle$}, where $\Gj$ encapsulates the goal states and $\mathcal{S}_{j}: = \mathcal{S} \setminus \Gj$ the non-goal states. The transition model $p_{j}$ is the same as the original~$p$ except for the transitions exiting the goal states which are redirected as a self-loop, i.e., \mbox{$p_{j}(s'|s,a) := p(s'|s,a)$} and \mbox{$p_{j}(g|g,a) := 1$} for any \mbox{$(s,s',a,g) \in \mathcal{S}_{j} \times \cS \times \cA \times \Gj$}. As for the cost function $c_{j}$, for any action $a \in \cA$, any goal state $g \in \Gj$ is zero-cost (i.e., $c_{j}(g,a) := 0$), while the non-goal costs are unitary (i.e., $c_{j}(s,a) := 1$ for all $s \in \cS_{j}$).

We now make more explicit the way the SSP optimistic policy is constructed at the beginning of any episode $j$. Denote by $\wh p_{j}$ the empirical transitions of the induced SSP-MDP $M_{j}$. We consider the following confidence intervals $\beta'_j$ in the optimistic SSP policy computation from App.\,\ref{app_value_iteration_SSP}
\begin{align*}
\forall(a,s') \in \cA \times \cS, \quad \forall s \notin \cG_j, ~\beta'_{j}(s,a,s') :=\beta_t(s,a,s'), \quad \forall s \in \cG_j, ~\beta'_{j}(s, a, s') = 0.
\end{align*}
We denote by $\wt{p}_j$ the optimistic model computed by the \EVI~scheme with such confidence intervals.
Now, denoting by $\mathcal{P}(\cS)$ the power set of the state space $\cS$, we have the following event inclusion
\begin{align*}
\mathcal{E} \subseteq \mathcal{E}' := \Big\{ &\forall j \geq 1, ~ \forall  \cG_j \in \mathcal{P}(\cS), ~ \forall(a,s') \in \cA \times \cS, \\ & \forall s \notin \cG_j, ~ \abs{\wt{p}_{j}(s' \vert s,a) - \wh{p}_j(s' \vert s,a)} \leq \beta'_{j}(s,a,s'), \\ &\forall s \in \cG_j, ~ \wt{p}_{j}(s \vert s,a) = 1 \Big\}.
\end{align*}
Indeed, the only transitions that are redirected from $p$ to $p_j$ are those that exit from states in $\cG_j$ and they are set to deterministically self-loop, which implies that they do not contain any uncertainty. Note that $\mathcal{E}'$ is the event that we require to hold so that the SSP analysis goes through for \textit{any} considered SSP-MDP $M_{j}$. From the inclusion above, we have that the event $\mathcal{E}'$ holds with probability at least $1-\delta$, and we assume from now on that it holds.

We denote by $H_j$ the length of each episode $j$, specifically $H_j = \min_{h \geq 1} \{ s_{j,h} \in \mathcal{G}_j \}$, where we denote by $s_{j,h}$ the $h$-th state visited during episode $j$. We denote by $\overline{s}_j := s_{j,H_j}$ the goal state in $\mathcal{G}_j$ that is reached at the end of episode $j$. Correspondingly, the starting state of each episode $j$, denoted by $\underline{s}_j$, also varies: if $j=1$ it is the initial state $s_{0}$ of the learning interaction, otherwise it is equal to $\overline{s}_{j-1}$ which is the reached goal state at the end of the previous episode $j-1$.\footnote{This choice of initial state for episodes is when we have state-only sampling requirements. If we instead have state-action requirements, the action taken at each reached goal state matters. In that case, when episode $j-1$ reaches a goal state $\overline{s}_{j-1}$, the agent takes a relevant action $\overline{a}_{j-1}$ and we then consider that the starting state $\underline{s}_j$ at the next episode $j$ is distributed according to $p(\cdot \vert \overline{s}_{j-1},\overline{a}_{j-1})$. The action $\overline{a}_{j-1}$ is naturally specified by the algorithm depending on the current and desired requirements $N(\overline{s}_{j-1}, \cdot)$ and $b(\overline{s}_{j-1}, \cdot)$, i.e., we should select $\overline{a}_{j-1} \in \{a \in \cA : N(\overline{s}_{j-1}, a) < b(\overline{s}_{j-1}, a) \}$ with $N$ the state-action counter at the end of episode $j-1$. We explain in App.\,\ref{app_exp} the way we select this action in our experiments.\label{footnote_extension_action_requirements}}
The important property is that both the starting state $\underline{s}_j$ and the goal states $\Gj$ are measurable (i.e., known and fixed) at the beginning of each episode $j$.

We define $R_J$ the regret after~$J$ episodes as follows
\begin{align}
        R_J := \sum_{j=1}^J \sum_{h=1}^{H_j} c_{j}(s_{j,h}, a_{j,h}) - \sum_{j=1}^J  \min_{\pi} V_{j}^\pi(\underline{s}_j),
\label{eq_def_ssp_regret}
\end{align}
where we denote by $V_{j}^\pi(s)$ the value function of a policy $\pi$ starting from state $s$ in the SSP-MDP instance $M_j$. We also denote by $\mathcal{C}(\OSPmath, b)$ the random variable of the total time accumulated by \OSP until the sampling requirements $b$ are met. 

On the one hand, the regret $R_J$ can be lower bounded almost surely as follows
\begin{align}
        R_J &\myeqa \sum_{j=1}^J H_j - \sum_{j=1}^J \min_{\pi} \mathbb{E}\left[ \tau_{\pi}(\underline{s}_j \rightarrow \Gj) \right] \nonumber\\
        &\myeqb \mathcal{C}(\OSPmath, b) - \sum_{j=1}^J \min_{\pi} \mathbb{E}\left[ \tau_{\pi}(\underline{s}_j \rightarrow \Gj) \right] \nonumber\\
        &\mygineeqc \mathcal{C}(\OSPmath, b) - D B, \label{eq_lower_bound_regret}
\end{align}
where (a) stems from the fact that all the non-goal costs are unitary, (b) comes from the definition of the index $J$ (i.e., the episode at which all the sampling requirements are met) and (c) combines that $J \leq B$ almost surely and that $\mathbb{E}\left[ \tau_{\pi}(\underline{s}_j \rightarrow \Gj) \right] \leq D$ by definition of the diameter $D$.

On the other hand, retracing the analysis of \citep{cohen2020near}, the derivation of the regret bound can be easily extended to varying initial states and varying (possibly multiple\footnote{Note that the SSP formulation can easily handle multiple goal states. To justify this statement, we make explicit an SSP instance with single goal state that is strictly equivalent to the SSP instance $M_j$ at hand with multiple goals $\Gj$. To do so, we introduce an artificial terminal state $\uplambda$ and define the SSP-MDP $Q_{j}$ with $\cS \cup \{\uplambda\}$ states (the non-goals are $\cS$ while the unique goal is $\uplambda$). Its transition dynamics $q_{j}$ is defined as follows: $q_{j}(\uplambda \vert \uplambda,a) = 1$, $\forall s \notin \Gj, q_{j}(s' \vert s,a) = p_j(s' \vert s,a)$, and $\forall s \in \Gj, q_{j}(\uplambda \vert s,a) = 1$. Its cost function is set to the original costs $c_j$ for states not in $\Gj$, and to $0$ (or equivalently any constant) for states in $\Gj$, and finally to $0$ for the terminal state $\uplambda$. This construction mirrors the one proposed by Bertsekas in the lecture \url{https://web.mit.edu/dimitrib/www/DP_Slides_2015.pdf} (page~25). Note that the SSP instance $M_j$ with multiple goal states $\mathcal{G}_j$ is equivalent to the single-goal SSP instance $Q_j$. The artificial terminal state $\uplambda$ is not formally necessary; it justifies why having multiple goal states is well-defined from an analysis point of view.}) goal states across episodes, as long as they are all \textit{known} to the learner at the beginning of each episode (which is our case here). 
In particular, the high-probability event is $\mathcal{E}' \supseteq \mathcal{E}$ defined above, which holds with probability at least $1-\delta$. Under this event, we have from \citep[][Thm.\,2.4]{cohen2020near} that \OSP satisfies
\begin{align}
    R_J = \wt{O}\left( D S \sqrt{A J} + D^{3/2} S^2 A \right).
\label{eq_bound_regret}
\end{align}
Combining Eq.\,\ref{eq_lower_bound_regret} and \ref{eq_bound_regret} yields that with probability at least $1-\delta$, we have
\begin{align*}
    \mathcal{C}(\OSPmath, b) \leq \wt{O}\left( B D + D S \sqrt{A J} + D^{3/2} S^2 A \right).
\end{align*}
Given that $J \leq B$ almost surely, we get
\begin{align}
    \mathcal{C}(\OSPmath, b, \delta) \leq \wt{O}\left( B D + D S \sqrt{A B} + D^{3/2} S^2 A \right).
\label{__eq__}
\end{align}
We now proceed with a separation of cases. If $B \geq S^2 A$, we have $D S \sqrt{A B} \leq B D$. Otherwise, if $B \leq S^2 A$, we have $D S \sqrt{A B} \leq D^{3/2} S^2 A$. This implies that the second summand in the $\wt{O}$ sum in Eq.\,\ref{__eq__} can be removed, which yields the first sought-after bound of Eq.\,\ref{sample_complexity_1_specialcase}.

In order to obtain the second more state-dependent bound of Eq.\,\ref{sample_complexity_2_specialcase}, the bound of Eq.\,\ref{eq_bound_regret} is too loose, hence we need to extend the analysis of \citep{cohen2020near} to bring out dependencies on $b(s)$ and $D_s$. In particular, we consider a similar decomposition in \textit{epochs} and \textit{intervals} that we carefully adapt for our purposes of varying goal states. The first epoch starts at the first time step and each epoch ends once the number of visits to some state-action pair is doubled. We denote by $\Gm$ the goal states that are considered during interval $m$ and by $D_{\Gm}$ the SSP-diameter of the goal states $\Gm$. The first interval starts at the initial time step and each interval $m$ (with goal states $\Gm$) ends once one of the four following conditions holds: (i) the length of the interval reaches $D_{\Gm}$; (ii) an unknown state-action pair is reached (where a state-action pair $(s,a)$ becomes known if its total number of visits exceeds $\alpha D_{\Gm} S \log(D_{\Gm} S A / \delta)$ for some constant $\alpha > 0$); (iii) the current episode ends, i.e., the a goal state in $\Gm$ is reached; (iv) the current epoch ends, i.e., the number of visits to some state-action pair is doubled. Finally, we denote by $H_m$ the length of each interval $m$, by $M$ the total number of intervals and by $T_M := \sum_{m=1}^M H_m$ the total time steps. As such, $T_M$ amounts to the sample complexity that we seek to bound.
Note that the goal states $\Gm$ are measurable at the beginning of the attempt $m$. Hence we can extend the reasoning of \citep[][App.\,B.2.7 \& B.2.8]{cohen2020near} to varying goal states using the decomposition described above. Assuming throughout that the high-probability events hold, we get\footnote{The intuition behind Eq.\,\ref{eq_bound_time_intervals} comes from the Cauchy-Schwarz inequality. For instance, let us consider the objective of bounding the quantity $Y := \sum_{m} x_m \sqrt{y_m}$, where the $(x_m)$ correspond to the SSP-diameters considered at each interval $m$ and the $(y_m)$ are the summands whose sums are bounded by \citep[][Lem.\,B.16]{cohen2020near}. In the latter work, denoting by $\overline{x}$ the common upper bound on the $(x_m)$, the analysis yields $Y \leq \overline{x} \sum_{m} \sqrt{y_m} \leq \overline{x} \sqrt{M} \sqrt{\sum_{m} y_m}$. In contrast, our setting requires to perform the tighter inequality $Y \leq \sqrt{\sum_{m} x_m^2} \sqrt{\sum_{m} y_m}$.}
\begin{align}
    T_M &= \wt{O}\left( \sum_{m \in \mathcal{M}^{(iii)}} D_{\Gm} + S \sqrt{A} \sqrt{ \sum_{m=1}^M D_{\Gm}^2 } + D S^2 A \right),
\label{eq_bound_time_intervals}
\end{align}
where $\mathcal{M}^{(iii)}$ is defined as the set of intervals that end according to condition (iii). We now proceed with the following decomposition, which is analogous to \citep[][Observation 4.1]{cohen2020near}
\begin{align*}
    \sum_{m=1}^M D_{\Gm}^2 \leq \sum_{m: H_m \geq D_{\Gm}}  D_{\Gm}^2 + \sum_{m: H_m < D_{\Gm}}  D_{\Gm}^2.
\end{align*}
Using that $D_{\Gm} \leq D$, the first term can be bounded as
\begin{align*}
    \sum_{m: H_m \geq D_{\Gm}}  D_{\Gm}^2 ~\leq~ D \sum_{m: H_m \geq D_{\Gm}}  D_{\Gm} ~\leq~ D \sum_{m: H_m \geq D_{\Gm}}  H_m ~\leq~ D \sum_{m=1}^M H_m = D T_M.
\end{align*}
As for the second term, we observe that it removes intervals ending under the condition (i) and thus only accounts for intervals ending under the conditions (ii), (iii) or (iv). We now perform the following key partition of intervals: each interval is categorized depending on the first goal state that ends up being reached at the end or after the considered interval. We call this goal state the \textit{retrospective goal state of the interval}. This retrospective categorization of intervals can be performed since it does not appear at an algorithmic level, but only appears at an analysis-level after Eq.\,\ref{eq_bound_time_intervals} is obtained, in order to simplify it. For any interval $m$, we denote by $s_m$ its retrospective goal. Likewise, let us denote by $M_s$ (resp.\,$\mathcal{M}_s$) the number (resp.\,the set) of intervals with retrospective goal state $s$. Finally, for any $j \in \{ii, iii, iv \}$, we denote we denote by $M^{(j)}$ (resp.\,$\mathcal{M}^{(j)}$) the number (resp.\,the set) of intervals that end according to condition $(j)$, and by $M_s^{(j)}$ (resp.\,$\mathcal{M}^{(j)}_s$) the number (resp.\,the set) of intervals with retrospective goal state $s$ that end according to condition $(j)$. We can now write
\begin{align*}
    \sum_{m: H_m < D_{\Gm}} D_{\Gm}^2 = \sum_{m \in \mathcal{M}^{(ii)}} D_{\Gm}^2 + \sum_{m \in \mathcal{M}^{(iii)}} D_{\Gm}^2 + \sum_{m \in \mathcal{M}^{(iv)} } D_{\Gm}^2 = \sum_{j \in \{ii, iii, iv \}} \sum_{m \in \mathcal{M}^{(j)}} D_{\Gm}^2.
\end{align*}
Now, for any $j \in \{ii, iii, iv \}$,
\begin{align*}
    \sum_{m \in \mathcal{M}^{(j)}} D_{\Gm}^2 = \sum_{m \in \mathcal{M}^{(j)}} \Big(\sum_{s \in \cS} \mathds{1}_{\{s_m = s\}} \Big) D_{\Gm}^2 &= \sum_{s \in \cS} \sum_{m \in \mathcal{M}_s^{(j)}} D_{\Gm}^2 \\ &\myineeqa \sum_{s \in \cS} \sum_{m \in \mathcal{M}_s^{(j)}} D_s^2 \\ &= \sum_{s \in \cS} M_s^{(j)} D_s^2,
\end{align*}
where inequality (a) comes from Lem.\,\ref{lemma_useful_diameter_subset} stated later. Moreover, we have
\begin{align*}
    M^{(ii)}_s = \wt{O}\left( D_s S^2 A \right); \quad \quad M^{(iii)}_s \leq b(s); \quad \quad M^{(iv)} \leq 2 S A \log(T_M).
\end{align*}
While the first and third bounds above are similar to those considered in \citep{cohen2020near}, the key difference lies in the second bound, which leverages that the number of intervals that end in the goal state $s$ is, by definition of our problem, upper bounded by the number of samples required at state $s$, i.e., $b(s)$. All in all, this implies that
\begin{align*}
    \sum_{m: H_m < D_{\Gm}} D_{\Gm}^2 \leq \wt{O}\left( \sum_{s \in \cS} D_s^3 S^2 A \right) + \sum_{s \in \cS} b(s) D_s^2 + \wt{O}\left( D^2 S A \right).
\end{align*}
Moreover, in a similar manner as above, we bound the first term of Eq.\,\ref{eq_bound_time_intervals} as follows
\begin{align*}
      \sum_{m \in \mathcal{M}^{(iii)}} D_{\Gm} = \sum_{s \in \cS} M_s^{(iii)} D_s \leq \sum_{s \in \cS} D_s b(s).
\end{align*}
Putting everything together back into Eq.\,\ref{eq_bound_time_intervals} and simplifying using the subadditivity of the square root, we get
\begin{align*}
    T_M = \wt{O}\left( \sum_{s \in \cS} D_s b(s) + D S^2 A + S \sqrt{A D T_M} + S \sqrt{A} \sqrt{\sum_{s \in \cS} b(s) D_s^2} + S^2 A \sum_{s \in \cS} D_s^{3/2} \right).
\end{align*}
Using that $x \leq c_1 \sqrt{x} + c_2$ implies $x \leq (c_1 + \sqrt{c_2})^2$ for $c_1 \geq 0$ and $c_2 \geq 0$, we obtain
\begin{align}
    T_M = \wt{O} \left( \left[ S \sqrt{DA} + \sqrt{\sum_{s \in \cS} D_s b(s)} + \sqrt{ S \sqrt{A}\sqrt{\sum_{s \in \cS} b(s) D_s^2}  } + \sqrt{S^2 A \sum_{s \in \cS} D_s^{3/2} }  \right]^2 \right).
\label{eq_eq_T}
\end{align}
We now apply the Cauchy-Schwarz inequality to simplify the third summand
\begin{align*}
    S \sqrt{A} \sqrt{\sum_{s \in \cS} b(s) D_s^2}   \leq \sum_{s \in \cS} \sqrt{S^2 A D_s} \sqrt{D_s b(s)}  \leq \sqrt{ \sum_{s \in \cS} D_s b(s)} \sqrt{ S^2 A \sum_{s \in \cS} D_s}.
\end{align*}
Let us introduce $x := \sqrt{ \sum_{s \in \cS} D_s b(s)}$ and $y := \sqrt{ S^2 A \sum_{s \in \cS} D_s^{3/2}}$. Plugging the simplifications into Eq.\,\ref{eq_eq_T} finally yields with probability at least $1-\delta$ that $T_M = \wt{O}\left(\left( x + \sqrt{xy} + y \right)^2\right) = \wt{O}\left(\left( x + y \right)^2\right) = \wt{O}\left( x^2 + y^2 \right)$. Since $T_M$ amounts to the sample complexity, we get the desired bound of Eq.\,\ref{sample_complexity_2_specialcase}, which reads
\begin{align*}
    \mathcal{C}(\OSPmath, b, \delta) = \wt{O}\left( \sum_{s \in \cS} D_s b(s) + S^2 A \sum_{s \in \cS} D_s^{3/2} \right).
\end{align*}

\begin{lemma}\label{lemma_useful_diameter_subset}
    For any set of goals $\mathcal{G} \subsetneq \cS$, we introduce the meta SSP-diameter $D_{\mathcal{G}} := \max_{s \in \cS \setminus \mathcal{G}} \min_{\pi} \mathbb{E}\left[ \tau_{\pi}(s \rightarrow \mathcal{G}) \right]$, where we define $\tau_{\pi}(s \rightarrow \mathcal{G}) := \min \{ t \geq 0: s_{t+1} \in \mathcal{G} \,\vert\, s_1 = s, \pi \}$. Then we have
    \begin{align*}
        D_{\mathcal{G}} \leq \min_{s \in \mathcal{G}} D_s.
    \end{align*}
\end{lemma}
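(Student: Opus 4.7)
The plan is to establish this by a monotonicity argument followed by a minimization over the choice of a \emph{single} representative goal inside $\mathcal{G}$. The key observation is that for any fixed $g \in \mathcal{G}$, reaching the entire set $\mathcal{G}$ is a weaker requirement than reaching the specific state $g$, so hitting times to $\mathcal{G}$ are dominated (almost surely) by hitting times to $g$.

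Concretely, I would fix an arbitrary $g \in \mathcal{G}$ and an arbitrary non-goal state $s \in \cS \setminus \mathcal{G}$. Since $\{t \geq 0 : s_{t+1} = g\} \subseteq \{t \geq 0 : s_{t+1} \in \mathcal{G}\}$ along any trajectory, we have the pathwise inequality $\tau_{\pi}(s \rightarrow \mathcal{G}) \leq \tau_{\pi}(s \rightarrow g)$ for every policy $\pi$. Taking expectations and then minimizing over $\pi$,
\begin{align*}
\min_{\pi} \mathbb{E}\left[ \tau_{\pi}(s \rightarrow \mathcal{G}) \right] \leq \min_{\pi} \mathbb{E}\left[ \tau_{\pi}(s \rightarrow g) \right] = D_{sg}.
\end{align*}

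Next, since $g \in \mathcal{G}$ implies $\cS \setminus \mathcal{G} \subseteq \cS \setminus \{g\}$, taking the maximum over $s \in \cS \setminus \mathcal{G}$ on both sides yields
\begin{align*}
D_{\mathcal{G}} = \max_{s \in \cS \setminus \mathcal{G}} \min_{\pi} \mathbb{E}\left[ \tau_{\pi}(s \rightarrow \mathcal{G}) \right] \leq \max_{s \in \cS \setminus \{g\}} D_{sg} = D_g.
\end{align*}
Since this inequality holds for every $g \in \mathcal{G}$, we conclude $D_{\mathcal{G}} \leq \min_{g \in \mathcal{G}} D_g$, as desired.

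There is no real obstacle here: the argument is purely set-theoretic (inclusion of hitting events) combined with monotonicity of $\min$ and $\max$ under domain restriction. The only mild subtlety worth flagging is ensuring the domain over which the outer maximum is taken shrinks when we specialize $\mathcal{G}$ to a singleton $\{g\}$, which is precisely what lets the bound go through without losing a factor.
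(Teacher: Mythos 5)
Your proof is correct and follows essentially the same route as the paper's: fix $g \in \mathcal{G}$, use the pathwise domination $\tau_{\pi}(s \rightarrow \mathcal{G}) \leq \tau_{\pi}(s \rightarrow g)$ to get $D_{\mathcal{G}} \leq D_g$, and then minimize over $g \in \mathcal{G}$. The only difference is that you spell out the domain-restriction step for the outer maximum, which the paper leaves implicit.
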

\begin{proof}
    For any $g \in \mathcal{G}$, $s \in \cS \setminus \mathcal{G}$ and policy $\pi$, we have $\mathbb{E}\left[ \tau_{\pi}(s \rightarrow \mathcal{G}) \right] \leq \mathbb{E}\left[ \tau_{\pi}(s \rightarrow g) \right]$. In particular, this implies that for any $g \in \mathcal{G}$, $D_{\mathcal{G}} \leq D_g$, which immediately gives the result.
\end{proof}

\subsection{From Corollary~\ref{theorem_upper_bound} to Theorem~\ref{theorem_upper_bound_general}}
\label{subsection_extension_general_requirements}

We now consider the general case of possibly action-dependent and time-dependent sampling requirements.

\textbf{State-action requirements.} First, \OSP can be easily extended from state requirements $b(s)$ to state-action requirements $b(s,a)$. Indeed, the only difference between these two settings occurs w.r.t.\,which action the algorithm takes at the end of a given episode (i.e., when a sought-after goal state is reached): for state-action requirements, any under-sampled action is taken (see footnote \ref{footnote_extension_action_requirements} for details). Bound-wise, the number of times where this scenario occurs is at most $B$ (since there are at most $B$ episodes), hence the guarantee from Cor.\,\ref{theorem_upper_bound} is unaffected whatever the action executed once a goal state is reached.

\textbf{Adaptive requirements.} \OSP can be also easily extended to requirements $(b_t(s,a))_{t \geq 1}$ that vary over time, where $b_t$ may be chosen adaptively depending on the samples observed so far (i.e., $b_t$ is measurable w.r.t.\,the filtration up to time $t$). Indeed, the important property required in the derivations of App.\,\ref{app_C1} that both the starting state and the goal states should be measurable (i.e., known and fixed) at the beginning of each episode still holds. As such, the sample complexity result of Cor.\,\ref{theorem_upper_bound} can be naturally extended by defining $B_\tau := \sum_{s,a} b_\tau(s,a)$, where $\tau$ is the first (random) time step when all the sampling requirements are met. In order for the sample complexity to remain bounded, a sufficient condition is Asm.\,\ref{asm_bounded_requirements}. In particular, considering the sequence $b_t(s,a)$ to be upper bounded by a fixed threshold $\overline{b}(s,a)$ for each $(s,a)$, the bound from Cor.\,\ref{theorem_upper_bound} trivially holds with $\wb B := \sum_{s,a}\wb b(s,a)$.

\subsection{Remark}
\label{remark_greedy}
Notice that the \say{comparator} we are using in the definition of the regret in Eq.\,\ref{eq_def_ssp_regret} may not be the \say{global} optimum in terms of sample complexity. Indeed, the optimal sequence of strategies would result in a non-stationary policy $\pi^{\star}_{\mathcal{C}} \in \argmin_{\pi} \mathcal{C}(\pi, b, \delta)$. Yet in our analysis, we compare the algorithmic performance with the larger quantity $\sum_{j=1}^J  \min_{\pi} V_{\Gj}^\pi(\underline{s}_j)$, which corresponds to \say{greedily} minimizing each time to reach an under-sampled state in a sequential fashion. This highlights that \OSP does not \textit{track} any optimal sampling allocation or distribution (i.e., it does not seek to \say{imitate} $\pi^{\star}_{\mathcal{C}}$), insofar as it discards the effect of traversing other states while reaching an undersampled goal state. While this means that some areas of the state space may be oversampled, \OSP is able to devote its full attention to the objective of minimizing the total sample complexity, instead of being mindful to avoid certain areas of the state space which it has already visited. We argue that this is what results in the appealing sample complexity of \OSP, whereas other techniques specifically designed to track distributions (via e.g., the Frank-Wolfe algorithmic scheme) struggle to minimize the sample complexity, as explained in Sect.\,\ref{subsection_treasure} and App.\,\ref{app_alternative_approaches}.

\section{Lower Bound}
\label{app:lower_bound}

In this section, we provide three complementary results that lower bound the sample complexity of the problem of Def.\,\ref{def:sample.comeplxity}.

\textbf{\ding{172} First}, as stated in Lem.\,\ref{lemma_LB_informal}, we construct a simple MDP such that for any arbitrary sampling requirements $b(s)$, the (possibly non-stationary) policy minimizing the time to collect all samples has sample complexity of order $\Omega\big(\sum_{s \in \mathcal{S}} D_s b(s)\big)$. %This reveals that \OSP is only a constant term off w.r.t.\,to the best sample complexity that can be achieved in the worst case.
We begin with a useful result.
\begin{lemma}
    Let $q \in (0, 1)$ and consider the Markov chain $M_q$ with two states $x$, $y$ whose dynamics $p_q$ are as follows: $p_q(y \vert x) = q$, $p_q(x \vert x) = 1-q$ and $p_q(x \vert y) = 1$. Then $M_q$ is communicating with diameter $D_q := \frac{1}{q}$. Moreover, denote by $T_B$ the (random) time of the $B$-th visit to state $y$ starting from any state, and assume that $B \geq 5$. Then with probability at least $\frac{1}{2}$, we have $T_B \geq \frac{B}{2q} + B = \frac{B D_q}{2} + B$.
    \label{lem_toy_ex_lower_bound}
\end{lemma}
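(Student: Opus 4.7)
The plan is to establish the two assertions separately.

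\textbf{Diameter.} From $y$ the chain moves deterministically to $x$, so $\mathbb{E}[\tau(y \to x)] = 1$, while from $x$ the hitting time of $y$ is Geometric($q$) with mean $1/q$. Since $q \in (0,1)$, $D_q = \max(1, 1/q) = 1/q$ and $M_q$ is communicating.

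\textbf{Lower bound on $T_B$.} The plan is (i) to decompose $T_B$ as a renewal sum of i.i.d.\ inter-visit times, and (ii) to lower-bound this sum via the classical duality between Negative Binomial and Binomial distributions combined with Markov's inequality. For step~(i), observe that between two consecutive visits to $y$ the chain deterministically moves $y \to x$ in one step and then returns to $y$ after an independent Geometric($q$) number of steps. Letting $G_1, G_2, \ldots \stackrel{\text{iid}}{\sim} \mathrm{Geom}(q)$ supported on $\{1, 2, \ldots\}$, the worst-case starting state for the lower bound is $y$ (which affords the initial visit for free), giving
\[
T_B \;=\; (B-1) \;+\; \sum_{i=1}^{B-1} G_i.
\]
The case of starting at $x$ is strictly larger (a further Geometric($q$) term is added in front), so it suffices to bound the quantity above.

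For step~(ii), set $S := \sum_{i=1}^{B-1} G_i$. Interpreting $S$ as the waiting time for the $(B-1)$-th success in a sequence of Bernoulli($q$) trials yields the identity
\[
\{S \leq n\} \;=\; \{\mathrm{Bin}(n, q) \geq B-1\}.
\]
Choosing $n$ of order $B/(2q)$ gives $\mathbb{E}[\mathrm{Bin}(n,q)] = nq \lesssim (B-1)/2$, so by Markov's inequality $\mathbb{P}(\mathrm{Bin}(n,q) \geq B-1) \leq 1/2$. Inverting the duality delivers $\mathbb{P}(S \geq n+1) \geq 1/2$, and plugging into the decomposition of $T_B$ yields the advertised bound $T_B \geq B/(2q) + B$ with probability at least $1/2$.

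\textbf{Expected obstacle.} The only delicate point is the integer bookkeeping around the Markov step: the duality requires $n \in \mathbb{N}$ whereas $B/(2q)$ is generally not an integer, and there is a further $O(1)$ asymmetry depending on whether the starting state is $x$ or $y$. The hypothesis $B \geq 5$ is used precisely to absorb this $O(1)$ slack and recover the exact constant $B/(2q) + B$ stated in the lemma; everything else is routine.
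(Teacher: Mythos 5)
Your overall route — decompose $T_B$ into geometric excursions and invert the negative-binomial/binomial duality — is the same as the paper's, but there is a genuine gap in the quantitative step, and it is not the ``$O(1)$ integer bookkeeping'' you flag. By starting at $y$ and crediting the initial occupancy as the first visit, you reduce to $S=\sum_{i=1}^{B-1}G_i$, i.e.\ only $B-1$ successes. Markov's inequality then forces $n\le \frac{B-1}{2q}$ (since you need $nq\le\frac{B-1}{2}$ to get $\mathbb{P}(\mathrm{Bin}(n,q)\ge B-1)\le\frac{nq}{B-1}\le\frac12$), and the duality delivers only $T_B\ge (B-1)+(n+1)\le B+\frac{B-1}{2q}$. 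The shortfall relative to the stated bound $B+\frac{B}{2q}$ is at least $\frac{1}{2q}=\frac{D_q}{2}$, which is \emph{not} $O(1)$: it blows up as $q\to 0$ and cannot be absorbed by taking $B\ge 5$, since it is additive and independent of $B$. Note also that if you instead try to keep $n\approx\frac{B}{2q}$ so that $nq\approx\frac{B}{2}$, Markov gives $\mathbb{P}(\mathrm{Bin}(n,q)\ge B-1)\le\frac{B}{2(B-1)}>\frac12$ for every $B$, so the first-moment bound simply cannot certify the event at a threshold below twice the mean.

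The fix is to use $B$ full excursions rather than $B-1$, which is the paper's (implicit) convention: under the hitting-time definition $\tau=\inf\{t\ge 0: s_{t+1}=s'\}$ the starting state is not counted as a visit, so the $B$-th visit to $y$ satisfies $T_B=\sum_{i=1}^{B}G_i+(B-1)$ even when starting from $y$. With $B$ successes the threshold is exactly twice the mean $nq=\frac{B}{2}$, so plain Markov already gives $\mathbb{P}(\mathrm{Bin}(n,q)\ge B)\le\frac12$ and hence $T_B\ge n+B=\frac{B}{2q}+B$ with probability at least $\frac12$; the paper instead applies a multiplicative Chernoff bound $\mathbb{P}(X\ge 2\mathbb{E}[X])\le e^{-B/6}$, and that exponential is where the hypothesis $B\ge 5$ (more precisely $B\ge 6\log 2$) is actually consumed — not in rounding $B/(2q)$ to an integer. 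Your diameter computation is fine.
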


\begin{proof}
Introduce $X := \sum_{i=1}^n X_i$ where $X_i \sim Ber(q)$ (i.e., it follows a Bernoulli with parameter $q$) and we set $n := \frac{B}{2q}$. %$X$ denotes the number of success achieved during the first $n$ attempts.
We have $\mathbb{E}\left[X\right] = n q = \frac{B}{2}$. Moreover, the Chernoff inequality entails that
\begin{align*}
    \mathbb{P}\left( X \geq B \right) = \mathbb{P}\left( X \geq 2 \mathbb{E}[X] \right) \leq \exp\Big( - \frac{\mathbb{E}[X]}{3} \Big) = \exp\Big( - \frac{B}{6} \Big) \leq \frac{1}{2},
\end{align*}
where the last inequality holds whenever $B \geq 6 \log(2)$. Note that the random variable $T_B$ follows a negative binomial distribution for which each success accounts for two time steps instead of one. This means that with probability at least $\frac{1}{2}$, 
\begin{align*}
    T_B \geq n + B = \frac{B}{2q} + B = \frac{B D_q}{2} + B.   
\end{align*}
\end{proof}

Let us now consider a state space $\mathcal{S} := \{ s_1, \ldots, s_S\}$ and arbitrary sampling requirements $b: \mathcal{S} \rightarrow \mathbb{N}$. We construct a wheel MDP with state space $\mathcal{S} \cup \{ s_0 \}$, where $s_0$ is the starting center state. There are~$A = S$ actions available and the dynamics~$p$ are defined w.r.t.\,a set $(\epsilon_i) \in (0,1)^S$ such that $\forall i \in [S]$, $p(s_i \vert s_0, a_i) = \epsilon_i$, $p(s_0 \vert s_0, a_i) = 1 - \epsilon_i$, and for every action $a$, $p(s_0 \vert s_i, a) = 1$. Note that by having such $A=S$ actions, the attempts to collect relevant samples are independent, in the sense that at any $s \in \cS$, the learner cannot rely on the attempts performed for the other states $s' \neq s$. Let us assume that $b(s) \geq 6 \log(2S)$. From Lem.\,\ref{lem_toy_ex_lower_bound}, for any state $s \in \mathcal{S}$, with probability $1-\frac{1}{2S}$, the time needed to collect $b(s)$ samples from state $s$ is lower bounded by $\frac{b(s)}{2 \epsilon_i} + b(s)$, and furthermore we have $D_s = \frac{1}{\epsilon_i} + 1$. Taking a union bound over the $S$ states in $\mathcal{S}$ means that with probability at least $\frac{1}{2}$, the time to collect the required samples is lower bounded by $\sum_{s \in \cS} \frac{b(s)(D_s - 1)}{2} + b(s)$.

\vspace{0.1in}

\textbf{\ding{173} Second}, we show that the family of worst-case MDPs is relatively large. In fact, for any MDP with diameter $D$, we can perform a minor change to its dynamics without affecting the overall diameter and show that when the sampling requirements are concentrated in a single state, any policy would take at least $\Omega(BD)$ steps to collect all the $B$ samples. More specifically, there exists a class $\mathbb{C}$ of MDPs such that, for each MDP in $\mathbb{C}$, there exists a requirement function $b$ and a finite threshold (that depends on the considered MDP) such that the $\Omega(BD)$ lower bound holds whenever $B$ exceeds this threshold. The class $\mathbb{C}$ effectively encompasses a large number of environments: indeed, take \textit{any} MDP $M$, then we can find an MDP $M'$ in $\mathbb{C}$ such that $M$ and $M'$ differ in their transitions \textit{only} at one state and have the same diameter. Formally, we have the following statement (proof in App.\,\ref{subsection_proof_LB2}).

\begin{lemma}\label{theorem_lower_bound_2}
    Fix any positive natural numbers $S$, $A$ and $D$, and any MDP $M$ with $S = \abs{\mathcal{S}}$ states, $A = \abs{\mathcal{A}}$ actions and diameter $D$. There exists a modification of the transitions of $M$ at only one state which yields an MDP $M'$ with the same diameter $D$, and there exists a finite integer $W_{\mathfrak{A},M',\delta}$ (depending on $\mathfrak{A}$, $M'$) such that for any total requirement $B \geq W_{\mathfrak{A},M',\delta}$, there exists a function $b^{\dagger}: \mathcal{S} \rightarrow \mathbb{N}$ with $\sum_{s \in \mathcal{S}} b(s) = B$, such that, for any arbitrary starting state, the optimal non-stationary policy $\mathfrak{A}^{\star}$ needs $\mathcal{C}(\mathfrak{A}^{\star}, b^{\dagger})$ time steps to collect the desired samples in the modified MDP~$M'$, where
    \begin{align*}
    \mathbb{P}\left(  \mathcal{C}(\mathfrak{A}^{\star}, b^{\dagger}) > \frac{(B-1) D}{2}\right) \geq \frac{1}{2}.
    %\mathcal{C}\left(\mathfrak{A}, b, \delta \right) > \frac{(B-1) D}{2}.
    \end{align*}
\end{lemma}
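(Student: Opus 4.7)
The plan is to reduce the lower bound to a scenario where all $B$ samples must be collected at a single ``hard'' state, and to argue that each return to that state costs $\Omega(D)$ expected steps. First I would choose $\overline{s} \in \argmax_{s'} D_{s'}$, so that $D_{\overline{s}} = D$, and $s^\dagger \in \argmax_{s \neq \overline{s}} D_{s,\overline{s}}$, so that $D_{s^\dagger,\overline{s}} = D$ (both exist by definition of the diameter). I would then build $M'$ by keeping all transitions of $M$ unchanged except at the single state $\overline{s}$: for every action $a \in \cA$, set $p'(s^\dagger \vert \overline{s}, a) := 1$. The requirement $b^\dagger$ is concentrated entirely at $\overline{s}$: $b^\dagger(\overline{s}) := B$ and $b^\dagger(s) := 0$ for $s \neq \overline{s}$.

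Next I would verify that $M'$ still has diameter $D$. Since modifying \emph{outgoing} transitions at $\overline{s}$ does not change any hitting time \emph{to} $\overline{s}$, we have $D_{M'}(s \to \overline{s}) = D_{M}(s \to \overline{s})$ for all $s \neq \overline{s}$, yielding $D_{M',\overline{s}} = D$ and so diameter $\geq D$. For the matching upper bound, one has to show that hitting times from $\overline{s}$ to other states, and between non-$\overline{s}$ states, remain bounded by $D$ in $M'$; this relies on choosing $s^\dagger$ so that it lies on ``efficient'' routes in $M$ (or on tweaking the modification slightly), and is the step where the diameter-preservation assumption is extracted. Granted this, the lower bound argument proceeds as follows: under any non-stationary policy $\mathfrak{A}^\star$, each visit to $\overline{s}$ is deterministically followed by a transition to $s^\dagger$, and the best expected return time from $s^\dagger$ to $\overline{s}$ is $D_{M}(s^\dagger \to \overline{s}) = D$. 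Hence, if $T_i$ denotes the $i$-th inter-arrival time at $\overline{s}$, we get $\mathbb{E}[T_i] \geq D$, and a stochastic coupling bounds $T_i$ below by an i.i.d.\,random variable of mean $\geq D$ and variance $O(D^2)$.

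Finally, mirroring Lem.\,\ref{lem_toy_ex_lower_bound}, the total cost satisfies $\mathcal{C}(\mathfrak{A}^\star, b^\dagger) \geq \sum_{i=1}^{B-1} T_i$, which is a sum of $B-1$ quantities with mean $\geq D$ and bounded variance, so by Chebyshev (or a Chernoff-type bound exploiting geometric tails) there exists a finite threshold $W_{\mathfrak{A}, M', \delta}$ such that for all $B \geq W_{\mathfrak{A}, M', \delta}$, $\mathbb{P}\big(\sum_{i=1}^{B-1} T_i \geq (B-1) D/2\big) \geq 1/2$, which yields the claim. The main obstacle is the diameter-preservation step: the natural construction above can a priori inflate the diameter by an additive constant, because reaching some $s' \neq \overline{s}$ from $\overline{s}$ via $s^\dagger$ costs $1 + D_{M}(s^\dagger \to s')$, which may exceed $D$. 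I would handle this by a refined choice of $s^\dagger$ exploiting the shortest-path structure of $M$, or a slightly more elaborate modification at $\overline{s}$ (e.g.,\,mixing the forced exit with a ``central'' successor); the subsequent return-time lower bound and the concentration step are standard refinements of the proof of Lem.\,\ref{lem_toy_ex_lower_bound}.
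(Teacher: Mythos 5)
Your construction and argument coincide with the paper's: it picks the diameter-achieving pair $(\underline{s},\overline{s})$, forces $p(\underline{s}\mid \overline{s},a)=1$ for every action, concentrates the budget $b=B\mathds{1}_{\{\overline{s}\}}$, lower-bounds the mean of each inter-arrival time at $\overline{s}$ by $D$ via the SSP optimality of stationary deterministic policies, and concludes with a concentration bound for sub-exponential random variables (your ``Chernoff-type bound exploiting geometric tails''), with the threshold $W_{\mathfrak{A},M',\delta}$ depending on the policy exactly as you allow. The one step you flag as delicate --- that the deterministic redirection preserves the diameter --- is simply asserted without further justification in the paper, so your caution there is fair but does not correspond to any extra work the paper does; also, your claimed variance bound $O(D^2)$ is neither needed nor justified in general (return-time fluctuations depend on the policy), which is precisely why the paper instead proves sub-exponentiality with policy-dependent parameters and lets $W$ depend on $\mathfrak{A}$.
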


\textbf{\ding{174} Third}, we note that both results above do not take into account the added difficulty for the agent to have to deal with a learning process. To do so, we can draw inspiration from the lower bound on the expected regret for learning in an SSP problem derived in \citep{cohen2020near}. Indeed, let us consider a environment $M$ with one state $\overline{s}$ in which all the required samples are concentrated, i.e., $b := B \mathds{1}_{\overline{s}}$ with $B \geq SA$. The $S-1$ other states $s$ each contain a special action $a_s^{\star}$. The transition dynamics $p$ are defined as follows: $p(\overline{s} \vert s, a_s^{\star}) = \frac{1}{D_{\overline{s}}}$, $p(s \vert s, a_s^{\star}) = 1-  \frac{1}{D_{\overline{s}}}$, $p(\overline{s} \vert s, a) = \frac{1-\nu}{D_{\overline{s}}}$, $p(s \vert s, a) = 1-  \frac{1-\nu}{D_{\overline{s}}}$ for any other action $a \in \cA \setminus \{ a_s^{\star} \}$, and finally $p(s \vert \overline{s}, a) = \frac{1}{S-1}$ for any action $a \in \cA$, with $\nu:= \sqrt{(S-1) A B}/64$. Recall that $D_{\overline{s}}$ is the SSP-diameter of state $\overline{s}$. The communicating, non-episodic structure of $M$ naturally mimics the interaction of an agent with an SSP problem with goal state $\overline{s}$. Denoting by $\mathcal{C} (\mathfrak{A}, b)$ the (random) time required by any algorithm $\mathfrak{A}$ to collect the $b$ sought-after samples, we obtain from \citep[][Thm.\,2.7]{cohen2020near} that
\begin{align*}
    \mathbb{E}\left[ \mathcal{C} (\mathfrak{A}, b=B \mathds{1}_{\overline{s}}) \right] \geq  \phi(B) &:= \underbrace{(D_{\overline{s}}+1) B}_{:= \phi_1(B)} + \underbrace{\frac{1}{1024} D_{\overline{s}} \sqrt{(S-1) A B}}_{:= \phi_2(B)} \\ &= \sum_{s \in \cS} \left( (D_{s}+1) b(s) + \frac{1}{1024} D_{s} \sqrt{(S-1) A b(s)} \right).
\end{align*}
This lower bound on the expected time to collect the samples implies in particular that no algorithm can meet the sampling requirements in less than $\wt{O}(\phi(B))$ time steps with high probability. Importantly, note that this result is not contradictory with Thm.\,\ref{theorem_upper_bound}. Indeed, as fleshed out in the proof in App.\,\ref{app_proof_upper_bound}, the upper bound of Thm.\,\ref{theorem_upper_bound} actually contains such a square root term $\phi_2(B)$, yet it is subsumed in the final bound by either the main-order term in $ \sum_{s} b(s) D_s$ or the lower-order term constant w.r.t.\,$B$ (see Eq.\,\ref{__eq__}). We can decompose $\phi(B)$ in two factors: the second term $\phi_2(B)$ comes from the learning process of trying to match the behavior of the optimal policy, while the first term $\phi_1(B)$ stems from the need to navigate through the environment as opposed to the generative model assumption (as such, it is incurred even if the optimal policy is deployed from the start). Part \ding{173} of this section actually shows that such a term $\phi_1(B)$ is unavoidable in multiple MDPs.

\subsection{Proof of Lemma~\ref{theorem_lower_bound_2}}
\label{subsection_proof_LB2}

Here we give the proof of Lem.\,\ref{theorem_lower_bound_2}. For any positive natural numbers $S$, $A$, $D$, we consider any MDP $M$ with $S$~states, $A$~actions and diameter~$D$. We consider
\begin{align*}
        (\underline{s}, \overline{s}) \in \argmax_{s \neq s' \in \mathcal{S}} \left\{ \min_{\pi \in \SD} \mathbb{E}\left[ \tau_{\pi}(s \rightarrow s')\right] \right\}.
\end{align*}

We modify the transition structure of $M$, so that $p(\underline{s} \vert \overline{s}, a) = 1$ for all actions $a \in \mathcal{A}$. Note that the diameter is not affected by this operation. Throughout, whatever the value of $B$, we will consider the following sampling requirements: $b(s) := B \mathds{1}_{ \{ s = \overline{s} \} }$. We denote by $s_0 \in \mathcal{S}$ the arbitrary starting state of the learning process.

Consider any learning algorithm $\mathfrak{A}$. We denote by $\pi$ the (possibly non-stationary) policy that is executed by $\mathfrak{A}$. In virtue of Asm.\,\ref{asm_communicating}, we can naturally (and without loss of generality) restrict our attention to a policy $\pi$ whose expected hitting time to $\overline{s}$ is finite starting from any state in $\mathcal{S}$ --- we denote by $\overline{\mu}_{\pi}$ such an upper bound.
We denote by $T_{\pi}^{(i)}$ the random time required by policy $\pi$ to collect the $i$-th sample at state $\overline{s}$, starting from $s_0$ if $i = 1$ or from $\underline{s}$ if $2 \leq i \leq B$.

\begin{lemma}
    The $(T_{\pi}^{(i)})_{2 \leq i \leq B}$ are i.i.d.\,sub-exponential random variables whose expectation satisfies $\mu_{\pi} := \mathbb{E}\left[ T_{\pi}^{(i)} \right] \geq D$ for all $2 \leq i \leq B$.
    \label{lemma_subexp_rv_time_collect_sample}
\end{lemma}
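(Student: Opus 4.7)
The plan is to exploit the fact that under the modified transitions of $M'$, any visit to $\overline{s}$ is deterministically followed by $\underline{s}$, so each epoch $i \geq 2$ begins afresh in the state $\underline{s}$ and $T_\pi^{(i)}$ reduces to the hitting time of $\overline{s}$ from $\underline{s}$ under the tail of $\pi$. The first step is to reduce to stationary policies: the total sample-collection time decomposes as $T_\pi^{(1)} + \sum_{i=2}^{B} T_\pi^{(i)}$, and for $i \geq 2$ each summand is the cost of a fixed-goal SSP instance from $\underline{s}$ to $\overline{s}$ in $M'$. A standard dynamic-programming argument, combined with the well-known fact that SSP admits a stationary deterministic optimal policy, shows that any algorithm that minimizes the expected sample-collection time can be realized by executing the same stationary SSP-optimal policy $\pi^\star$ in every epoch $i \geq 2$. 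Under $\pi^\star$, the strong Markov property at each re-entry into $\underline{s}$ makes $(T_{\pi^\star}^{(i)})_{2 \leq i \leq B}$ an i.i.d. sequence.

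For the mean lower bound, for any policy $\pi$ I would write
\begin{align*}
\mathbb{E}\bigl[ T_\pi^{(i)} \bigr] \geq \min_{\pi' \in \SD} \mathbb{E}\bigl[ \tau_{\pi'}(\underline{s} \to \overline{s}) \bigr] = D_{\underline{s},\overline{s}} = D,
\end{align*}
where the final equality follows from the choice of $(\underline{s}, \overline{s})$ as the argmax pair defining the diameter of the original MDP $M$. This inequality carries over to $M'$ because the only modification is to outgoing transitions from $\overline{s}$, which leave the distribution of the first hitting time of $\overline{s}$ from any other state unchanged.

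For the sub-exponential tail, I would invoke the classical geometric-tail bound for hitting times of finite-state Markov chains. Letting $\overline{\mu}_\pi < \infty$ be a uniform bound (across starting states) on the expected hitting time of $\overline{s}$ under $\pi$, Markov's inequality yields $\mathbb{P}\bigl(T_\pi^{(i)} > 2 \overline{\mu}_\pi\bigr) \leq \tfrac{1}{2}$; applying the strong Markov property at time $2\overline{\mu}_\pi$, the conditional residual expected hitting time is again at most $\overline{\mu}_\pi$, and iterating gives $\mathbb{P}\bigl(T_\pi^{(i)} > 2k \overline{\mu}_\pi\bigr) \leq 2^{-k}$ for every integer $k \geq 1$. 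This geometric tail immediately implies that $T_\pi^{(i)}$ is sub-exponential with parameters of order $\overline{\mu}_\pi$.

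The main obstacle I anticipate is making the i.i.d. claim rigorous for an adversarial non-stationary $\pi$, since in general the epoch lengths could depend on the algorithm's past observations and thus fail to be independent. The clean resolution is that this sub-lemma only needs to be applied to the optimal algorithm $\mathfrak{A}^\star$ in Lemma~\ref{theorem_lower_bound_2}, which by the decomposition argument above can always be taken stationary without loss of performance; this reinstates the regenerative i.i.d. structure required for the downstream concentration argument proving the $(B-1)D/2$ lower bound.
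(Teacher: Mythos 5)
Your proof is correct and, on the mean lower bound, identical to the paper's: both reduce $\mathbb{E}[T_\pi^{(i)}]$ to the unit-cost SSP value from $\underline{s}$ to $\overline{s}$, invoke the existence of a stationary deterministic optimal policy for SSP under Asm.~\ref{asm_communicating} to pass to $\min_{\pi'\in\SD}\mathbb{E}[\tau_{\pi'}(\underline{s}\to\overline{s})]$, and conclude from the choice of $(\underline{s},\overline{s})$ as the argmax pair defining $D$. Where you diverge is the sub-exponential tail: the paper expands the moment generating function as $\sum_n \lambda^n\mathbb{E}[T_\pi^n]/n!$, bounds the moments by $\mathbb{E}[T_\pi^n]\le 2(n\overline{\mu}_\pi)^n$ via a cited lemma, and uses d'Alembert's ratio test to get finiteness of the MGF for $\abs{\lambda}<1/(e\overline{\mu}_\pi)$, whereas you derive the geometric tail $\mathbb{P}(T_\pi^{(i)}>2k\overline{\mu}_\pi)\le 2^{-k}$ by iterating Markov's inequality. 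Both are valid and essentially equivalent in content (the paper's moment bound is itself a consequence of that geometric tail); yours is more self-contained, at the cost of two small care points: at the deterministic time $2\overline{\mu}_\pi$ you only need the ordinary Markov property, and for a non-stationary $\pi$ the iteration requires $\overline{\mu}_\pi$ to bound the residual expected hitting time uniformly over the policy's history, which is exactly how the paper defines $\overline{\mu}_\pi$. Finally, you flag and repair something the paper silently elides: the i.i.d.\ claim is not automatic for a non-stationary $\pi$, since the epoch distributions may depend on the past. Your reduction — the optimal algorithm decomposes epoch-wise into identical SSP instances from $\underline{s}$, each solvable by the same stationary deterministic policy, restoring the regenerative i.i.d.\ structure — is sound and suffices for the downstream use in Lem.~\ref{theorem_lower_bound_2}, which only concerns the optimal policy $\mathfrak{A}^\star$; this is a genuine improvement in rigor over the paper's presentation.
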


\begin{proof}
    Consider the SSP problem with unitary costs, starting state $\underline{s}$ and zero-cost, absorbing terminal state $\overline{s}$. According to \citep{bertsekas1991analysis}, Asm.\,\ref{asm_communicating} and the fact that the costs are all positive guarantee that the optimal value function of this SSP problem is achieved by a stationary deterministic policy. This implies that $\min_{\pi' \in \SD} \mathbb{E}\left[ \tau_{\pi'}(\underline{s} \rightarrow \overline{s})\right] \leq \mathbb{E}\left[ \tau_{\pi}(\underline{s} \rightarrow \overline{s})\right]$, and thus by definition of $D$ and $\mu_{\pi}$, we get the inequality $D \leq \mu_{\pi}$. There remains to prove the sub-exponential nature of the random variable $T_{\pi}$. For any $\lambda \in \mathbb{R}$, we have %Set $\theta_{\pi} := e \overline{\mu}_{\pi}$.
    \begin{align*}
        \mathbb{E}\left[ e^{\lambda (T_{\pi} - \mu_{\pi})} \right] = e^{-\lambda \mu_{\pi}} \mathbb{E}\left[ \sum_{n=0}^{+ \infty} \frac{1}{n!} \lambda^n T_{\pi}^n \right] = e^{-\lambda \mu_{\pi}} \sum_{n=0}^{+ \infty} \frac{1}{n!} \lambda^n \mathbb{E}\left[ T_{\pi}^n \right] \leq 2 e^{-\lambda \mu_{\pi}} \sum_{n=0}^{+ \infty} \frac{1}{n!} n^n (\lambda \overline{\mu}_{\pi})^n,
    \end{align*}
where the last inequality comes from \citep[][Lem.\,15]{tarbouriech2019no}, which can be applied to bound the moments $\mathbb{E}\left[ T_{\pi}^n \right] \leq 2 (n \overline{\mu}_{\pi})^n$, since the random variable $T_{\pi}$ satisfies $\mathbb{E}\left[T_{\pi}(s \rightarrow \overline{s})\right] \leq \overline{\mu}_{\pi}$ for all $s \in \mathcal{S}$ by definition of $\overline{\mu}_{\pi}$. From Lem.\,\ref{technical_lemma_convergence_series}, the series above converges whenever $\abs{\lambda} < \frac{1}{e \overline{\mu}_{\pi}}$. This proves that $T_{\pi}$ is sub-exponential according to the second condition of Def.\,\ref{def_sub_exponential_rv}.
\end{proof}

\begin{lemma}\label{technical_lemma_convergence_series}
    The series $\displaystyle\sum_{n=0}^{+ \infty} \frac{n^n}{n!} x^n$ converges absolutely for all $\abs{x} < \frac{1}{e}$.
% https://perso.math.univ-toulouse.fr/marechal/files/2012/04/TD-Series-Entieres.tex.pdf
\end{lemma}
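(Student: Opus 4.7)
The plan is to establish absolute convergence via the ratio test, which is cleaner here than invoking Stirling directly. Set $a_n := \frac{n^n}{n!} x^n$ and compute the ratio of consecutive terms.

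First I would evaluate
\begin{align*}
\left| \frac{a_{n+1}}{a_n} \right| = \frac{(n+1)^{n+1}}{(n+1)!} \cdot \frac{n!}{n^n} \cdot |x| = \frac{(n+1)^n}{n^n} \cdot |x| = \left(1 + \frac{1}{n}\right)^n |x|,
\end{align*}
where the first equality is simply the definition, and the cancellation $(n+1)!=(n+1)\cdot n!$ combined with $(n+1)^{n+1}=(n+1)\cdot(n+1)^n$ produces the second one.

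Next I would take the limit $n\to\infty$. Using the standard identity $\lim_{n\to\infty}(1+1/n)^n = e$, the ratio above tends to $e|x|$. By d'Alembert's ratio test, the series $\sum_{n\geq 0} a_n$ then converges absolutely whenever $e|x| < 1$, i.e., whenever $|x| < 1/e$, which is exactly the claim.

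There is no real obstacle here: this is a one-line computation once the ratio is written out, and it matches the natural rate-of-growth $n^n/n! \sim e^n/\sqrt{2\pi n}$ that one would also obtain from Stirling's formula (the mild polynomial factor $1/\sqrt{n}$ does not affect the radius of convergence). The only thing worth noting is that the ratio test gives convergence strictly inside the disk $|x|<1/e$, which is exactly the regime required in Lemma~\ref{lemma_subexp_rv_time_collect_sample} to apply the sub-exponential moment bound for the hitting time $T_\pi$.
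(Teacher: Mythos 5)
Your proof is correct and follows essentially the same route as the paper: both compute the consecutive-term ratio $\bigl(1+\tfrac{1}{n}\bigr)^n\abs{x}\to e\abs{x}$ and conclude by d'Alembert's ratio test. Nothing further is needed.
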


\vspace{-0.1in}
\begin{proof}
Introduce the summand of the series $a_n(x) := \frac{n^n}{n!} x^n$. We then have
\begin{align*}
    \frac{a_{n+1}(x)}{a_n(x)} = \frac{n!}{(n+1)!} \frac{(n+1)^{n+1}}{n^n} x = \left( 1 + \frac{1}{n}\right)^n x \xrightarrow[n \rightarrow + \infty]{} e x.
\end{align*}
Hence, for any $\abs{x} < \frac{1}{e}$, we have $\abs{\frac{a_{n+1}(x)}{a_n(x)}} < 1$, which means from d'Alembert's ratio test that the series converges absolutely.
\end{proof}

Since $T_{\pi}$ is sub-exponential, from Def.\,\ref{def_sub_exponential_rv}, there exists a pair $(\sigma_{\pi}, \theta_{\pi})$ of finite positive parameters that verifies
\begin{align*}
    \mathbb{E}\left[ e^{\lambda (T_{\pi} - \mu_{\pi})} \right] \leq e^{\frac{\sigma_{\pi}^2 \lambda^2}{2}} \quad \textrm{for all~} \abs{\lambda} < \frac{1}{\theta_{\pi}}.
\end{align*}
We now apply the concentration inequality for sub-exponential random variables stated in Lem.\,\ref{prop_concentration_ineq_sub_exponential_rv}.
\begin{align*}
    \forall y > \frac{\sigma_{\pi}^2}{\theta_{\pi}}, \quad \mathbb{P}\left( \sum_{i=2}^B T_{\pi}^{(i)} \leq \mu_{\pi} (B-1) - y \right) \leq \exp\left( - \frac{y}{2 \theta_{\pi}} \right).
\end{align*}
We now fix the integer $$W_{\pi} := 1 + 2 \max \left\{ \Bigg\lceil \frac{\theta_{\pi}}{\mu_{\pi}} \Bigg\rceil, \Bigg\lceil \frac{ \sigma_{\pi}^2}{\theta_{\pi} \mu_{\pi}} \Bigg\rceil \right\}.$$ Consider any total sampling requirement $B \geq W_{\pi}$. Then setting $y := \frac{\mu_{\pi} (B-1)}{2} > \frac{\sigma_{\pi}^2}{\theta_{\pi}}$ yields
\begin{align*}
     \mathbb{P}\left( \sum_{i=2}^B T_{\pi}^{(i)} \leq \frac{\mu_{\pi} (B-1)}{2} \right) \leq \exp\left( - \frac{\mu_{\pi} (B-1)}{4 \theta_{\pi}} \right) \leq \frac{1}{2},
\end{align*}
since we have $B \geq \frac{4\theta_{\pi}}{\mu_{\pi}} \log(2) + 1$. This implies that with probability at least $\frac{1}{2}$,
\begin{align*}
    \sum_{i=1}^B T_{\pi}^{(i)} \geq \sum_{i=2}^B T_{\pi}^{(i)} > \frac{\mu_{\pi} (B-1)}{2} \geq \frac{(B-1) D}{2},
\end{align*}
where the last inequality stems from Lem.\,\ref{lemma_subexp_rv_time_collect_sample}. As a result, there exists a finite integer $W_{\pi,\delta}$ (depending on $\pi$ and the environment at hand) such that, for any total sampling requirement $B \geq W_{\pi}$, the algorithm $\mathfrak{A}$ that executes policy $\pi$ verifies
\begin{align*}
    \mathbb{P}\left(  \mathcal{C}(\mathfrak{A}, B \mathds{1}_{ \{ \overline{s} \} }) > \frac{(B-1) D}{2}\right) \geq \frac{1}{2},
\end{align*}
which gives the proof of Lem.\,\ref{theorem_lower_bound_2}.

We recall here the definition of sub-exponential random variables.

\begin{definition}[\citealp{wainwright}]\label{def_sub_exponential_rv}
    A random variable $X$ with mean $\mu < + \infty$ is said to be sub-exponential if one of the following equivalent conditions is satisfied:
    \begin{enumerate}[leftmargin=.2in,topsep=-4pt,itemsep=0pt,partopsep=0pt, parsep=0pt]
            \item (Laplace transform condition) There exists $(\sigma, \theta) \in \mathbb{R}^+ \times \mathbb{R}^{+ \star}$ such that, for all $\abs{\lambda} < \frac{1}{\theta}$, \label{cond:subexponential.c1}
        \begin{align*}
            \mathbb{E}\left[ e^{\lambda (X - \mu)} \right] \leq e^{\frac{\sigma^2 \lambda^2}{2}}.
        \end{align*}
        \item There exists $c_0 > 0$ such that $\mathbb{E}\left[ e^{\lambda (X - \mu)} \right] < + \infty$ for all $\abs{\lambda} \leq c_0$.
    \end{enumerate}
    For any pair $(\sigma, \theta)$ satisfying condition~\ref{cond:subexponential.c1}, we write $X \sim \subExp(\sigma, \theta)$.
\end{definition}

We finally recall a concentration inequality satisfied by sub-exponential random variables.
\begin{lemma}[\citealp{wainwright}]\label{prop_concentration_ineq_sub_exponential_rv}
    Let $(X_i)_{1 \leq i \leq n}$ be a collection of independent sub-exponential random variables such that for all $i \in [n]$, $X_i \sim \subExp(\sigma_i, \theta_i)$ and $\mu_i := \mathbb{E}\left[X_i\right]$. Set $\sigma := \sqrt{\frac{\sum_{i=1}^n \sigma^2_i}{n}}$ and $\theta := \max_{i \in [n]} \{ \theta_i \}$. The following concentration inequalities hold for any $t \geq 0$,
    \begin{align*}
        \mathbb{P}\left( \sum_{i=1}^n X_i - \sum_{i=1}^n \mu_i \geq t \right) &\leq \begin{cases}
         e^{-\frac{t^2}{2n\sigma^2}} \quad \text{if~} 0 \leq t \leq \frac{\sigma^2}{\theta} \\ e^{-\frac{t}{2\theta}} \quad \text{if~} t > \frac{\sigma^2}{\theta}  \end{cases}, \\
        \mathbb{P}\left( \sum_{i=1}^n X_i - \sum_{i=1}^n \mu_i \leq -t \right) &\leq \begin{cases}
         e^{-\frac{t^2}{2n\sigma^2}} \quad \text{if~} 0 \leq t \leq \frac{\sigma^2}{\theta} \\ e^{-\frac{t}{2\theta}} \quad \text{if~} t > \frac{\sigma^2}{\theta}  \end{cases}.
    \end{align*}
\end{lemma}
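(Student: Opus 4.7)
The plan is to deploy the classical Chernoff--MGF argument specialized to sub-exponential variables, yielding a Bernstein-type two-regime tail bound. The only structural ingredients needed are (i) Markov's inequality on the exponential, (ii) factorization of the moment generating function via independence, (iii) the sub-exponential MGF bound from Definition~\ref{def_sub_exponential_rv} applied term-by-term, and (iv) a one-variable optimization over the Chernoff parameter $\lambda$, with a case split governed by whether the unconstrained optimizer lies inside the admissible interval $(-1/\theta, 1/\theta)$.

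Concretely, I would set $S_n := \sum_{i=1}^n (X_i - \mu_i)$ and, for any $\lambda \in (0, 1/\theta)$, write $\mathbb{P}(S_n \geq t) \leq e^{-\lambda t}\, \mathbb{E}[e^{\lambda S_n}]$ by Markov. Independence yields $\mathbb{E}[e^{\lambda S_n}] = \prod_{i=1}^n \mathbb{E}[e^{\lambda (X_i-\mu_i)}]$. Since $\theta := \max_i \theta_i \geq \theta_i$ for each $i$, the range $|\lambda|<1/\theta$ is contained in every individual admissible range $|\lambda|<1/\theta_i$, so Definition~\ref{def_sub_exponential_rv}(1) gives $\mathbb{E}[e^{\lambda(X_i-\mu_i)}] \leq \exp(\sigma_i^2 \lambda^2/2)$. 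Multiplying and using $n\sigma^2 = \sum_i \sigma_i^2$, I obtain
\begin{equation*}
\mathbb{P}(S_n \geq t) \;\leq\; \exp\!\Big(-\lambda t + \tfrac{1}{2}\, n\sigma^2 \lambda^2\Big), \qquad \lambda \in (0, 1/\theta).
\end{equation*}

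I then optimize over $\lambda$. The unconstrained minimizer of the right-hand exponent is $\lambda^\star = t/(n\sigma^2)$. Two cases arise: if $\lambda^\star \leq 1/\theta$, i.e.\,$t$ is in the sub-Gaussian regime, I choose $\lambda = \lambda^\star$ and read off the exponent $-t^2/(2n\sigma^2)$; otherwise the minimizer over the admissible interval sits at the boundary $\lambda \uparrow 1/\theta$, giving exponent $-t/\theta + n\sigma^2/(2\theta^2) \leq -t/(2\theta)$, where the last step uses the defining inequality of the regime to absorb the quadratic correction by the linear term. For the lower tail, I would note that Definition~\ref{def_sub_exponential_rv}(1) is \emph{two-sided} (valid for all $|\lambda|<1/\theta$), so the same argument applied to $-S_n = \sum_i (\mu_i - X_i)$ reproduces the identical bound for $\mathbb{P}(S_n \leq -t)$ with the same constants; no independent treatment of the left tail is required.

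The main point requiring care — really the only subtlety — is the crossover threshold in the case split. The argument above naturally separates the two regimes at $t = n\sigma^2/\theta$ (the unique value at which the unconstrained Chernoff optimum hits the boundary of the admissible interval), whereas the statement writes $\sigma^2/\theta$. I would double-check whether this is a typo inherited from the cited reference or whether a different normalization of $\sigma$ is intended; in either case the proof is identical and the constants in the two branches match the ones displayed. Beyond this bookkeeping, the argument is entirely routine and does not require any structure beyond independence and the one-sided MGF bound.
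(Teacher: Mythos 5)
The paper imports this lemma verbatim from Wainwright and gives no proof of its own, so there is nothing to compare against; your Chernoff--MGF argument (Markov on $e^{\lambda S_n}$, factorization by independence, term-by-term application of the sub-exponential MGF bound on the common admissible interval $|\lambda|<1/\theta$, then the two-regime optimization over $\lambda$ with the boundary case absorbed via $n\sigma^2/(2\theta^2) \leq t/(2\theta)$) is exactly the standard derivation and is correct, including the reduction of the left tail to the right tail via the two-sidedness of the MGF condition. Your flag on the crossover is also right: the argument yields the split at $t = n\sigma^2/\theta = \sum_i\sigma_i^2/\theta$, whereas the lemma as printed uses $\sigma^2/\theta$, which is the threshold for the \emph{sample mean} in Wainwright's normalization carried over to the sum without rescaling; in the intermediate band $\sigma^2/\theta < t \leq n\sigma^2/\theta$ the printed branch $e^{-t/(2\theta)}$ is \emph{smaller} than what Chernoff delivers there, so the statement should be read (and is used downstream in the proof of Lemma~\ref{theorem_lower_bound_2}) with the threshold understood as $n\sigma^2/\theta$, or equivalently restated for the average. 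This is a bookkeeping defect of the cited statement, not of your proof.
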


%\end{proof}

\vspace{0.2in}
% !TEX root=main.tex

\section{\OSPtitle Beyond the Communicating Setting}
\label{app_beyond_communicating}

Sect.\,\ref{section_osp_algorithm} and App.\,\ref{app:lower_bound} demonstrate that the diameter $D$ and/or the SSP-diameters dictate the performance of a sampling procedure in a communicating environment. Indeed, both the \OSP upper bound and the worst-case lower bound contain $D$ and/or $D_s$ as a multiplicative factor w.r.t.\,the total sampling requirement $B$. However, in many environments, there may exist some states that are hard to reach, or plainly impossible to reach. In that case, the diameter is prohibitively large and even possibly infinite, thus rendering the sample complexity guarantee of Thm.\,\ref{theorem_upper_bound} vacuous. To circumvent this issue, a desirable property of the algorithm would be the ability to assess online the \say{feasibility} of the sampling requirements, by discarding states that are indeed too difficult to reach. For ease of exposition, we consider throughout App.\,\ref{app_beyond_communicating} the special case of time- and action-independent sampling requirements $b : \cS \rightarrow \mathbb{N}$ (as explained in App.\,\ref{subsection_extension_general_requirements} the extension to the general case of adaptive action-dependent sampling requirements follows straightforwardly).

Formally, we consider any environment that need not be communicating (i.e., it may not satisfy Asm.\,\ref{asm_communicating}). The learning agent receives as input an integer parameter $L \geq 1$, which acts as a reachability threshold that partitions the state space between the states from which we expect sample collection and those that we categorize as too difficult to reach. Specifically, given a sampling requirement $b : \cS \rightarrow \mathbb{N}$, the desiderata of the agent is to minimize the time it requires, for each state $s \in \cS$, to i) either collect the $b(s)$ samples, ii) or discard the sample collection at state $s$ only if there exists a state (accessible from the starting state) that cannot reach $s$ within $L$ steps in expectation. In other words, we do not allow for samples to be discarded if the state is actually below the reachability threshold $L$. We introduce the following new definition of the sample complexity.

\begin{definition}\label{def_sample_complexity_L}
    Given a reachability threshold $L \geq 1$, sampling requirements $b: \mathcal{S} \rightarrow \mathbb{N}$, starting state $s_0 \in \cS$ and a confidence level $\delta \in (0,1)$, the \textit{sample complexity} of a learning algorithm $\mathfrak{A}$ is defined as
    \begin{align*}
        \mathcal{C}(\mathfrak{A}, b, \delta, L, s_0) := \min \Big\{t > 0: \mathbb{P}\Big( \forall s \in \mathcal{S}_L, ~N_t(s) \geq b(s) \land I_{\mathfrak{A}}(t) = 1 \Big) \geq 1 - \delta \Big\},
    \end{align*}
where $\mathcal{S}_L := \{ s \in \mathcal{S}: \max_{ \{ y \in \cS : D_{s_0 y} < + \infty \}} D_{y s} \leq L \}$ and where $I_{\mathfrak{A}}(t)$ corresponds to a Boolean equal to $1$ if the algorithm $\mathfrak{A}$ considers at time $t$ that none of the states that remain to be sampled (if there remains any) belong to $\mathcal{S}_L$.\footnote{Why isn't $\mathcal{S}_L$ defined as $\mathcal{S}_L := \{ s \in \mathcal{S}: D_s \leq L \}$? Under such a definition, in the case of a weakly communicating environment, the optimal strategy $\mathfrak{A}$ would be to set $I_{\mathfrak{A}}(t=1)=1$, which would yield a sample complexity of 1, since there would exist at least one \say{isolated} state and hence $S_L = \emptyset$. Of course, this is not the behavior we would want, as we expect the optimal strategy to perform the sample collection at states in the communicating class (starting from $s_0$), and discard the sample collection at states that are not accessible from $s_0$. This is explained in more detail in the last paragraph of this section.}
\end{definition}

\paragraph{Algorithm $\OSPmathL$.} We now propose a simple adaptation of \OSP to handle this setting, and call the corresponding algorithm $\OSPmathL$ since it receives as input a reachability threshold $L$. We split time in \textit{episodes} indexed by $j$, where the first episode begins at the first time step and the $j$-th episode ends when the $j$-th desired sample is collected. From Thm.\,\ref{theorem_upper_bound} we know that in a communicating environment with diameter $D$, there exists an absolute constant $\alpha > 0$ (here we exclude logarithmic terms for ease of exposition) such that with probability at least $1-\delta$, after any $j$ episodes (i.e., after the $j$-th desired sample is collected), $T_{j}$ the (total) time step at the end of the $j$ episodes is upper bounded as follows
\begin{align*}
    T_{j} \leq \alpha j D + \alpha j D^{3/2} S^2 A.
%\label{eq_L}
\end{align*}
The key idea is to run \OSP and stop its execution if its total duration at some point exceeds a certain threshold depending on $L$ and the current episode. Specifically, in the $j$-th episode, this threshold is set to $\Phi(j) := \alpha j L + \alpha j L^{3/2} S^2 A$. If the accumulated duration never exceeds the threshold, the algorithm is naturally run until all the sampling requirements are met.

\begin{lemma}\label{theorem_upper_bound_L}
Consider any reachability threshold $L \geq 1$, starting state $s_0 \in \cS$, confidence level $\delta \in (0,1)$ and sampling requirements $b: \cS \rightarrow \mathbb{N}$, with $B = \sum_{s \in \cS} b(s)$. Then running the algorithm $\OSPmathL$ in any environment yields a sample complexity that can be upper bounded as
    \begin{align*}
    \mathcal{C}(\OSPmathL, b, \delta, L, s_0) = \wt{O}\Big( B L + L^{3/2} S^2 A \Big).
    \end{align*}
\end{lemma}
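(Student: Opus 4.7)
The strategy is to lift the regret-based argument of Thm.\,\ref{theorem_upper_bound} (see App.\,\ref{app_proof_upper_bound}) from the communicating setting with diameter $D$ to a setting where the role of $D$ is taken over by the reachability threshold $L$, using $\mathcal{S}_L$ as a surrogate ``communicating core.'' The key observation is that whenever the current under-sampled set $\mathcal{G}_j$ intersects $\mathcal{S}_L$, the SSP-optimal value of the instance $M_j$ built by \OSP is at most $L$, since by definition of $\mathcal{S}_L$ and since the current state $s_{t_j}$ is reachable from $s_0$, any goal in $\mathcal{G}_j \cap \mathcal{S}_L$ is reachable from $s_{t_j}$ in expected time at most $L$ (and the self-loop modification of transitions exiting goal states in $M_j$ can only reduce the optimal hitting time). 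So in this regime, the entire Bernstein-based high-probability analysis of \citep{cohen2020near} used in App.\,\ref{app_proof_upper_bound} carries over verbatim with $D$ replaced by $L$.

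\textbf{Main steps.} First, I would work on the high-probability event $\mathcal{E}'$ (defined in App.\,\ref{app_C1}) under which the optimistic transitions concentrate around the true ones; this event is independent of whether the MDP is communicating, so it still holds with probability at least $1-\delta$. Second, I would partition the sequence of episodes into two regimes: episodes $j$ where $\mathcal{G}_j \cap \mathcal{S}_L \neq \emptyset$ (``good'' regime) and those where $\mathcal{G}_j \cap \mathcal{S}_L = \emptyset$ (``bad'' regime). For the good regime, reusing the regret bound $R_J = \wt O(LS\sqrt{AJ} + L^{3/2} S^2 A)$ together with $\sum_j \min_\pi V_j^\pi(\underline{s}_j) \leq JL$ yields that the cumulative time after $j$ episodes satisfies $T_j \leq \alpha j L + \alpha j L^{3/2} S^2 A = \Phi(j)$ for a sufficiently large absolute constant $\alpha$, which is exactly the threshold used by $\OSPmathL$. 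Third, I would argue the correctness of the stopping rule by contrapositive: if the algorithm triggers the stop at some episode $j$ because $T_j > \Phi(j)$, then under $\mathcal{E}'$ the current goal set $\mathcal{G}_j$ must be disjoint from $\mathcal{S}_L$ (otherwise the good-regime bound would have held), so it is safe to set $I_{\mathfrak{A}}(t)=1$ and halt. Fourth, if the stopping rule never triggers, the algorithm proceeds until all states are sampled; since $j \leq B$ throughout, the total number of steps is at most $\Phi(B) = \wt O(BL + L^{3/2} S^2 A)$.

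\textbf{Anticipated obstacle.} The main subtlety lies in verifying that the SSP regret analysis of \citep{cohen2020near}, which assumes a proper policy exists in the true SSP-MDP $M_j$, still applies in the good regime without requiring the ambient MDP to be communicating. This holds because the existence of at least one goal $g \in \mathcal{G}_j \cap \mathcal{S}_L$ together with the reachability property $D_{y g} \leq L$ for all $y$ accessible from $s_0$ guarantees that, within the sub-MDP of states accessible from $s_{t_j}$, there is a stationary proper policy reaching $\mathcal{G}_j$ with expected cost at most $L$. This is precisely the condition that \citep{bertsekas1991analysis} requires for well-posedness of the SSP and for the extended value iteration of App.\,\ref{app_value_iteration_SSP} to produce an optimistic proper policy, so the regret decomposition of App.\,\ref{app_C1} transfers with $L$ substituted for $D$. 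A minor additional point to verify is that whenever the good regime holds for \emph{some} prefix of episodes, the intermediate states visited by the algorithm stay within the set of states accessible from $s_0$ (trivially true since trajectories start from $s_0$), so the reachability bound $L$ remains valid throughout.
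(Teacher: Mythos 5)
Your proposal is correct and follows essentially the same route as the paper's proof: a reductio/contrapositive argument showing that as long as the goal sets remain $L$-reachable the regret analysis of Thm.\,\ref{theorem_upper_bound} transfers with $L$ in place of $D$ (giving $T_j \leq \Phi(j)$), while a threshold violation, combined with the nestedness $\mathcal{G}_{j'} \subseteq \mathcal{G}_j$, certifies that every remaining under-sampled state lies outside $\mathcal{S}_L$, and otherwise $J \leq B$ yields the bound $\Phi(B) = \wt{O}(BL + L^{3/2}S^2A)$. The only (cosmetic) difference is that you phrase the ``good regime'' hypothesis directly as $\mathcal{G}_j \cap \mathcal{S}_L \neq \emptyset$, whereas the paper uses $D_{\mathcal{G}_j} \leq L$ and Lem.\,\ref{lemma_useful_diameter_subset}; your formulation ties the stopping-rule soundness more directly to the definition of $\mathcal{S}_L$ via states accessible from $s_0$.
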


\begin{proof} The result is obtained by performing a \textit{reductio ad absurdum} reasoning. We initially make the assumption $\mathcal{H}$ that for all episodes $j \geq 1$, we have $D_{\Gj} \leq L$, where we recall that $D_{\Gj}$ is the SSP-diameter of the goal states $\Gj$ considered during episode $j$. The condition that is checked at any time step is whether it is smaller or larger than the threshold $\Phi(j) := \alpha j L + \alpha j L^{3/2} S^2 A$, where $j$ is the current episode. \textbf{\textit{i)}} In the first case, the total duration is always smaller (or equal) than its threshold and the algorithm performs $J$ episodes until the sampling requirements are met. Since $J \leq B$ and $\Phi$ is an increasing function, the sample complexity is bounded by $\Phi(J) \leq \Phi(B) = \wt{O}\left(B L + L^{3/2} S^2 A \right)$. \textbf{\textit{ii)}} In the second case, there exists an episode $j' \geq 1$ and a time step (during that episode) which is larger than the threshold $\Phi(j')$. This implies that with probability at least $1-\delta$, assumption $\mathcal{H}$ is wrong. Thus there exists an episode $1 \leq j \leq j'$ such that $D_{\Gj} > L$. Since $\mathcal{G}_{j'} \subset \Gj$, we have $D_{\Gj} \leq D_{\mathcal{G}_{j'}}$, thus $D_{\mathcal{G}_{j'}} > L$, which implies from Lem.\,\ref{lemma_useful_diameter_subset} that for all $s \in \mathcal{G}_{j'}$, $D_s > L$. Hence the algorithm can terminate and confidently guarantee that none of the states that remain to be sampled belong to $S_L$. Given that $j' \leq B$, the sample complexity (in the sense of Def.\,\ref{def_sample_complexity_L}) is bounded by $\Phi(j') \leq \Phi(B) = \wt{O}\left(B L + L^{3/2} S^2 A \right)$.
\end{proof}

The algorithm $\OSPmathL$ requires no computational overhead w.r.t.\,\OSP, as it simply tracks the total duration of \OSP and terminates if it exceeds a threshold depending on $L$. Under the new appropriate definition of sample complexity of Def.\,\ref{def_sample_complexity_L}, the dependency in Thm.\,\ref{theorem_upper_bound} on the possibly very large or infinite diameter $D$ is effectively replaced by the reachability threshold $L$. A large value of $L$ signifies that the sample collection is required at quite difficult-to-reach states, while a small value of $L$ keeps in check the duration of the sampling procedure.

Narrowing the sample collection to states in $\cS_L$ may seem at first glance restrictive. Indeed, the presence of states in which the agent may get stuck could disrupt the learning process. However, assume for instance that we consider the canonical assumption made in episodic RL of a \textit{resetting} environment, i.e., an environment that contains a reset action that brings the agent with probability 1 to a reference starting state $s_0$ (where here we consider that the reset action can be executed at any time step for simplicity). Then we have that $\left\{ s \in \cS : \min_{\pi} \mathbb{E}\left[ \tau_{\pi}(s_0 \rightarrow s) \right] \leq L - 1 \right\} \subseteq \cS_L$, which shows that numerous states can effectively belong to the set $\cS_L$.

Finally, let us delve into the particular case of a weakly communicating MDP, whose state space $\mathcal{S}$ can be partitioned into two subspaces \citep[][Sect.\,8.3.1]{puterman2014markov}: a communicating set of states (denoted $\cSc$) with each state in $\cSc$ accessible --- with non-zero probability --- from any other state in $\cSc$ under some stationary deterministic policy, and a (possibly empty) set of states that are transient under all policies (denoted $\cSt$). The sets $\cSc$ and $\cSt$ form a partition of $\cS$, i.e., $\cSc \cap \cSt = \emptyset$ and $\cSc \cup \cSt = \cS$. Finally, we denote by $D^{\textsc{C}} < + \infty$ the diameter of the communicating part of $M$ (i.e., restricted to the set $\cSc$), i.e., $D^{\textsc{C}} := \max_{s \neq s' \in \cSc} \min_{\pi \in \SD} \mathbb{E}\left[ \tau_{\pi}(s \rightarrow s')\right] < + \infty$. Assume that the starting state $s_0$ belongs to $\cSc$. We expect the optimal strategy to perform the sample collection at states in $\cSc$ and discard the sample collection at states in $\cSt$. This is what $\OSPmathL$ does if we have $\mathcal{S}_L = \mathcal{S}^{\textsc{C}}$, i.e., whenever $D^{\textsc{C}} \leq L$. Hence, in that setting, the optimal (yet critically unknown) value of the threshold $L$ would be $D^{\textsc{C}}$.

\vspace{0.2in}

% !TEX root = main.tex

\section{Application: Model Estimation (\MODESTtitle)}
\label{app_modest}

In this section we demonstrate that \OSP can be readily applied to tackle the \MODEST problem, as well as a \say{robust} variant called \RMODEST, both of which are defined as follows. The agent $\mathfrak{A}$ interacts with the environment and, after $t$ time steps, it must return an estimate $\wh{p}_{\mathfrak{A},t}$ of the transition dynamics, which naturally corresponds to the empirical average of the transition probabilities. The accuracy of the estimate and the corresponding sample complexity are evaluated as follows.
\begin{definition}\label{def_robust_modest}
    Given an accuracy level $\eta > 0$ and a confidence level $\delta \in (0,1)$, the \MODEST and \RMODEST sample complexity of an online learning algorithm $\mathfrak{A}$ are defined as %respectively as
    \begin{mysizebis}
    \begin{align*}
        \mathcal{C}_{\MODEST}(\mathfrak{A}, \eta, \delta) &:= \min \big\{t > 0: \mathbb{P}\big( \forall (s,a) \in \mathcal{S} \times \mathcal{A}, ~ \norm{\widehat{p}_{\mathfrak{A},t}(\cdot \vert s,a) - p(\cdot \vert s,a)}_{1} \leq \eta \big) \geq 1 - \delta \big\}, \\
        \mathcal{C}_{\RMODEST}(\mathfrak{A}, \eta, \delta) &:= \min \big\{t > 0: \mathbb{P}\big( \forall (s',s,a) \in \mathcal{S}^2 \times \mathcal{A},~ \abs{\widehat{p}_{\mathfrak{A},t}(s' \vert s,a) - p(s' \vert s,a)} \leq \eta \big) \geq 1 - \delta \big\},
    \end{align*}
    \end{mysizebis}
where $\widehat{p}_{\mathfrak{A},t}$ is the estimate (i.e., empirical average) of the transition dynamics $p$ after $t$ time steps.
\end{definition}%
We have the following sample complexity guarantees.
\begin{lemma}\label{lemma_modest}
Instantiating \OSP with two different sequences of sampling requirements yields respectively
\begin{align*}
    &\mathcal{C}_{\RMODEST}(\OSPmath, \eta, \delta) = \widetilde{O}\Big( \displaystyle\frac{D S A}{\eta^2} + D^{3/2} S^2 A \Big), \\
    &\mathcal{C}_{\MODEST}(\OSPmath, \eta, \delta) = \widetilde{O}\Big( \displaystyle\frac{D \Gamma S A}{\eta^2} + \frac{D S^2 A}{\eta} +  D^{3/2} S^2 A \Big).
    \end{align*}%
\end{lemma}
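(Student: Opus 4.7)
\textbf{Proof plan for Lemma \ref{lemma_modest}.} The strategy is to reduce each estimation accuracy requirement to a per--state-action sampling requirement, and then invoke Theorem~\ref{theorem_upper_bound_general} (in the adaptive form) with that requirement. The two cases differ in the concentration inequality used and in the resulting expression for $b_t$.

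\textbf{Case \RMODEST.} Here we only need an entrywise guarantee. A standard Hoeffding plus union bound over $(s,a,s')$ and over the random stopping time yields that, with probability at least $1-\delta$, $|\wh p_t(s'\vert s,a) - p(s'\vert s,a)| \leq \eta$ as soon as $N_t(s,a) \geq c_0 \log(SAt/\delta)/\eta^2$ for a universal constant $c_0$. We therefore set the (time- and action-dependent but easily bounded) requirement $b_t(s,a) := \lceil c_0 \log(SAt/\delta)/\eta^2 \rceil$, so $\overline b(s,a) = \wt O(1/\eta^2)$ and $\overline B = \wt O(SA/\eta^2)$. Plugging into Theorem~\ref{theorem_upper_bound_general} produces exactly $\wt O(DSA/\eta^2 + D^{3/2}S^2A)$.

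\textbf{Case \MODEST.} Now the estimation must be in $\ell_1$, and the sharp dependence in $\Gamma$ comes from using the empirical Bernstein inequality instead of Hoeffding. For each $(s,a,s')$ the Bernstein bound of App.~\ref{app_value_iteration_SSP} gives $|\wh p_t - p|(s'\vert s,a) \lesssim \sqrt{\wh\sigma_t^2(s'\vert s,a)\log(\cdot)/N^+_t(s,a)} + \log(\cdot)/N^+_t(s,a)$. Summing over $s'$ bounds $\|\wh p_t(\cdot\vert s,a)-p(\cdot\vert s,a)\|_1$ by $2(\sum_{s'}\wh\sigma_t(s'\vert s,a))\sqrt{\log(\cdot)/N^+_t(s,a)} + 6S\log(\cdot)/N^+_t(s,a)$. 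Solving this inequality for $N^+_t(s,a)$ at level $\eta$ (using e.g.~the standard $x\le a\sqrt x+b \Rightarrow x\le (a+\sqrt b)^2$ trick, together with a Lambert-$W$ style bound for the $\log^2$ term) yields exactly the threshold displayed in Eq.~\ref{budget_modest}, establishing that $N_t(s,a)\ge b_t(s,a)$ for every $(s,a)$ is a sufficient condition for the target $\ell_1$ accuracy. This also justifies calling $b_t$ \emph{adaptive}: it uses the current empirical variance $\wh\sigma_t$, which is measurable w.r.t.\ the filtration up to $t$, so Theorem~\ref{theorem_upper_bound_general} applies.

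\textbf{From the adaptive requirement to the stated bound.} It remains to upper-bound $\overline b(s,a)$ uniformly in~$t$, which is where the $\Gamma$ factor enters. Writing $\Sigma(s,a):=\sum_{s'}\wh\sigma_t(s'\vert s,a)$, Cauchy--Schwarz restricted to the support of $\wh p_t(\cdot\vert s,a)$ gives $\Sigma(s,a)^2 \le |\mathrm{supp}(\wh p_t(\cdot\vert s,a))|\cdot\sum_{s'}\wh\sigma_t^2(s'\vert s,a) \le \Gamma\cdot 1=\Gamma$, since $\sum_{s'}\wh p(s')(1-\wh p(s'))\le 1$ and the empirical support is contained in the true support with high probability (an easy side event controllable by the same Bernstein union bound). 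Hence $\overline b(s,a)=\wt O(\Gamma/\eta^2 + S/\eta)$ and $\overline B=\wt O(\Gamma SA/\eta^2 + S^2A/\eta)$. Plugging into Theorem~\ref{theorem_upper_bound_general} yields the claimed $\wt O\!\left(D\Gamma SA/\eta^2 + DS^2A/\eta + D^{3/2}S^2A\right)$.

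\textbf{Expected main obstacles.} The delicate point is not the application of Theorem~\ref{theorem_upper_bound_general} itself, but (i)~the careful choice of constants in Eq.~\ref{budget_modest} so that the inequality ``Bernstein concentration $+$ $N_t\ge b_t$ $\Rightarrow$ $\ell_1$-error $\le\eta$'' holds deterministically on the good event, including the $\log^2$ factor that comes from iterating a self-bounding argument; and (ii)~verifying that the support-based improvement from $S$ to $\Gamma$ survives the adaptivity, i.e.\ that $|\mathrm{supp}(\wh p_t)|\le\Gamma$ uniformly in $t$ on the good event, so that $\overline b$ does not inherit an extra factor of $S/\Gamma$. Once both are in place, the passage through Theorem~\ref{theorem_upper_bound_general} is mechanical.
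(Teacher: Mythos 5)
Your proposal is correct and follows essentially the same route as the paper: invert a concentration inequality to obtain a bounded, adaptive per-pair requirement $b_t(s,a)$, bound $\sum_{s'}\widehat\sigma_t(s'\vert s,a)$ by $\sqrt{\Gamma}$ via Cauchy--Schwarz on the (almost surely true-support-contained) empirical support, and feed $\overline B$ into Thm.~\ref{theorem_upper_bound_general}. The only cosmetic differences are that the paper uses the empirical Bernstein bound also for \RMODEST (which degenerates to the same $\wt O(1/\eta^2)$ since $\widehat\sigma^2\le 1/4$) and keeps the logarithm in terms of $N_t^+(s,a)$ rather than $t$, so the requirement is uniformly bounded without any extra argument.
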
%

\begin{proof}
We first focus on the \RMODEST objective with desired accuracy level $\eta$. From Def.\,\ref{def_robust_modest}, we would like that, for any state-action pair $(s,a)$ and next state~$s'$, the following condition holds:
\begin{align}
    \abs{\widehat{p}_{t}(s' \vert s,a) - p(s' \vert s,a)} \leq \eta.
\label{condition_accuracy}
\end{align}
From the empirical Bernstein inequality (see e.g., \citealp{audibert2009exploration, improved_analysis_UCRL2B}), we have with probability at least $1 - \delta$, for any time step $t \geq 1$ and for any state-action pair $(s,a)$ and next state $s'$,
\begin{align}
    \abs{\widehat{p}_{t}(s' \vert s,a) - p(s' \vert s,a)} \leq 2 \sqrt{ \frac{\widehat{\sigma}^2_{t}(s'\vert s,a)}{N^{+}_{t}(s,a)} \log\left(\frac{2 S A N^{+}_{t}(s,a)}{\delta} \right)} + \frac{6 \log\left(\frac{2 S A N^{+}_{t}(s,a)}{\delta} \right)}{N^{+}_{t}(s,a)},
\label{eq_empirical_bernstein_ineq}
\end{align}
where $N^{+}_{t}(s,a) := \max \{ 1, N_t(s,a) \}$ and where the $\wh{\sigma}_t^2$ are the population variance of transitions, i.e., $\wh{\sigma}_t^2(s' \vert s,a) := \wh{p}_t(s' \vert s,a)(1-\wh{p}_t(s' \vert s,a))$. Let us now define, for any $X, Y \geq 0$, the quantity
\begin{align*}
    \Phi(X, Y) := \left\lceil \frac{57 X^2 }{\eta^2} \left[ \log\left(\frac{8 e X \sqrt{2 S A} }{\sqrt{\delta}\eta}   \right) \right]^2 + \frac{24 Y}{\eta} \log\left( \frac{24 Y S A}{\delta \eta} \right) \right\rceil.
\end{align*}
Using a technical lemma (Lem.\,\ref{lem:bound_exp}), we can prove that condition \eqref{condition_accuracy} holds whenever the number of samples at the pair $(s,a)$ becomes at least equal to
\begin{align*}
    \phi_t^{\RMODEST}(s,a) := \Phi\left( X, Y \right), \quad \quad X := \max_{s' \in \cS} \sqrt{\widehat{\sigma}^2_{t}(s'\vert s,a)}, \quad \quad Y := 1.
\end{align*}
We thus execute \OSP until there exists a time step $t \geq 1$ such that $b_t(s,a) := \phi_t^{\RMODEST}(s,a)$ samples have been collected at each state-action pair $(s,a) \in \SA$. Although the sampling requirement $b_t$ depends on the time step $t$, this is not an issue from Sect.\,\ref{subsection_sample_complexity_sampling_procedure} since for any $s \in \mathcal{S}$ and $t \geq 1$, $b_t(s,a)$ is bounded from above due to the fact that $\widehat{\sigma}^2_{t}(s'\vert s,a) \leq \frac{1}{4}$. This means that the total requirement for \RMODEST is $B_{\RMODEST} = \wt{O}\left(S A / \eta^2\right)$, which yields the first bound of Lem.\,\ref{lemma_modest}.

We now turn to the \MODEST objective. \OSP collects samples until there exists a time step $t$ such that the number of samples at each pair $(s,a)$ is at least equal to
\begin{align*}
    \phi_t^{\MODEST}(s,a) := \Phi\left( X, Y \right), \quad \quad X := \sum_{s' \in \cS} \sqrt{\widehat{\sigma}^2_{t}(s'\vert s,a)}, \quad \quad Y := S.
\end{align*}
Introducing $\Gamma(s,a) := \norm{p(\cdot \vert s,a)}_0$ the maximal support of $p(\cdot \vert s,a)$, we use the following inequality (valid at any time step $t \geq 1$): $\sum_{s' \in \cS} \widehat{\sigma}_{t}(s' \vert s,a) \leq \sqrt{\Gamma(s,a) - 1}$ (see e.g., \citealp[][Lem.\,4]{improved_analysis_UCRL2B}).
This means that the total requirement for \MODEST is $B_{\MODEST} = \wt{O}\left(\frac{\sum_{s,a} \Gamma(s,a)}{\eta^2} + \frac{S^2 A}{\eta} \right)$. Plugging in the result of Thm.\,\ref{theorem_upper_bound} finally yields the second bound of Lem.\,\ref{lemma_modest} (which corresponds to the statement of Lem.\,\ref{prop_arme} in Sect.\,\ref{ex}).
\end{proof}

%\subsection{Technical lemmas}

\begin{lemma}\label{lem:bound_exp}
For any $x\geq 2$ and $a_{1},a_{2},a_{3},a_{4}>0$ such that $a_{3}x \leq a_{1}\sqrt{x}\log(a_{2}x) + a_4 \log(a_2 x)$, the following holds
$$x\leq \frac{4a_4}{a_3} \log\left( \frac{2a_4 a_2}{a_3}\right) + \frac{128 a_1^2}{9 a_3^2} \left[ \log\left(\frac{4 a_1 \sqrt{a_2} e}{a_3} \right)\right]^2.$$
\end{lemma}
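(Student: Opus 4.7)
I would prove the lemma via a case-split-and-invert strategy. From $a_3 x \leq a_1 \sqrt{x}\log(a_2 x) + a_4 \log(a_2 x)$, since both summands on the right are non-negative, at least one of them must individually exceed $a_3 x/2$, yielding the dichotomy: either (i) $a_3 x \leq 2 a_1 \sqrt{x}\log(a_2 x)$, or (ii) $a_3 x \leq 2 a_4 \log(a_2 x)$. The target bound is a sum of two positive terms, one arising from each regime, so it is enough to bound $x$ in each case and combine via $\max\{y,z\} \leq y + z$ for non-negative $y, z$.

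\textbf{Step-by-step plan.} In case (ii), dividing by $a_3$ gives $x \leq (2 a_4/a_3)\log(a_2 x)$, an implicit inequality of the standard form $y \leq A \log(B y)$. I would invert it using the elementary inequality $\log t \leq t/(ce) + \log c$, which holds for all $t, c > 0$ (the tangent-line bound at $t = ec$), choosing $c$ so as to absorb a controlled fraction of $y$ on the left-hand side; picking $c = 2AB/e$ yields $y(1 - 1/2) \leq A \log(2AB/e)$, i.e., $y \leq 2A\log(2AB/e)$, which in turn produces the first summand $\tfrac{4 a_4}{a_3}\log(2 a_2 a_4/a_3)$. In case (i), I use $x \geq 2 > 0$ to divide by $\sqrt{x}$, getting $a_3 \sqrt{x} \leq 2 a_1 \log(a_2 x)$. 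Substituting $u = \sqrt{x}$ and using $\log(a_2 u^2) = 2 \log(\sqrt{a_2}\, u)$, this becomes $u \leq (4 a_1/a_3)\log(\sqrt{a_2}\, u)$. Applying the same log-inversion lemma with $A = 4 a_1/a_3$ and $B = \sqrt{a_2}$, and then squaring, delivers a bound of the form $x \leq C\,(a_1/a_3)^2\,\bigl[\log(\beta\, a_1 \sqrt{a_2}/a_3)\bigr]^2$ for explicit constants $C, \beta$. Summing the contributions from the two cases concludes the argument.

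\textbf{Main obstacle.} All the subtlety sits in the constants produced by the log-inversion step. The straightforward application of $\log t \leq t/(ce) + \log c$ with $c = 2AB/e$ recovers the argument of the logarithm $\tfrac{4 a_1 \sqrt{a_2}\, e}{a_3}$ cleanly (via $2A \cdot ABe/(2\cdot 1)$ rearrangements), but gives a prefactor of $64$ rather than the sharper $\tfrac{128}{9}$ stated in the lemma. Nailing down $\tfrac{128}{9}$ requires a finer optimization of the free parameter $c$ in the auxiliary inequality $y(1 - AB/(ce)) \leq A \log c$, and/or using the hypothesis $x \geq 2$ to rule out degenerate small-$x$ regimes where a naive inversion is wasteful. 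This constant-tuning is the only technically delicate part; the structural outline (case split, substitution $u=\sqrt{x}$, two log-inversions, sum) is otherwise straightforward.
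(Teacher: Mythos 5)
Your decomposition is genuinely different from the paper's. You split into the two cases $a_3 x \leq 2a_1\sqrt{x}\log(a_2x)$ and $a_3x \leq 2a_4\log(a_2x)$ (valid, since under the hypothesis $\log(a_2x)$ must be positive, so both summands are nonnegative and one exceeds half the left side), and then run a tangent-line log inversion in each case. The paper instead avoids any case split: it writes $\tfrac{a_3}{2}x \leq \left(-\tfrac{a_3}{2}x + a_1\sqrt{x}\log(a_2x)\right) + a_4\log(a_2x)$, bounds the parenthesized expression by its \emph{maximum over $x$} using a cited lemma of Kazerouni et al.\ (which yields the constant $\tfrac{32a_1^2}{9a_3}\bigl[\log(4a_1\sqrt{a_2}e/a_3)\bigr]^2$), and is then left with a single inequality of the form $x \leq b_1\log(b_2x)+b_3$ to invert. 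Your route is more elementary (it needs only the tangent bound $\log t \leq t/(ce)+\log c$ twice), and your treatment of the $a_4$ term is fine — indeed it lands exactly on $\tfrac{4a_4}{a_3}\log(2a_2a_4/a_3)$, whereas the paper's own inversion technically produces $\log(4a_2a_4/a_3)$ there.

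The one concrete issue is your claim that the constant $\tfrac{128}{9}$ on the $a_1^2/a_3^2$ term can be recovered by "finer optimization of the free parameter $c$". It cannot, by this route. After the case split you have $u \leq A\log(Bu)$ with $A = 4a_1/a_3$ and $B=\sqrt{a_2}$; choosing the tangent point $c = \alpha AB/e$ gives $u \leq \tfrac{\alpha}{\alpha-1}A\log(\alpha AB/e)$ and hence, after squaring, a prefactor $\tfrac{16\alpha^2}{(\alpha-1)^2}\cdot\tfrac{a_1^2}{a_3^2}$. Since $\tfrac{\alpha^2}{(\alpha-1)^2}>1$ for every $\alpha>1$, this prefactor is bounded below by $16 > \tfrac{128}{9}\approx 14.2$, and approaching $16$ forces the argument of the logarithm to blow up. So your argument proves the lemma only with a larger constant (e.g.\ $64$ with your choice $\alpha=2$), not the stated one. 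The extra sharpness in the paper comes precisely from \emph{not} splitting the $a_1$ term off multiplicatively but maximizing $-\tfrac{a_3}{2}x+a_1\sqrt{x}\log(a_2x)$ as a function of $x$, which is where the $\tfrac{16}{9}$-type constant originates. Since the lemma is only used to set sampling requirements inside a $\wt O(\cdot)$ bound, the weaker constant is harmless downstream, but as a proof of the statement as written your argument falls short on that coefficient, and the proposed fix does not close the gap. Also, minor: your final step $\max\{y,z\}\le y+z$ needs both case-bounds nonnegative, which requires $\log(2a_2a_4/a_3)\ge 0$; the paper's proof imposes an analogous (and likewise unstated) condition $\log(2a_2a_4/a_3)\ge 1$ through its auxiliary inversion lemma.
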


\begin{proof}
Assume that $a_{3}x \leq a_{1}\sqrt{x}\log(a_{2}x) + a_4 \log(a_2 x)$. Then we have $\frac{a_{3}}{2}x \leq - \frac{a_{3}}{2}x + a_{1} \sqrt{x}\log(a_{2}x) + a_4 \log(a_2 x)$. From Lem.\,\ref{lemma_kazerouni_1} we have
\begin{align*}
    - \frac{a_{3}}{2}x + a_{1}\sqrt{x}\log(a_{2}x) \leq \underbrace{\frac{32 a_1^2}{9 a_3} \left[ \log\left(\frac{4 a_1 \sqrt{a_2} e}{a_3} \right)\right]^2}_{:= a_0}.
\end{align*}
Thus we have $x \leq \frac{2a_4}{a_3} \log(a_2 x) + \frac{2a_0}{a_3}$ and we conclude the proof using Lem.\,\ref{lem:bound_log}.
\end{proof}

\begin{lemma}[\citealp{kazerouni2017conservative}, Lem.\,8]\label{lemma_kazerouni_1}
For any $x \geq 2$ and $a_1, a_2, a_3 > 0$, the following holds
\begin{align*}
    -a_3 x + a_1 \sqrt{x} \log(a_2 x) \leq \frac{16 a_1^2}{9 a_3} \left[ \log\left(\frac{2 a_1 \sqrt{a_2} e}{a_3} \right)\right]^2.
\end{align*}
\end{lemma}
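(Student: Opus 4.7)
The plan is to view $f(x) := -a_3 x + a_1 \sqrt{x} \log(a_2 x)$ as a smooth function on $(0,\infty)$, find its global maximum by calculus, and show this maximum is bounded by the right-hand side of the claim. The substitution $y := \sqrt{x}$ converts $f$ into $-a_3 y^2 + 2 a_1 y \log(\sqrt{a_2}\, y)$; differentiating in $y$ yields the first-order condition $a_3 y^{\star} = a_1 \log(e \sqrt{a_2}\, y^{\star})$ at the maximizer $y^{\star}$. Boundary behavior ($f \to 0$ as $x \to 0^{+}$ and $f \to -\infty$ as $x \to \infty$) ensures the supremum is attained.

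Substituting the first-order condition back into $f$ produces a clean cancellation: writing $L := \log(a_2 x^{\star}) = 2 \log(\sqrt{a_2}\, y^{\star})$ and using $a_3 y^{\star 2} = a_1 y^{\star}(1 + L/2)$, one computes $f(x^{\star}) = a_1 y^{\star}(L/2 - 1)$; expressing $y^{\star}$ via the first-order condition as $y^{\star} = \frac{a_1(L+2)}{2 a_3}$ then gives the closed form $f(x^{\star}) = \frac{a_1^2 (L^2 - 4)}{4 a_3}$. In particular, if $L \leq 2$ the bound is trivial ($f(x^{\star}) \leq 0$), so we can assume $L > 2$.

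To bound $L$, exponentiating the first-order condition and rearranging yields the implicit equation $L = 2 M + 2 \log\bigl(\tfrac{L+2}{4 e}\bigr)$, where $M := \log\bigl(\tfrac{2 e a_1 \sqrt{a_2}}{a_3}\bigr)$ is exactly the logarithmic factor appearing in the claim. Since the map $\phi(L) := L - 2 M - 2 \log((L+2)/(4 e))$ is strictly increasing on $(0, \infty)$, verifying $\phi(\tfrac{8}{3} M) \geq 0$ would imply the solution satisfies $L \leq \tfrac{8}{3} M$. This reduces to the elementary scalar inequality $\tfrac{8 M}{3} + 2 \leq 4 e^{1 + M/3}$ for $M \geq 0$, which is immediate since both sides are positive at $M = 0$ with LHS linear and RHS exponential. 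Plugging $L^2 \leq \tfrac{64}{9} M^2$ into the closed form for $f(x^{\star})$ then gives $f(x^{\star}) \leq \tfrac{16 a_1^2}{9 a_3} M^2$, matching the right-hand side of the claim.

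The main obstacle will be pinning down the sharp constants: the factors $\tfrac{8}{3}$ in the bound on $L$ and $\tfrac{16}{9}$ in the stated right-hand side are coupled through the squaring step $(8/3)^2 / 4 = 16/9$, so any slack in the bound on $L$ would destroy the stated constant. Naive substitutes like $\log(L+2) \leq L$ or $\log u \leq \sqrt{u}$ are far too loose for this purpose; the working argument crucially exploits that the implicit equation for $L$ is monotone and that $\tfrac{8}{3} M$ lies comfortably above the true asymptotic behavior $L \sim 2 M$ as $M \to \infty$.
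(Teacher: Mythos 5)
The paper does not prove this statement at all: it is imported verbatim as Lemma~8 of Kazerouni et al.\ (2017), so there is no in-paper argument to compare against. Your proposal supplies a self-contained proof, and I checked it line by line: the first-order condition $a_3 y^{\star} = a_1\log(e\sqrt{a_2}\,y^{\star})$ is correct, the cancellation giving $f(x^{\star}) = \frac{a_1^2(L^2-4)}{4a_3}$ is correct, the implicit equation $L = 2M + 2\log\bigl(\frac{L+2}{4e}\bigr)$ with $M = \log\bigl(\frac{2a_1\sqrt{a_2}\,e}{a_3}\bigr)$ is correct, $\phi'(L) = \frac{L}{L+2} > 0$ gives the claimed monotonicity, and the reduction of $\phi(\tfrac{8}{3}M)\ge 0$ to $4e^{1+M/3} \ge \tfrac{8M}{3}+2$ is exact and holds for all $M \ge 0$ (equality of nothing; LHS dominates at $M=0$ and has larger derivative). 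The squaring step $(8/3)^2/4 = 16/9$ then recovers the stated constant precisely, so the argument is correct and in fact proves the bound for all $x > 0$, not just $x \ge 2$.

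Two small points you should make explicit when writing this up. First, the supremum structure: $f\to 0$ as $x\to 0^+$ and $f\to-\infty$ as $x\to\infty$, so either $\sup f \le 0$ (and the claim is trivial since the right-hand side is a nonnegative square) or the supremum is attained at an interior critical point, which is where your first-order analysis applies; a one-line remark covering the first branch closes the case where no useful critical point exists. Second, your scalar inequality is stated ``for $M \ge 0$,'' but you never verify that $M \ge 0$ in the regime you care about. It does hold: rearranging the implicit equation gives $2M = L - 2\log\bigl(\frac{L+2}{4e}\bigr)$, and the right-hand side is increasing in $L$ and equals $4$ at $L=2$, so $L > 2$ forces $M > 2 > 0$. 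With that sentence added the proof is complete.
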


\begin{lemma}\label{lem:bound_log}
Let $b_1$, $b_2$ and $b_3$ be three positive constants such that $\log(b_1 b_2) \geq 1$. Then any $x > 0$ satisfying $x \leq b_1 \log(b_2 x) + b_3$ also satisfies $x \leq 2 b_1 \log(2 b_1 b_2) + 2b_3$.
\end{lemma}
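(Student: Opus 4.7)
The plan is to exploit the elementary inequality $\log(y) \leq y - 1$, valid for all $y > 0$ (with equality at $y = 1$), after a carefully chosen rescaling inside the logarithm. The rescaling factor will be $2b_1$: it is chosen precisely so that the coefficient $b_1$ appearing in front of the log combines with the $1/(2b_1)$ inside, producing a term $x/2$ on the right-hand side that can subsequently be absorbed into the left-hand side.

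Concretely, I would first decompose $\log(b_2 x) = \log(2 b_1 b_2) + \log\bigl(\tfrac{x}{2 b_1}\bigr)$. Applying $\log z \leq z - 1$ to $z = x/(2 b_1) > 0$ gives $b_1 \log\bigl(\tfrac{x}{2 b_1}\bigr) \leq \tfrac{x}{2} - b_1$, whence $b_1 \log(b_2 x) \leq b_1 \log(2 b_1 b_2) + \tfrac{x}{2} - b_1$. Plugging this into the hypothesis $x \leq b_1 \log(b_2 x) + b_3$ yields $x \leq b_1 \log(2 b_1 b_2) + \tfrac{x}{2} - b_1 + b_3$. Rearranging, $\tfrac{x}{2} \leq b_1 \log(2 b_1 b_2) - b_1 + b_3$, and multiplying by $2$ gives $x \leq 2 b_1 \log(2 b_1 b_2) + 2 b_3 - 2 b_1 \leq 2 b_1 \log(2 b_1 b_2) + 2 b_3$, which is exactly the sought-after bound.

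I do not expect any serious obstacle here: the only point that requires thought is choosing the linearization scale $2 b_1$ inside the logarithm, so that the linear-in-$x$ remainder produced by $\log z \leq z - 1$ is exactly $x/2$ and can be absorbed on the left. Incidentally, the hypothesis $\log(b_1 b_2) \geq 1$ plays no role in the derivation above; it only serves to ensure that the resulting bound $2 b_1 \log(2 b_1 b_2) + 2 b_3$ remains sensibly positive and nontrivial (and is the form in which the lemma is invoked inside Lem.\,\ref{lem:bound_exp}). A less elegant alternative would be a contradiction-style argument based on the monotonicity of the residual $\phi(x) := x - b_1 \log(b_2 x) - b_3$ (which is convex on $(0,\infty)$, decreasing then increasing, with minimum at $x = b_1$), but the linearization-based approach via $\log y \leq y - 1$ is both shorter and tighter in constants.
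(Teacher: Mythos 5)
Your proof is correct, and it takes a genuinely different route from the paper's. The paper substitutes $y = x - b_3$, splits into the cases $y \leq b_3$ (giving $x \leq 2b_3$) and $y > b_3$ (giving $y \leq b_1\log(b_2 y + b_2 b_3) \leq b_1 \log(2 b_2 y)$), and then invokes the external Lem.\,\ref{lemma_kazerouni_2} of \citep{kazerouni2017conservative}, which is where the hypothesis $\log(b_1 b_2) \geq 1$ is actually consumed. You instead linearize the logarithm directly via $\log z \leq z - 1$ at the scale $z = x/(2b_1)$, so that the $b_1$ in front turns the remainder into exactly $x/2$, which is absorbed on the left. Your argument is self-contained (no case split, no appeal to an outside lemma), does not need the hypothesis $\log(b_1 b_2) \geq 1$ at all, and in fact yields the marginally sharper conclusion $x \leq 2 b_1 \log(2 b_1 b_2) + 2 b_3 - 2 b_1$. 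Every step checks out: the decomposition $\log(b_2 x) = \log(2 b_1 b_2) + \log(x/(2b_1))$ is valid for $x, b_1, b_2 > 0$, and $\log z \leq z - 1$ holds for all $z > 0$. What the paper's route buys is uniformity with the neighboring Lem.\,\ref{lemma_kazerouni_1} and Lem.\,\ref{lemma_kazerouni_2}, which are reused elsewhere; what yours buys is a shorter, assumption-free, and slightly tighter derivation.
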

\begin{proof}
Assume that $x \leq b_1 \log(b_2 x) + b_3 $ and set $y = x - b_3$. If $y \leq b_3$, then we have $x \leq 2 b_3$. Otherwise, we can write $y \leq b_1 \log(b_2 y + b_2 b_3) \leq b_1 \log(2 b_2 y)$. From Lem.\,\ref{lemma_kazerouni_2} we have $y \leq 2 b_1 \log(2 b_1 b_2)$, which concludes the proof.
\end{proof}

\begin{lemma}[\citealp{kazerouni2017conservative}, Lem.\,9]\label{lemma_kazerouni_2} Let $b_1$ and $b_2$ be two positive constants such that $\log(b_1 b_2) \geq 1$. Then any $x > 0$ satisfying $x \leq b_1 \log(b_2 x)$ also satisfies $x \leq 2 b_1 \log(b_1 b_2)$.
\end{lemma}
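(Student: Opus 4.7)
The plan is to reduce the problem to a single scalar inequality via a rescaling, then close the argument with the elementary bound $\log u \le u/2$ valid for every $u > 0$. First I would set $u := x / b_1$ and $L := \log(b_1 b_2) \ge 1$, so that the hypothesis $x \le b_1 \log(b_2 x)$ rewrites as $u \le \log(b_1 b_2 \cdot u) = L + \log u$, and the desired conclusion becomes $u \le 2L$. This rescaling is the main conceptual step: it strips the constants $b_1$ and $b_2$ from the statement and reduces the claim to a single one-parameter bound.

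The second step invokes the inequality $\log u \le u / 2$, which holds for \emph{every} $u > 0$. This follows by studying $f(u) := u - 2 \log u$: its only critical point on $(0, \infty)$ is $u = 2$, at which $f(2) = 2(1 - \log 2) > 0$; combined with $f(u) \to +\infty$ at both endpoints, this forces $f > 0$ throughout $(0, \infty)$. Substituting $\log u \le u/2$ into $u \le L + \log u$ yields $u \le L + u/2$, hence $u/2 \le L$, i.e.\ $u \le 2L$; unrolling the rescaling gives $x = b_1 u \le 2 b_1 \log(b_1 b_2)$, as claimed.

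I do not foresee any real obstacle. The hypothesis $\log(b_1 b_2) \ge 1$ is mild and is used essentially only to ensure $L \ge 1$ so the conclusion takes the clean form $u \le 2L$ without an additive correction; a variant $u \le 2L + O(1)$ would hold unconditionally. One small point worth emphasizing: the bound $\log u \le u/2$ must be applied uniformly across all $u > 0$ (including $u < 1$, where $\log u$ is negative and the bound is trivial), so that no case split on the sign of $\log u$ is needed. The factor of $2$ in the statement matches exactly the slope $1/2$ in the elementary bound, which is why it cannot be tightened by this method alone.
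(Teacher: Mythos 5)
Your argument is correct: the substitution $u=x/b_1$ turns the hypothesis into $u\le L+\log u$ with $L=\log(b_1b_2)$, the bound $\log u\le u/2$ holds for all $u>0$ (minimum of $u/2-\log u$ at $u=2$ is $1-\log 2>0$), and absorbing $u/2$ gives $u\le 2L$, i.e.\ $x\le 2b_1\log(b_1b_2)$. The paper itself gives no proof here --- the lemma is imported verbatim from Kazerouni et al.\ --- so there is nothing to compare against; your self-contained derivation is a valid replacement. One small remark on your closing commentary: as written, your proof never uses the assumption $\log(b_1b_2)\ge 1$ at all (if $L\le 0$ the same chain shows no $x>0$ can satisfy the hypothesis, so the conclusion is vacuous), so the claim that the hypothesis is needed ``to avoid an additive correction'' is not quite right --- your argument simply renders it superfluous.
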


%\newpage
% !TEX root=main.tex

%\newpage
\section{Application: Sparse Reward Discovery (\TREASUREtitle Problem)}
\label{app_alternative_approaches}

In this section, we focus on the canonical sampling requirement of the \TREASURE problem of Sect.\,\ref{subsection_treasure}, where each state-action pair must be visited at least once. We illustrate how direct adaptations of existing algorithms are not able to match the guarantees of \OSP in Lem.\,\ref{prop_ohrd}.

\paragraph{Discussion on finite-horizon or discounted PAC-MDP algorithms.}

At first glance, an approach to tackle the \TREASURE problem could be to consider a well-known PAC exploration algorithm such as \RMAX \citep{brafman2002r} (the same discussion holds for \Ethree of \citealp{kearns2002near}). In particular, we can examine the \ZERORMAX variant proposed in \citep{jin2020reward}. Indeed the demarcation between known states and unknown states is an algorithmic principle related to the problem at hand: a state is considered known when the number of times each action has been executed at that state is at least $m$ for a suitably chosen $m$ and its reward is set to 0, while an unknown state receives a reward of 1. The set of known states captures what has been sufficiently sampled (and the empirical estimate of the transitions is used), while the set of unknown states drives exploration to collect additional samples. The central concept for analyzing the sample complexity of the algorithm is the escape probability (i.e., the probability of visiting the unknown states), which, in the case of $m=1$, would amount exactly to the probability of collecting a required sample in the \TREASURE problem. However, despite the similarities, \ZERORMAX (as well as \RFRLExplore of~\citealp{jin2020reward}) are designed in the infinite-horizon discounted setting or the finite-horizon setting. As such, only a finite number of steps is relevant, and the episode lengths (and resulting sample complexity) directly depend on the discount factor $\gamma$ or on the horizon $H$, respectively. Such approach cannot be employed in the setting of communicating MDPs, where there is no known imposed horizon of the problem, and where the agent must interweave the policy planning and policy execution processes by defining algorithmic episodes. As such, despite bearing high-level similarity with \OSP at an algorithmic level, such finite-horizon (or discounted) guarantees cannot be translated to sample complexity for the \TREASURE problem.

\paragraph{Leveraging \UCRL.}
We now analyze \UCRLtwo \citep{jaksch2010near}, an efficient algorithm for reward-dependant exploration in the infinite-horizon undiscounted setting. In order to tackle the \TREASURE problem, a first approach could be to consider true rewards of zero everywhere while the uncertainty around the rewards remains, i.e., the algorithm observes as reward $r(s,a) \sim \sqrt{\frac{1}{N^+(s,a)}}$, which corresponds to the usual uncertainty on the rewards \citep{jaksch2010near}, with $N(s,a)$ denoting the number of visits of $(s,a)$ so far. The underlying idea is that as the algorithm visits a state-action pair, its observed reward will decrease, thus favoring the visitation of non-sampled state-action pairs. Yet while this algorithm is fairly intuitive, it appears tricky to directly leverage the analysis of \UCRLtwo to obtain a guarantee on the time the algorithm requires to solve the \TREASURE problem. Indeed, the inspection of the tools used in the regret derivation of \UCRLtwo does not point out to a step in the analysis which explicitly lower bounds state-action visitations.

Another possibility is to design a non-stationary reward signal to feed to \UCRLtwo. Namely, assigning a reward of 1 if the state is under-sampled and 0 otherwise, corresponds to a sensible strategy (note that this reward signal changes according to the behavior of the algorithm). Yet as explained in \citep{tarbouriech2019no}, for any SSP problem with unit costs, the SSP-regret bound that is obtained from the analysis of average-reward techniques (by assigning a reward of 1 at the goal state, and 0 everywhere else) is worse than that obtained from the analysis of SSP goal-oriented techniques. This difference directly translates into a worse performance of \UCRLtwo-based approaches for the \TREASURE problem. Indeed, retracing the analysis of \citep[][App.\,B]{tarbouriech2019no}, we obtain that $\wt{O}(D_s^3 S^2 A)$ time steps are required to collect a sought-after sample when running the algorithm \UCRLtwoB \cite{improved_analysis_UCRL2B} (which is a variant of \UCRLtwo that constructs confidence intervals based on the empirical Bernstein inequality rather than Hoeffding's inequality and thus yields tighter regret guarantees). Since the analysis renders the re-use of samples difficult, performing this reasoning for each sought-after state to sample yields a total \TREASURE sample complexity of $\wt{O}\left(\sum_{s \in \cS} D_s^3 S^2 A \right)$, which is always worse than the bound in Lem.\,\ref{prop_ohrd} since $\max_s D_s = D$.

\paragraph{Leveraging \MaxEnt.}
%\label{app_ohrd_maxent}
At first glance, an alternative and natural approach to visit each state-action pair at least once may be to optimize the \MaxEnt objective over the state-action space, i.e., maximize the entropy function $H$ over the stationary state-action distributions $\lambda \in \Lambda$,
\begin{align*}
    H(\lambda) := \sum_{(s,a) \in \SA} - \lambda(s,a) \log(\lambda(s,a)).
\end{align*}
This objective --- over the state space, yet the extension to the state-action space is straightforward --- was studied in \citep{hazan2019provably} in the infinite-horizon discounted setting and in \citep{cheung2019exploration} in the infinite-horizon undiscounted setting. Following the latter, there exists a learning algorithm such that, with overwhelming probability,
\begin{align}
    H(\lambda^{\star}) - H(\wt{\lambda}_t) = \wt{O}\left( \frac{D S^{1/3}}{t^{1/3}} + \frac{D S \sqrt{A} }{\sqrt{t}} \right),
\label{eq_max_entropy}
\end{align}
where $\lambda^{\star} \in \argmax_{\lambda \in \Lambda} H(\lambda)$ and $\wt{\lambda}_t$ is the empirical state-action frequency at time $t$, i.e., $\wt{\lambda}_t(s,a) = \frac{N_t(s,a)}{t}$. The \TREASURE sample complexity translates into the first time step $t \geq 1$ such that $\wt{\lambda}_t(s,a) \geq \frac{1}{t}$ for all $(s,a) \in \SA$. However, the state-action entropy $H$ corresponds to the sum of a function related to each state-action frequency, and maximizing it provides no guarantee on each summand, i.e., on each state-action frequency. Indeed, assume that there exists a time $t$ such that $\wt{\lambda}_t(s,a) \geq \frac{1}{t}$ for all $(s,a) \in \SA$. This implies that $H(\wt{\lambda}_t) \geq \frac{SA}{t} \log(t)$. However, the regret bound of Eq.\,\ref{eq_max_entropy} cannot be leveraged to show that $t$ must necessarily be small enough. Overall, it seems that directly optimizing \MaxEnt is unfruitful in guaranteeing the visitation of each state-action pair at least once, and thus in provably enforcing the \TREASURE objective.

%\label{app_ohrd_worstcase_maxent}
Instead of maximizing \MaxEnt, the discussion above encourages us to optimize the \say{worst-case} summand of the entropy function, by maximizing over $\Lambda$ the following function
\begin{align*}
    F(\lambda) := \min_{(s,a) \in \SA} \lambda(s,a).
\end{align*}
It is straightforward to show that $F$ is concave in $\lambda$ (as the minimum of $S \times A$ concave functions), as well as $1$-Lipschitz-continuous w.r.t.\,the Euclidean norm $\norm{\cdot}_2$, i.e.,
\begin{align*}
    \forall (\lambda, \lambda') \in \Lambda^2, \abs{ F(\lambda) - F(\lambda')} \leq \norm{\lambda - \lambda'}_{\infty} \leq \norm{\lambda - \lambda'}_{2}.
\end{align*}
However, $F$ is a non-smooth function, therefore the Frank-Wolfe algorithmic design of \citep{hazan2019provably, cheung2019exploration} cannot be leveraged. Instead, we propose to use the mirror descent algorithmic design of \citep[][Sect.\,5]{cheung2019exploration} that can handle general concave functions. It guarantees that there exists a constant $\beta > 0$ such that, with overwhelming probability (here we exclude logarithmic terms for ease of exposition)
\begin{align*}
    F(\lambda^{\star}) - F(\wt{\lambda}_t) \leq \frac{\beta D }{t^{1/3}} + \frac{\beta D S \sqrt{A} }{\sqrt{t}}.
\end{align*}
Introduce $\omega^{\star} := F(\lambda^{\star}) = \min_{s,a} \lambda^{\star}(s,a) \in (0, \frac{1}{SA}]$. We then have
\begin{align}
    F(\wt{\lambda}_t) \geq \omega^{\star} - \frac{\beta D }{t^{1/3}} - \frac{\beta D S \sqrt{A} }{\sqrt{t}}.
\label{eq_worst_case_lambda}
\end{align}
Equipped with Eq.\,\ref{eq_worst_case_lambda}, we can easily prove that if
\begin{align}
    t = \Omega\left( \min\left\{ \frac{D^2 S^2 A}{(\omega^{\star})^2}, \frac{D^3}{(\omega^{\star})^3}  \right\} \right),
\label{sample_complexity_ohrd_worst_case_lambda}
\end{align}
then $F(\wt{\lambda}_t) \geq \frac{1}{t}$, which immediately implies that the \TREASURE is discovered. This sample complexity result is quite poor compared to Lem.\,\ref{prop_ohrd}. In particular, it depends polynomially on $(\omega^{\star})^{-1}$, which cannot be smaller than $SA$.

\vspace{0.2in}
\section{Application: Goal-Free Cost-Free Exploration in Communicating MDPs}
\label{app_extension_goalfree_costfree}

%\jt{To write}

\subsection{Reward-Free Exploration in Finite-Horizon MDPs vs.\,Cost-Free Exploration in Goal-Conditioned RL}

Jin et al.\,\citep{jin2020reward} introduced the reward-free framework in the finite-horizon case, which we recall is a special case of a goal-oriented (i.e., SSP) problem where each episode terminates after exactly $H$ steps. The agent receives as input an accuracy level $\epsilon > 0$, a confidence level $\delta \in (0,1)$, the state and action spaces, and the horizon $H$, while no knowledge is provided about the transition model $p$. The learning process is decomposed into two phases. \ding{172}~\textit{Exploration phase:} The agent first collects trajectories from the MDP without a pre-specified reward function and returns an estimate of the transition model $\wh p$. \ding{173}~\textit{Planning phase:} The agent receives an arbitrary reward function and is tasked with computing an $\epsilon$-optimal policy with probability at least $1-\delta$, without any additional interaction with the environment. The objective is to minimize the duration of the exploration phase needed to simultaneously enforce any requested planning guarantee. 

In \citep{jin2020reward} the reward-free exploration problem is studied for any arbitrary MDP, where there may exist states that are difficult or impossible to reach. The core mechanism in their analysis is to partition the states depending on their ease of being reached within $H$ steps. Specifically, they distinguish between \textit{significant} states, that can be sufficiently visited and whose transition probability can thus be accurately estimated, and \textit{insignificant} states that are too difficult to reach within $H$ steps, but therefore have negligible contribution to any reward optimization.

\begin{minipage}{0.6\linewidth}
    Interestingly, in the goal-conditioned setting this distinction may no longer be meaningful. By way of illustration, consider any fixed horizon $H$ and the toy environment in Fig.\,\ref{fig:toy_ex_H}. Suppose that the objective is to quickly reach state $z$ (i.e., the goal state is $z$, the starting state is $x$ and all costs are equal to~1). Even though state $y$ is \textit{insignificant} within $H$ steps (in the finite-horizon sense of \citealp{jin2020reward}, for any positive \say{significance level}), it is actually crucial in solving the objective, as $z$ can be reached deterministically in 1 step from $y$. Extrapolating this scenario, in the goal-conditioned setting, we may have an effective horizon of $H = + \infty$ for some goals, which implies that the transition model $p$ must be accurately estimated across the \textit{entire} state-action space to ensure that a near-optimal goal-conditioned policy can be computed.
\end{minipage}%
\hfill%
\begin{minipage}{0.36\linewidth}
\definecolor{light-gray}{gray}{0.95}
    \flushright
	\begin{tikzpicture}[thick,scale=0.9,rotate=90,transform shape]
	\node[circle,fill=light-gray,rotate=-90,transform shape] at (1,2.5) (1) {\scriptsize{$x$}}; %(0,0) (1) {};
    \node[circle,fill=black] at (0,1.5) (2) {}; %(1,1) (2) {};
    %\node[circle,fill=black] at (1,0.5) (3) {}; % (1,0) (3) {};
    \node[circle,fill=black] at (2,1.5) (4) {}; % (1,-1) (4) {};
    \node[circle,fill=black] at (0,-1) (5) {}; % (2,1) (5) {};
    %\node[circle,fill=black] at (1,-0.5) (6) {}; % (2,0) (6) {};
    \node[circle,fill=black] at (0,0.5) (2bis) {}; % (2,1) (5) {};
    \node[circle,fill=black] at (2,0.5) (4bis) {}; % (2,1) (5) {};
    \node[circle,fill=black] at (2,-1) (7) {}; % (2,-1) (7) {};
    \node[circle,fill=black] at (2,-2) (8) {}; % (2,-1) (7) {};
    \node[circle,fill=light-gray,rotate=-90,transform shape] at (0,-2) (9) {\scriptsize{$z$}}; % (2,-1) (7) {};
    \node[circle,fill=light-gray,rotate=-90,transform shape] at (2,-3) (10) {\scriptsize{$y$}}; % (2,-1) (7) {};
    %\node[circle,fill=red] at (1,-1.5) (8) {}; % (3,0) (8) {};
	\begin{scope}[>={Stealth[black]},
	every node/.style={fill=white,circle},
	every edge/.style={draw=gray, thick},
	every loop/.style={draw=gray, thick, min distance=5mm,looseness=5}]
	\path[]
	(1) [->,thick] edge[] node[scale=0.001, text width = 0mm] {} (2)
	%(1) [->,thick] edge[] node[scale=0.001, text width = 0mm] {} (3)
	(1) [->,thick] edge[] node[scale=0.001, text width = 0mm] {} (4)
	(2) [->,thick] edge[] node[scale=0.001, text width = 0mm] {} (2bis)
	(2bis) [->,thick] edge[dashed] node[scale=0.001, text width = 0mm] {} (5)
	%(3) [->,thick] edge[] node[scale=0.001, text width = 0mm] {} (6)
	(4) [->,thick] edge[] node[scale=0.001, text width = 0mm] {} (4bis)
	(4bis) [->,thick] edge[dashed] node[scale=0.001, text width = 0mm] {} (7)
    (5) [->,thick] edge[] node[scale=0.001, text width = 0mm] {} (9)
    %(6) [->,thick] edge[] node[scale=0.001, text width = 0mm] {} (8)
    (7) [->,thick] edge[] node[scale=0.001, text width = 0mm] {} (8)
    (8)[->,thick] edge[] node[scale=0.001, text width = 0mm] {} (10)
    (10)[->,thick] edge[bend left=30] node[scale=0.001, text width = 0mm] {} (9);
	\end{scope}
	\end{tikzpicture}
    
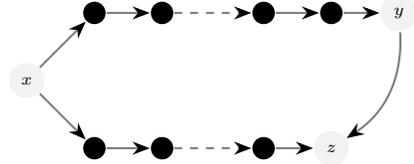
\captionof{figure}{\small The agent starts at state $x$ and reaches $z$ in $H$ steps with probability $1/2$, and $y$ in $H+1$ steps with probability $1/2$. From state $y$ the agent deterministically transitions to state $z$ in 1 step.}
    \label{fig:toy_ex_H}
\end{minipage}%

%\subsection{Cost-Free Exploration in Goal-Conditioned RL}

As a result, the challenges that emerge in the cost-free exploration problem in goal-conditioned RL are orthogonal to the ones in finite-horizon \citep{jin2020reward}: a \textit{constraint on the environment is added} (all states must now be reachable, Asm.\,\ref{asm_communicating}), allowing the \textit{removal of the constraint on performance} (which is not limited to $H$ steps anymore) and thus enabling to tackle the more general class of goal-oriented problems. 

For a designated goal state $g \in \cS$, recall that the SSP objective is to compute a policy $\pi: \mathcal S\rightarrow \mathcal A$ minimizing the cumulative cost before reaching $g$. Formally, the (possibly unbounded) value function is defined as
%\vspace{-0.05in}
\begin{align*}
    V_{\pi}(s \rightarrow g) := \mathbb{E}\bigg[ \sum_{t=1}^{\tau_{\pi}(s \rightarrow g)} c(s_t, \pi(s_t)) ~\big\vert~ s_1 = s \bigg],
\end{align*}
where $\tau_{\pi}(s \rightarrow g) := \inf \{ t \geq 0: s_{t+1} = g \,\vert\, s_1 = s, \pi \}$ is the (random) number of steps needed to reach $g$ from $s$ when executing policy $\pi$. An optimal policy (if it exists) is denoted by $\pi^\star \in \arg\min_\pi V_{\pi}(s \rightarrow g)$. 

Without loss of generality, we consider throughout that the maximum $c_{\max}$ of the cost functions that we intend to consider in the planning phase is equal to 1. On the other hand, the minimum value $c_{\min}$ has a more subtle impact on the type of performance guarantees we can obtain. Following \citep{tarbou}, for any cost function $c$ and any pair of initial and goal states $s$ and $g$, we introduce a slack parameter $\theta \in [1, + \infty]$ and we say that a policy $\wh \pi$ is $(\epsilon,\theta)$-optimal if~\footnote{This reduces to standard $\epsilon$-optimality for $\theta\rightarrow \infty$.}
\begin{align}
    V^{\wh{\pi}}(s \rightarrow g) \leq \min_{\pi : \mathbb{E}\left[ \tau_{\pi}(s \rightarrow g) \right] \leq \theta D_{s, g}} V^{\pi}(s \rightarrow g) + \epsilon.
\label{eq_eps_theta_optimal}
\end{align}
We consider this restricted optimality only in the general cost case of $c_{\min} = 0$, where the $(\epsilon, + \infty)$-optimal policy may not be proper \citep{tarbouriech2019no, cohen2020near}. In that case, we are interested in finding the best proper policy, which is what the restricted optimality in Eq.\,\ref{eq_eps_theta_optimal} enables as it constrains the targeted policy to be proper. This consideration is required in the analysis of \citep{tarbou} when translating the performance from the cost-perturbed MDP to the original MDP, which needs constraining the expected goal-reaching time of the targeted policy. 

We are now ready to formally define the \textit{goal-free cost-free exploration} problem. It is characterized by an accuracy level $0 < \epsilon \leq 1$, a confidence level $\delta \in (0,1)$, a minimum cost $c_{\min} \in [0, 1]$ and a slack parameter $\theta \in [1, + \infty]$ (and we allow either $c_{\min} = 0$ or $\theta = + \infty$, but not both simultaneously). After its exploration phase (whose number of time steps defines the sample complexity of the problem), the agent is expected to be able to compute, with probability at least $1-\delta$, an $(\epsilon, \theta)$-optimal goal-conditioned policy $\wh \pi$ for \textit{any} goal state $g \in \cS$ and \textit{any} cost function $c \in [c_{\min}, 1]$, i.e., satisfying Eq.\,\ref{eq_eps_theta_optimal} for all $s \in \cS$.

\subsection{Proof of Lem.\,\ref{theorem_goal_free_cost_free_main}}

%\begin{proof}
We show that instantiating \OSP for carefully selected sampling requirements $b_t(s,a)$ enables to obtain the guarantee of Lem.\,\ref{theorem_goal_free_cost_free_main}. To do so, we build on the sample complexity analysis of solving a fixed-goal SSP problem with a generative model of \citep{tarbou}. Specifically, we introduce the following sampling requirement function
\begin{align}
     \phi(X,y) := \alpha \cdot \Bigg( \frac{X^3 \wh{\Gamma}}{y \epsilon^2} \log\left( \frac{X S A }{y \epsilon \delta} \right) + \frac{X^2 S}{y \epsilon} \log\left( \frac{X S A }{y \epsilon \delta} \right) + \frac{X^2 \wh{\Gamma}}{y^2} \log^2\left( \frac{X S A }{y \delta} \right) \Bigg),
\label{alloc_function_1}
\end{align}
where $\alpha > 0$ is a numerical constant and $\wh{\Gamma} := \max_{s,a} \norm{ \widehat{p}(\cdot\vert s,a)}_0 \leq \Gamma$ is the largest support of $\wh p$.

The sampling requirement function of Eq.\,\ref{alloc_function_1} instantiated for specific values of $X$ and $y$ is used to guide the \OSP algorithm. Specifically, the analysis distinguishes between two cases: \textit{either} $c_{\min} > 0$ and the cost function considered in the planning phase can be the same as the original one, \textit{or} $c_{\min} = 0$ and all costs incur an additive perturbation of $ \epsilon / (\theta D) > 0$ (as considered in the analysis of \citep{tarbou}). As stated in Sect.\,\ref{subsection_costfree_goalfree}, we set $\omega := \max\big\{ c_{\min}, \epsilon / (\theta D) \big\}$, which is guaranteed to be positive since we enforce either $c_{\min} = 0$ or $\theta = + \infty$, but not both simultaneously. As such, in Eq.\,\ref{alloc_function_1} we define $y := \omega$ to be equal to the minimum cost of either the true or the perturbed cost function. As for the value of $X$, we perform the following distinction of cases.

\ding{172} First let us assume that the learning agent has prior knowledge of the diameter $D$ of the MDP. Then we set $X:=D$. Since the analysis of \citep{tarbou} accurately estimates the transition kernel and thus holds for arbitrary cost function in $[\omega, 1]$, we can ensure that collecting at least $\phi(D,\omega)$ samples from each state-action pair provides the $\epsilon$-optimality cost-free planning guarantee of Lem.\,\ref{theorem_goal_free_cost_free_main}. The total time required to collect such samples is upper bounded by $D S A \phi(D,\omega^{-1})$, which directly yields the sample complexity guarantee stated in Lem.\,\ref{theorem_goal_free_cost_free_main}.

\ding{173} Second we show that we can relax the assumption of knowing the diameter $D$ without altering the sample complexity guarantee. To do so, we begin the algorithm by a procedure which computes a quantity $\wh D$ such that $D \leq \wh{D} \leq D(1+\epsilon)$ with high probability. From App.\,\ref{app_diameter_estimation}, this can be done in $\wt{O}(D^3 S^2 A / \epsilon^2)$ time steps by leveraging \OSP. We thus begin the algorithm by running such diameter-estimation subroutine. Crucially, we note that its sample complexity is subsumed in the total sample complexity of Lem.\,\ref{theorem_goal_free_cost_free_main}. Then we simply apply the reasoning in case \ding{172} by considering $X := \wh{D}$ in the allocation of Eq.\,\ref{alloc_function_1} instead of $X = D$. Since $\wh D$ is a sufficiently tight upper bound on $D$ (i.e., $\wh D = O(D)$), we ultimately obtain the same sample complexity guarantee as in case \ding{172}.

\vspace{0.15in}
%\vspace{-0.03in}
\section{Other Applications}\label{ssec:other.applications}
\label{app_other_applications}

In this section, we provide additional applications where \OSP can be leveraged to readily obtain an online learning algorithm. We first summarize them here.

\textbf{Diameter estimation (see App.\,\ref{app_diameter_estimation}).} \OSP can be leveraged to estimate the MDP diameter $D$. In App.\,\ref{app_diameter_estimation} we develop a \OSP-based procedure that computes an estimate $\wh{D}$ such that $D \leq \wh{D} \leq (1+\epsilon)D$ in $\wt{O}( D^3 S^2 A / \epsilon^2)$ time steps. This improves on the diameter estimation procedure recently devised in \citep{zhang2019regret} by a multiplicative factor of $D S^2$. As $\wh{D}$ provides an upper bound on the optimal bias span $sp(h^{\star})$, our procedure may be of independent interest for initializing average-reward regret-minimization algorithms that leverage prior knowledge of $sp(h^{\star})$ (as done in e.g., \citep{zhang2019regret}).

\textbf{PAC-policy learning (see App.\,\ref{app_pac}).}
One of the most common \SO-based settings is the computation of an $\epsilon$-optimal policy via sample-based value iteration. Since \OSP is agnostic to how the sampling requirements are generated, we can easily integrate it with any state-of-the-art \SO-based algorithm and directly inherit its properties. For instance, in App.\,\ref{app_pac} we show that \OSP can be easily combined with \textsc{Bespoke}~\cite{zanette2019almost} to obtain a competitive online learning algorithm for the policy learning problem. In fact, the sample complexity of the resulting algorithm is only a factor $D$ worse than existing online learning algorithms in the worst case and, leveraging the refined problem-dependent bounds of \textsc{Bespoke}, it is likely to be superior in many MDPs.

\textbf{Bridging bandits and MDPs with \OSP (see App.\,\ref{app_extension_bandit_MDP}).} In multi-armed bandit (MAB) an agent directly collects samples by pulling arms. If we map each arm to a state-action pair, we can see any MAB algorithm as having access to an \SO. As such, we can readily turn any bandit algorithm into an RL online linear algorithm by calling \OSP to generate the samples needed by the MAB algorithm. Exploiting this procedure, in App.\,\ref{app_extension_bandit_MDP} we show how we can tackle problems such as \textit{best-state identification} and \textit{active exploration} (i.e., state-signal estimation) in the communicating MDP setting, for which no specific online learning algorithm exists yet.

% !TEX root = main.tex

\subsection{Application: Diameter Estimation}
\label{app_diameter_estimation}

\OSP can be leveraged to estimate the diameter $D$ which is a quantity of interest in the average-reward setting. Indeed, $D$ dictates the performance of reward-based no-regret algorithms \citep{jaksch2010near}, and some works assume that an upper bound on the optimal bias span $sp(h^{\star})$ is known (e.g., \citep{jian2019exploration}). Since we have $sp(h^{\star}) \leq r_{\max} D$ (e.g., \citep{bartlett2009regal}), upper bounding $D$ enables to relax this assumption. Recently, for such purpose of upper bounding $sp(h^{\star})$, \citep{zhang2019regret} developed an initial procedure based on successive applications of \UCRLtwo that can compute an estimate $\wh{D}$ such that $D \leq \wh{D} \leq (1+\epsilon)D$ in $\wt{O}( D^4 S^4 A / \epsilon^2)$ time steps (see \citep[][App.\,D \& Alg.\,3 \say{LD: Learn the Diameter}]{zhang2019regret}). In Alg.\,\ref{algo:subroutine} we derive an iterative estimation procedure based on \OSP which can compute such upper bound of $D$ faster, namely in $\wt{O}(D^3 S^2 A / \epsilon^2)$ time steps, while simultaneously providing an accurate estimation of the transition dynamics. As such it may be an initial procedure of independent interest for regret-minimization algorithms in the average-reward setting. Note that the procedure is similar to the one considered in \citep{tarbou} to estimate an upper bound of the SSP-diameter of a given SSP problem in the generative model case, while here we focus on the estimation of the diameter (i.e., worst-case SSP diameter) in the \textit{online} case by leveraging \OSP.

We define a notation used throughout the section, $\norm{U}_{\infty}^{\infty} := \max_{s,s'} U(s \rightarrow s')$, which holds for any quantity $U$ that can be naturally mapped to a $\cS \times \cS$ matrix.

\begin{lemma}\label{lemma_delta_subroutine}
    With probability at least $1-\delta$, Alg.\,\ref{algo:subroutine}:
    \begin{itemize}[leftmargin=.2in,topsep=-4pt,itemsep=0pt,partopsep=0pt, parsep=0pt]
\item has a sample complexity bounded by $\wt{O}\left( D^3 S^2 A / \epsilon^2 \right)$,
\item requires at most $\log_2\left(D(1+\epsilon)\right) + 1$ inner iterations,
\item solves the \MODEST problem for an accuracy level $\eta > 0$ and outputs an optimistic $\cS \times \cS$ matrix $\wt{v}$ s.t.\,$\frac{\epsilon}{2 D} \leq \eta \leq \frac{\epsilon}{\norm{\wt{v}}_{\infty}^{\infty}}$,
\item outputs a quantity $\wh D := \left( 1 + 2 \eta \norm{\wt{v}}_{\infty}^{\infty} \right) \norm{\wt{v}}_{\infty}^{\infty}$ that verifies $D \leq \wh D \leq \left( 1 + 2 \epsilon (1+\epsilon) \right) (1+\epsilon) D$.
\end{itemize}
\end{lemma}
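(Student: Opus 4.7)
The plan is to view Alg.\,\ref{algo:subroutine} as an outer doubling loop on a running upper-bound guess $\Delta_k=2^{k-1}$ for $D$, with four steps per inner iteration: (a) invoke \OSP-for-\MODEST (Lem.\,\ref{prop_arme}) with target accuracy $\eta_k := \epsilon/(2\Delta_k)$ to obtain an estimate $\wh p_k$ with $\|\wh p_k(\cdot|s,a)-p(\cdot|s,a)\|_1 \leq \eta_k$ for every $(s,a)$; (b) compute the optimistic SSP value matrix $\wt v_k(s\to s')$ via the EVI-SSP scheme of App.\,\ref{app_value_iteration_SSP} applied pair-wise; (c) halt as soon as $\|\wt v_k\|_\infty^\infty \leq \Delta_k$ and return $\wh D = (1+2\eta_k \|\wt v_k\|_\infty^\infty)\|\wt v_k\|_\infty^\infty$; (d) otherwise double to $\Delta_{k+1} = 2\Delta_k$. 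All claims hold on the intersection of the high-probability events of the inner \MODEST calls, whose failure probabilities are geometrically rescaled so that a union bound leaves the total below $\delta$.

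The first two items of the lemma then follow directly. Optimism in Lem.\,\ref{lemma_app_value_iteration_SSP} gives $\|\wt v_k\|_\infty^\infty \leq D$, so the halting rule triggers no later than $\Delta_k \geq D$, bounding the number of inner iterations by $\lceil \log_2 D\rceil + 1 \leq \log_2(D(1+\epsilon))+1$. Plugging $\eta_k$ into the \MODEST sample complexity of Lem.\,\ref{prop_arme} gives a per-iteration cost of $\wt O(D\Gamma S A\Delta_k^2/\epsilon^2 + DS^2A\Delta_k/\epsilon + D^{3/2}S^2A)$; the geometric sum over $k$ is dominated by its last term (with $\Delta_k\leq 2D$), totalling $\wt O(D^3 \Gamma S A / \epsilon^2) = \wt O(D^3 S^2 A/\epsilon^2)$ using $\Gamma\leq S$.

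The last two items are the heart of the argument. I would apply a self-bounding SSP perturbation lemma that converts $\|\wh p_k-p\|_1 \leq \eta_k$ into a multiplicative value-function bound of the form $V^\star_p(s\to s') \leq (1+2\eta\|\wt v\|_\infty^\infty)\,\wt v(s\to s')$, valid whenever $\eta \|\wt v\|_\infty^\infty < 1/2$. The stopping rule $\|\wt v_k\|_\infty^\infty \leq \Delta_k$ combined with $\eta_k \Delta_k = \epsilon/2$ enforces $\eta_k \|\wt v_k\|_\infty^\infty \leq \epsilon/2$, which both justifies the perturbation bound and places $\eta_k$ inside $[\epsilon/(2D),\,\epsilon/\|\wt v_k\|_\infty^\infty]$ as claimed. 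Taking the maximum over $(s,s')$ yields $D \leq \wh D$; for the matching upper bound, either termination occurs at $k=1$ (trivial), or the previous iteration failed, forcing $\Delta_{k-1} < \|\wt v_{k-1}\|_\infty^\infty \leq D$ and hence $\Delta_k \leq 2D$, so $\|\wt v_k\|_\infty^\infty \leq D \leq (1+\epsilon)D$ plugs into the definition of $\wh D$ to give $\wh D \leq (1+2\epsilon(1+\epsilon))(1+\epsilon)D$.

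The main technical obstacle is the self-consistent SSP perturbation bound above: the SSP value depends on $p$ in a highly nonlinear way through the hitting time, so the error does not factor out additively as in the discounted case. The natural route is to write the optimistic Bellman equation for $\wt v$, add and subtract $p(\cdot|s,a)^\top \wt v$, bound the cross term by $\|\wh p-p\|_1\cdot \|\wt v\|_\infty$, and close the resulting recursion by a fixed-point argument; the peculiar multiplicative factor $(1+2\eta\|\wt v\|_\infty^\infty)$ appearing in $\wh D$ is precisely the fixed point of this argument, which explains why the doubling schedule and stopping rule are designed so that $\eta\|\wt v\|_\infty^\infty$ remains strictly below $1/2$.
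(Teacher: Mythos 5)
Your proposal is correct and follows essentially the same route as the paper: a doubling schedule on the diameter guess, optimism ($\wt v \leq V^\star_p \leq D$) to bound the number of iterations and guarantee termination, inversion of the \MODEST accuracy $\eta \asymp \epsilon/W$ to get the $\wt O(D^3S^2A/\epsilon^2)$ cost, and a multiplicative SSP value-perturbation bound to sandwich $\wh D$. The only difference is that the "main technical obstacle" you identify --- the self-bounding perturbation bound with factor $(1+2\eta\norm{\wt v}_{\infty}^{\infty})$ --- is not re-derived in the paper but imported as an off-the-shelf SSP simulation lemma (\citep[Lem.\,B.4]{cohen2020near}), applied to the \emph{greedy} policy $\wt\pi$ at the diameter-achieving pair $(s_1,s_2)$ and combined with the \EVI precision guarantee of Lem.\,\ref{lemma_app_value_iteration_SSP} (which is where the extra $(1+\epsilon)$ factors in the final bound actually come from, a step your accounting glosses over but which only affects constants).
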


%\setcounter{algocf}{1}
%\algorithmstyle{ruled}
\begin{algorithm}[tb]
\begin{small}
  \caption{\OSP-based procedure to estimate the diameter}
    \label{algo:subroutine}
\begin{algorithmic}[1]
  \STATE {\bfseries Input:} accuracy $\epsilon > 0$, confidence level $\delta \in (0,1)$.%, communicating environment $M$ with unknown model $p$.
  \STATE Set $W := \tfrac{1}{2}$ and $\norm{\wt{v}}_{\infty}^{\infty} := 1$.
  \WHILE{$\norm{\wt{v}}_{\infty}^{\infty} > W$}
  \STATE Set $W \leftarrow 2 W$.
  \STATE Set the accuracy $\eta := \frac{\epsilon}{W}$.
  \STATE Collect additional samples by running \OSP for the \MODEST problem with accuracy $\frac{\eta}{2}$ and confidence level $\delta$.
  \FOR{each state $s \in \cS$}
  \STATE Compute a vector $\wt{v}(\cdot \rightarrow s)$ using EVI for SSP, with goal state $s$, unit costs and \VI precision $\gammaVI := \frac{\min\{1,\epsilon\}}{2}$ (see App.\,\ref{app_value_iteration_SSP}).
  \ENDFOR
  \ENDWHILE
  \STATE {\bfseries Output:} the quantity $\wh D := \left( 1 + 2 \eta \norm{\wt{v}}_{\infty}^{\infty} \right) \norm{\wt{v}}_{\infty}^{\infty}$.
\end{algorithmic}
\end{small}
\end{algorithm}

\begin{proof}
  
We will assume throughout that the event $\mathcal{E}$ (defined in App.\,\ref{app_value_iteration_SSP}) holds. We now give a useful statement stemming from optimism:

%\begin{lemma}\label{lemma_useful}
``At any stage of Alg.\,\ref{algo:subroutine}, for any given goal state, denote by $\wt{v}$ the vector computed using EVI for SSP. Then under the event $\mathcal{E}$, we have component-wise (i.e., starting from any non-goal state): $\wt{v} \leq \min_{\pi} V^{\pi}_p \leq D$.''
%\begin{align*}
%\end{align*}
%\end{lemma}

To prove this useful statement, we observe that the first inequality stems from Lem.\,\ref{lemma_app_value_iteration_SSP} of App.\,\ref{app_value_iteration_SSP} while the second inequality uses the definition of the diameter $D$ and the fact that the considered costs are equal to $1$.

Now, denote by $n$ the iteration index of the Alg.\,\ref{algo:subroutine} (starting at $n=1$), so that $W_n = 2^n$. Introduce $N := \min \{n \geq 1: \norm{\wt{v}_n}_{\infty}^{\infty} \leq W_n \}$. We have $\norm{\wt{v}_n}_{\infty}^{\infty} \leq D$ at any iteration $n \geq 1$ from the useful statement on optimism above. Since $(W_n)_{n \geq 1}$ is a strictly increasing sequence, Alg.\,\ref{algo:subroutine} is bound to end in a finite number of iterations (i.e., $N < + \infty$), and given that \mbox{$W_{N-1} \leq \norm{\wt{v}_{N-1}}_{\infty}^{\infty} \leq D$}, we get $N \leq \log_2\left(D\right) + 1$. Moreover, we have $\norm{\wt{v}_{N}}_{\infty}^{\infty} \leq W_{N}$ and $\eta_N = \frac{\epsilon}{W_N}$, which implies that $\eta_N \leq \frac{\epsilon}{\norm{\wt{v}_{N}}_{\infty}^{\infty}}$. Moreover, combining $W_{N-1} \leq D$ and $W_{N-1} = \frac{W_N}{2} = \frac{\epsilon}{2 \eta_N}$ yields that $\frac{\epsilon}{2D} \leq \eta_N$.

Denote by $\eta := \eta_N$ the achieved \MODEST accuracy at the end of Alg.\,\ref{algo:subroutine}. Plugging in the guarantee of Prop.\,\ref{prop_arme} yields a sample complexity of
\begin{align*}
\wt{O}\Big( \frac{D S^2 A}{\eta^2} \Big) = \wt{O}\Big( \frac{D^3 S^2 A}{\epsilon^2} \Big).
\end{align*}
Denote by $\wt{v} := \wt{v}_N$ the optimistic matrix output by Alg.\,\ref{algo:subroutine}. Consider $(s_1, s_2) \in \arg\max_{(s,s')} \min_{\pi} \mathbb{E}\left[\tau_{\pi}(s \rightarrow s') \right]$. Denote by $\wt{\pi}$ the greedy policy w.r.t.\,the vector $\wt{v}(\cdot \rightarrow s_2)$ in the optimistic model with goal state $s_2$. Then we have
\begin{align*}
    D &= \min_{\pi} \mathbb{E}\left[\tau_{\pi}(s_1 \rightarrow s_2) \right]  \mathbb{E}\left[\tau_{\wt{\pi}}(s_1 \rightarrow s_2) \right] \myineeqa \left( 1 + 2 \eta \norm{\mathbb{E}\left[\wt{\tau}_{\wt{\pi}}\right]}_{\infty}^{\infty} \right) \mathbb{E}\left[\wt{\tau}_{\wt{\pi}}(s_1 \rightarrow s_2) \right] \\
    &\myineeqb \left( 1 + 2 \eta (1+\epsilon) \norm{\wt{v}}_{\infty}^{\infty} \right) (1+\epsilon) \wt{v}(s_1 \rightarrow s_2) \leq \left( 1 + 2 \eta (1+\epsilon) \norm{\wt{v}}_{\infty}^{\infty} \right) (1+\epsilon) \norm{\wt{v}}_{\infty}^{\infty} := \wh D \\
    &\myineeqc \left( 1 + 2 \eta (1+\epsilon) \norm{\wt{v}}_{\infty}^{\infty} \right) (1+\epsilon) D \myineeqd \left( 1 + 2 \epsilon (1+\epsilon) \right) (1+\epsilon) D,
    %\lesssim (1+\epsilon)D,
\end{align*}
where (a) corresponds to an SSP simulation lemma argument (see \citep[][Lem.\,B.4]{cohen2020near}; \citep[][Lem.\,3]{tarbouriech2020improved}) given that a \MODEST accuracy of $\eta$ is fulfilled, (b) comes from the value iteration precision $\gammaVI := \frac{\min\{1,\epsilon\}}{2}$ which implies that $\mathbb{E}\left[\wt{\tau}_{\wt{\pi}}\right] \leq (1+2\gammaVI)\wt{v} \leq (1+\epsilon)\wt{v}$ component-wise according to Lem.\,\ref{lemma_app_value_iteration_SSP} of App.\,\ref{app_value_iteration_SSP}, (c) is implied by the useful statement on optimism given at the beginning of the proof, and finally (d) leverages that $\eta \norm{\wt{v}}_{\infty}^{\infty} \leq \epsilon$.
\end{proof}

%\vspace{0.05in}
% !TEX root = main.tex

\subsection{Application: PAC-Policy Learning}
\label{app_pac}

One of the most common \SO-based settings is the computation of an $\epsilon$-optimal policy via sample-based value iteration. Since \OSP is agnostic to how the sampling requirements are generated, we can easily integrate it with any state-of-the-art \SO-based algorithm and directly inherit its properties. For instance, consider the \textsc{Bespoke} algorithm introduced by \citep{zanette2019almost}. \textsc{Bespoke} proceeds through phases and at the beginning of each phase $k$, it determines the additional number of samples $n^{k+1}_{sa}$ that need to be generated at each state-action pair $(s,a)$ based on the estimates of the model and reward of the MDP computed so far. Then it simply queries the \SO as needed and it moves to the following phase. In order to turn \textsc{Bespoke} into an online learning algorithm, we can simply replace the query step by running \OSP until $n^{k+1}_{sa}$ samples are generated and then move to the next phase. Furthermore, let $b(s,a)$ be the total number of samples required by \textsc{Bespoke} in each state-action pair as stated by \citep[][Thm.\,2]{zanette2019almost}, then we can directly apply Thm.\,\ref{theorem_upper_bound} and obtain the sample complexity of the online version of \textsc{Bespoke} (\textsc{Online-Bespoke}). As discussed in Sect.\,\ref{so} the resulting complexity is \textit{at most} a factor $D$ larger than the one of (offline) \textsc{Bespoke} plus an additional term of order $\wt O(D^{3/2} S^2 A)$ independent from the desired accuracy $\epsilon$. It is interesting to contrast this result with existing online algorithms for this problem. While to the best of our knowledge, there is no algorithm specifically designed for optimal policy learning, we can rely on regret-to-PAC conversion (see e.g., \citep[][Sect.\,3.1]{jin2018is-q-learning}) to derive sample complexity guarantees for existing regret minimization algorithms and do a qualitative comparison.\footnote{Regret minimization guarantees are usually provided for the finite-horizon setting, while \textsc{Bespoke} is designed for the discounted setting. Furthermore, the $\epsilon$-optimality guarantees for \SO-based algorithms are typically defined in $\ell_\infty$ norm, while the regret-to-PAC conversion only provides guarantees on average w.r.t.\ the initial distribution.} For instance, we can use \textsc{Euler}~\cite{zanette2019tighter} to derive an $\epsilon$-optimal policy. If we consider a worst-case analysis, \textsc{Euler} achieves the same sample complexity of \textsc{Bespoke}, which in turn matches the lower bound of~\citep{azar2013minimax}. As a result, \textsc{Online-Bespoke} would be a factor $D$ suboptimal w.r.t.\,to \textsc{Euler}. Nonetheless, our \SO-to-online learning conversion approach enables \textsc{Online-Bespoke} to directly benefit from the problem-dependent performance of \textsc{Bespoke}, which in many MDPs may outperform the guarantees obtained by using \textsc{Euler} as a online learning algorithm for policy optimization.

%\vspace{0.05in}
% !TEX root=main.tex

\subsection{Application: Bandit Problems with MDP Dynamics}
\label{app_extension_bandit_MDP}

\newcommand*\rectangled[1]{%
   \tikz[baseline=(R.base)]\node[draw,rectangle,inner sep=0.5pt](R) {#1};\!
}

\subsubsection{Algorithmic protocol} 

The sampling procedure \OSP provides an effective way to collect samples for states of the agent's choosing, and can thus be related to the multi-armed bandit setting by mapping arms (in bandits) to states (in MDPs). From Thm.\,\ref{theorem_upper_bound}, each state can now be \say{pulled} within $\wt{O}(D)$ time steps (instead of a single time step in the bandit case). This allows to naturally extend some \textit{pure exploration} problems from the bandit setting to the communicating MDP setting. The algorithmic protocol alternates between the two following strategies:
\begin{enumerate}
    \item[\textcolor{pearThree}{\rectangled{1}}] the \say{bandit algorithm} identifies the arm(s), i.e., state(s), from which a sample is desired,
    \item[\textcolor{pearThree}{\rectangled{2}}] \OSP is executed to collect a sought-after sample as fast as possible.
\end{enumerate}

To illustrate our decoupled approach we consider the two following problems: best-state identification (App.\,\ref{app_beststateid}) and reward-estimation, a.k.a.\,active exploration (App.\,\ref{app_rewardestimation}).

\subsubsection{Best-state identification}
\label{app_beststateid}

%\paragraph{Setting 1: Best-state identification.} 
This is the MDP extension of the best-arm identification problem in bandits \citep{audibert2010best}. Each state $s \in \cS := \{ 1, \ldots, S \}$ is characterized by a reward function $r_s$. For the sake of simplicity, we assume that the rewards are in $[0,1]$ and that there is a unique highest-rewarding state $s^{\star} := \arg\max_s r_s$. Let $r^{\star} := r_{s^{\star}}$. Consider a budget of $n$ steps. The objective is to bound the probability of error $e_n := \mathbb{P}(J_n \neq s^{\star})$, where $J_n$ is the state from which we desire a sample at step $n$. For $s \neq s^{\star}$, we introduce the following suboptimality measure of state $s$: $\Delta_s := r^{\star} - r_s$. We introduce the notation $(i) \in \{1, \ldots, S \}$ to denote the $i$–th best arm (with ties break arbitrarily). The hardness of the task will be characterized by the following quantities $H_1 := \sum_{s \in \cS} \frac{1}{\Delta_s^2}$ and $H_2 := \max_{s \in \cS} s \Delta_{(s)}^{-2}$. These quantities are equivalent up to a logarithmic factor since we have $H_2 \leq H_1 \leq \log(2S) H_2$. A fully connected MDP with known and deterministic transitions amounts to a multi-armed bandit problem of $K := S$ arms for our problem, thus the {\small\textsc{Successive Rejects}} algorithm \citep{audibert2010best} directly yields the following bound after $j$ time steps
\begin{align*}
    e_j \leq \frac{S(S-1)}{2} \exp\left( - \frac{j - SA}{\overline{\log}(S) H_2} \right), \quad \textrm{where~} \overline{\log}(S) := \frac{1}{2} + \sum_{i=1}^{S} \frac{1}{i}.
\end{align*}
In a general MDP, we combine \OSP (for the sample collection) with the {\small\textsc{Successive Rejects}} algorithm (for deciding which sample to collect). Consider any large enough budget of $n = \Omega(D^{3/2} S^2 A)$ time steps. Denote by $j_n$ the number of time steps during which \OSP effectively collects the desired sample stipulated by the {\small\textsc{Successive Rejects}} algorithm. Thm.\,\ref{theorem_upper_bound} yields that $n = \wt{O}\left( D j_n + D^{3/2} S^2 A \right)$, which means that $j_n = \wt{\Omega}\left( \frac{n - D^{3/2} S^2 A}{D} \right)$. Therefore we obtain the following guarantee.
%Hence, denoting by $p$ a polynomial function, we get the following bound on the probability of error

\begin{lemma} \label{lemma_beststateid} In any unknown communicating MDP with unique highest-rewarding state $s^{\star}$, combining \OSP with the {\small\textsc{Successive Rejects}} algorithm~\citep{audibert2010best} yields the existence of a polynomial function $p$ such that the probability $e_n$ of wrongly identifying the \say{best state} $s^{\star}$ at time step $n$ is upper bounded by
\begin{align*}
    e_n \leq p(S, A, D, n) \exp\left( -\frac{n - D^{3/2} S^2 A}{D \log(S) H_2} \right),
\end{align*}
which corresponds to an exponential decrease w.r.t.\,$n$ whenever $n$ is large enough (i.e., after the $D^{3/2} S^2 A$ burn-in phase).
\end{lemma}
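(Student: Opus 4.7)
The plan is to reduce the MDP best-state identification to the standard bandit analysis of \textsc{Successive Rejects} by treating each \OSP attempt as one bandit ``pull'', and then using Thm.\,\ref{theorem_upper_bound} to convert the MDP budget $n$ into an effective number of pulls $j_n$. First, I would fix a confidence level $\delta$ (to be tuned as a function of $n$, $S$, $A$, $D$) and invoke Thm.\,\ref{theorem_upper_bound} in the following form: for the sampling requirement sequence $b_t$ generated online by \textsc{Successive Rejects}, the event
\begin{equation*}
\mathcal{E}_{\OSPmath} := \Big\{ \forall t \geq 1,\ N_t(s,a) \geq b_t(s,a)\ \text{once}\ t \geq C\,\big(D\,\overline{B}_t + D^{3/2} S^2 A\,\log(1/\delta)\big) \Big\}
\end{equation*}
holds with probability at least $1-\delta$, for some absolute constant $C$ and $\overline{B}_t := \sum_{s,a} b_t(s,a)$. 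Equivalently, on $\mathcal{E}_{\OSPmath}$, the number $j_n$ of ``useful'' samples (i.e., samples at the state currently requested by \textsc{Successive Rejects}) collected within the first $n$ MDP time steps satisfies
\begin{equation*}
    j_n \;\geq\; \frac{n \;-\; C\, D^{3/2} S^2 A\,\log(1/\delta)}{C\, D\, \log^{\alpha}(nSA/\delta)},
\end{equation*}
for some $\alpha>0$ absorbing the polylog factors hidden in the $\wt O$ notation.

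Second, conditionally on $\mathcal{E}_{\OSPmath}$, the sequence of samples fed to \textsc{Successive Rejects} is statistically identical to what the algorithm would observe in a bandit problem with $S$ arms, because \OSP returns a fresh reward sample from the distribution $r_s$ whenever it succeeds in reaching $s$. Hence the error probability of \textsc{Successive Rejects} after $j_n$ effective pulls can be bounded by the standard result of~\citep{audibert2010best},
\begin{equation*}
    \mathbb{P}\big(J_n \neq s^\star \,\big|\, \mathcal{E}_{\OSPmath},\ j_n\big) \;\leq\; \frac{S(S-1)}{2}\,\exp\!\left( -\frac{j_n - S}{\overline{\log}(S)\, H_2}\right).
\end{equation*}
Plugging the lower bound on $j_n$ into this expression, absorbing $S$ and the polylog factors into a polynomial $p(S,A,D,n)$, and choosing $\delta = 1/n$ (so that $\mathbb{P}(\mathcal{E}_{\OSPmath}^{c}) \leq 1/n$ contributes only a polynomial factor), I obtain
\begin{equation*}
    e_n \;\leq\; p(S,A,D,n)\,\exp\!\left( -\frac{n - D^{3/2} S^2 A}{D\,\log(S)\, H_2}\right),
\end{equation*}
as desired.

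The main obstacle I anticipate is the bookkeeping at the interface between the two algorithms, specifically ensuring that the samples returned by \OSP and consumed by \textsc{Successive Rejects} are genuinely i.i.d.\,from the reward distribution at the requested state, so that the bandit analysis can be applied verbatim. This relies on the fact that \OSP, once it reaches a goal state $g \in \mathcal{G}_k$, plays a fresh action there and observes a reward independent of the past history given the state; this should follow from the Markov property and the construction of \OSP. A secondary technical point is that $\overline{B}_t$ in Thm.\,\ref{theorem_upper_bound} must be finite (Asm.\,\ref{asm_bounded_requirements}): here $b_t$ is generated by \textsc{Successive Rejects} which pulls each arm at most a computable function of $n$, so $\overline{B}_t = O(n)$ and the assumption is trivially met.
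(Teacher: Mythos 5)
Your proposal is correct and follows essentially the same route as the paper: invert the \OSP sample-complexity bound of Thm.\,\ref{theorem_upper_bound} to show that a budget of $n$ MDP steps yields $j_n = \wt{\Omega}\big((n - D^{3/2}S^2A)/D\big)$ effective pulls, then feed $j_n$ into the standard \textsc{Successive Rejects} error bound and absorb logarithmic factors and the failure probability of the \OSP event into the polynomial $p(S,A,D,n)$. Your additional care about the i.i.d.\ nature of the samples handed to the bandit algorithm and the explicit choice $\delta = 1/n$ are points the paper leaves implicit, but they do not change the argument.
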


\subsubsection{Reward estimation (a.k.a.\,active exploration)}
\label{app_rewardestimation}
%\paragraph{Setting 2: Active exploration.} 

The objective of this problem in bandits (resp.\,MDPs) is to accurately estimate the mean pay-off (resp.\,the average reward signal) at each arm (resp.\,state). Note that this problem was originally studied in the bandit setting (see e.g., \citep{carpentier2011upper}) and recently extended in ergodic MDPs by \citep{tarbouriech2019active} using a Frank-Wolfe approach. The extension to communicating MDPs remained an open question, and it becomes immediately addressed with \OSP.
We recall the problem formulation: for a desired accuracy $\epsilon > 0$, for each state-action pair $(s,a) \in \cS \times \cA$ with mean reward $r_{s,a}$ in $[0,1]$, we seek to output an estimate $\widehat{r}_{s,a}$ such that $\abs{\widehat{r}_{s,a} - r_{s,a}} \leq \epsilon$. Under the \OSP framework, it is sufficient to visit each state-action pair at least $\Omega\left(\epsilon^{-2} \right)$ times, which directly induces the following sample complexity guarantee.

\begin{lemma}\label{lemma_activeexp}
  In any unknown communicating MDP, \OSP can reach any reward-estimation accuracy $\epsilon > 0$ with high probability under a sample complexity scaling as $$\wt{O}\left( \frac{DSA}{\epsilon^2} + D^{3/2} S^2 A \right).$$
\end{lemma}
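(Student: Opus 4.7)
The plan is to cast reward estimation as an instance of the problem in Definition~\ref{def:sample.comeplxity} by exhibiting a fixed sampling requirement $b(s,a)$ sufficient to guarantee the desired accuracy, and then to invoke Theorem~\ref{theorem_upper_bound_general} to turn this requirement into the claimed sample complexity.

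First I would set the sampling requirement using a standard Hoeffding concentration argument. Since rewards lie in $[0,1]$, if $\widehat{r}_{s,a}^{\,N}$ denotes the empirical average of $N$ i.i.d.\,reward observations at $(s,a)$, then $\mathbb{P}(|\widehat{r}_{s,a}^{\,N} - r_{s,a}| > \epsilon) \leq 2 \exp(-2 N \epsilon^2)$. Taking a union bound over the $SA$ state-action pairs, it suffices that every $(s,a)$ be visited at least
\begin{align*}
b(s,a) := \Big\lceil \tfrac{1}{2\epsilon^2} \log(4 SA/\delta) \Big\rceil
\end{align*}
times to ensure $\mathbb{P}(\forall (s,a),\, |\widehat{r}_{s,a} - r_{s,a}| \leq \epsilon) \geq 1 - \delta/2$. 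Note that the reward samples collected by \OSP along its trajectories can all be reused, since the only property needed for Hoeffding is that, conditionally on having visited $(s,a)$ $N$ times, the corresponding reward observations are i.i.d.\,with mean $r_{s,a}$; this is unaffected by the fact that the visit times themselves are chosen by \OSP.

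Next I would feed the (time- and sample-independent) requirement $b(s,a)$ above as input to \OSP and apply Theorem~\ref{theorem_upper_bound_general} with confidence level $\delta/2$. Setting $\overline{b}(s,a) := b(s,a)$ and \mbox{$\overline{B} = SA \cdot b(s,a) = \widetilde{O}(SA/\epsilon^2)$}, the bound of Equation~\eqref{sample_complexity_1} immediately yields
\begin{align*}
\mathcal{C}\big(\OSPmath, b, \delta/2\big) \,=\, \widetilde{O}\!\left( \overline{B} D + D^{3/2} S^2 A \right) \,=\, \widetilde{O}\!\left( \tfrac{DSA}{\epsilon^2} + D^{3/2} S^2 A \right).
\end{align*}
A final union bound combines the event that \OSP collects the prescribed samples within this time (probability at least $1 - \delta/2$) with the event that the empirical averages concentrate at accuracy $\epsilon$ (probability at least $1 - \delta/2$), yielding the claim with overall probability at least $1 - \delta$. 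There is no real obstacle here: the proof is a direct reduction, and the only mild subtlety is ensuring that reusing the naturally occurring reward samples from \OSP's trajectories preserves the i.i.d.\,structure required by Hoeffding, which follows because the stopping times used by \OSP to choose which state to sample do not depend on the reward realizations themselves.
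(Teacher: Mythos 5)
Your proposal is correct and follows essentially the same route as the paper: the paper simply notes that visiting each $(s,a)$ at least $\Omega(\epsilon^{-2})$ times suffices and then invokes the main sample-complexity theorem with $B = \wt{O}(SA/\epsilon^2)$, exactly as you do. Your write-up merely makes explicit the Hoeffding/union-bound derivation of $b(s,a)$ and the splitting of the confidence level, which the paper leaves implicit.
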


\subsubsection{Comments}

\paragraph{Distinction between regret and sample complexity.} Note that the results above (Lem.\,\ref{lemma_beststateid} and \ref{lemma_activeexp}) do not provide any guarantee on the \textit{regret} of the corresponding algorithms (which is often the metric of interest in sequential learning). Indeed, our algorithmic approach does not track nor adapt to a notion of optimal performance. Likewise, there remains to derive lower bounds on these problems extended to MDPs, in order to quantity the optimality of our procedure. Nonetheless, our decoupled approach is, to the best of our knowledge, the first method with provably bounded sample complexity that can successfully extend classical bandit problems (such as the two aforementioned ones) to communicating MDPs.

\paragraph{On the link between MDPs and bandits with a special form of transportation costs.} Under the mapping between bandit arms and MDP states, our sampling paradigm has the effect of casting any MDP as a bandit problem with \textit{transportation costs} between arms. In our setting, the transportation cost from a state to another is unknown, initially unbounded and has to be refined over the learning process (the asymptotically optimal cost amounts to the shortest path distance between the two states). We believe that such a setting of unknown and learnable transportation costs is an interesting formalism to study in the bandit setting, as it may then be applied to the MDP extension and allow for smart algorithms that take into account each transportation cost when proposing the arm/state from which a sample is desired (i.e., in part \textcolor{pearThree}{\rectangled{1}} of the algorithmic protocol given at the beginning of App.\,\ref{app_extension_bandit_MDP}). For completeness, it is worth mentioning that some papers study various settings of movement/switching costs between arms (see e.g., \citealp{dekel2014bandits, koren2017bandits}), yet none of these settings can be leveraged for our problem.

\vspace{0.2in}
% !TEX root=main.tex

\section{On Ergodicity} % inevitably meets any sampling requirement}
\label{app_ergodicity}

In this section we explain why the ergodic setting (Asm.\,\ref{asm_ergodic}) and the more general communicating setting of Asm.\,\ref{asm_communicating} effectively set the boundary on the difficulty of the problem, in the sense that in an ergodic MDP any sampling requirement is eventually fulfilled, whatever the policy executed.

\begin{assumption}[$M$ is ergodic]
    For any stationary policy $\pi$, the corresponding Markov chain $P_{\pi}$ is ergodic, i.e., all states are aperiodic and recurrent.
\label{asm_ergodic}
\end{assumption}

Consider any sampling requirements $b: \mathcal{S} \rightarrow \mathbb{N}$ and fix any stationary policy $\pi$. It induces an ergodic chain $P_{\pi}$ with stationary distribution denoted by $\mu_{\pi} \in \Delta(S)$. Let $\mu_{\pi,\min} := \min_{s \in \mathcal{S}} \mu_{\pi}(s) > 0$. We assume without loss of generality that $P_{\pi}$ is reversible with spectral gap $\gamma_{\pi} > 0$. (Otherwise, in the non-reversible case, the dependency on $\gamma_{\pi}$ in Eq.\,\ref{eq_ergodic} and thus in Eq.\,\ref{eq_n_ergodic} is simply replaced by the pseudo-spectral gap introduced in~\citep{paulin2015concentration}.) It is well-known (see e.g.,\,\citep{hsu2015mixing, paulin2015concentration}) that with probability at least $1-\delta$, for any $s \in \mathcal{S}$ and $t \geq 1$,
\begin{align}
    \left\vert \frac{N_{\pi,t}(s)}{t} - \mu_{\pi}(s)\right\vert \leq \sqrt{ \frac{2 \log\left( \frac{S}{\delta} \sqrt{\frac{2}{\mu_{\pi,\min}}} \right)}{\gamma_{\pi} t} } + \frac{20 \log\left( \frac{S}{\delta} \sqrt{\frac{2}{\mu_{\pi,\min}}}\right)}{\gamma_{\pi} t},
\label{eq_ergodic}
\end{align}
which implies that
\begin{align*}
    N_{\pi,t}(s) \geq t \mu_{\pi,\min} - \sqrt{t} \sqrt{ \frac{2 \log\left( \frac{S}{\delta} \sqrt{\frac{2}{\mu_{\pi,\min}}} \right)}{\gamma_{\pi}} } - \frac{20 \log\left( \frac{S}{\delta} \sqrt{\frac{2}{\mu_{\pi,\min}}}\right)}{\gamma_{\pi}}.
\end{align*}
In particular, we can guarantee that $N_{\pi,t}(s) \geq b(s)$ for any $s \in \mathcal{S}$ whenever
\begin{align}
    t = \Omega\left( \frac{\max_{s \in \mathcal{S}} b(s)}{\gamma_{\pi}\mu_{\pi,\min}} + \frac{1}{\gamma_{\pi}\mu_{\pi,\min}^2} \right).
\label{eq_n_ergodic}
\end{align}
This shows that any policy inevitably meets the sampling requirements in the ergodic setting. Moreover we see that in the case of sampling requirements $b$ that are evened out across the state space, better performance should be achieved by policies with more uniform stationary distributions (i.e., $\mu_{\pi, \min} \gg 0$) and with good mixing properties (i.e., $\gamma_{\pi} \gg 0$).

% !TEX root=main.tex

%\newpage

\section{Experiments}
\label{app_exp}

\begin{figure}[t!]
    \centering
    \begin{subfigure}{0.38\linewidth}
        \centering
        \begin{tikzpicture}
\begin{scope}[local bounding box=scope1,scale=0.4]
\tikzset{VertexStyle/.style = {draw,
		shape          = circle,
		text           = black,
		inner sep      = 2pt,
		outer sep      = 0pt,
		minimum size   = 10 pt}} % 20
\tikzset{VertexStyle2/.style = {shape          = circle,
		text           = black,
		inner sep      = 2pt,
		outer sep      = 0pt,
		minimum size   = 10 pt}}  % 14
\tikzset{Action/.style = {draw,
		shape          = circle,
		text           = black,
		fill           = black,
		inner sep      = 2pt,
		outer sep      = 0pt}}

% Node of the MDP
\newcommand\xdist{3}
\newcommand\axshift{1.5}
\newcommand\ayshift{1.5}

\node[VertexStyle,gray](s1) at (0,0) {{\tiny $ s_1 $}};
\node[VertexStyle,gray](s2) at (\xdist,0) {{\tiny $ s_2 $}};
\node[gray](s3) at (2*\xdist,0) {$\textbf{....}$};
\node[VertexStyle,gray](s5) at (3*\xdist,0) {{\tiny $ s_{S-1} $}};
\node[VertexStyle,gray](s6) at (4*\xdist,0) {{\tiny $ s_S $}};

% Edges state -> state
\begin{tiny}
\draw[->, >=latex,gray](s1) to [out=70,in=110,looseness=6.5] node[above]{$0.4$} (s1);
\draw[->, >=latex,gray](s1) to [out=45,in=130,looseness=1.2] node[above]{$0.6$} (s2);
\draw[->, >=latex,gray](s2) to [out=45,in=130,looseness=1.2] node[above]{$0.35$} (s3);
\draw[->, >=latex,gray](s2) to [out=70,in=110,looseness=6.5] node[above]{$0.6$} (s2);
\draw[->, >=latex,gray](s2) to [out=170,in=15,looseness=1.2] node[above]{$0.05$} (s1);
\draw[->, >=latex,gray](s3) to [out=45,in=130,looseness=1.2] node[above]{$0.35$} (s5);
%\draw[->, >=latex,gray](s3) to [out=70,in=110,looseness=6.5] %node[above]{$0.6$} (s3);
\draw[->, >=latex,gray](s3) to [out=170,in=15,looseness=1.2] node[above]{$0.05$} (s2);
\draw[->, >=latex,gray](s5) to [out=170,in=15,looseness=1.2] node[above]{$0.05$} (s3);
\draw[->, >=latex,gray](s5) to [out=45,in=130,looseness=1.2] node[above]{$0.35$} (s6);
\draw[->, >=latex,gray](s6) to [out=70,in=110,looseness=6.5] node[above]{$0.6$} (s6);
\draw[->, >=latex,gray](s6) to [out=170,in=15,looseness=1.2] node[above]{$0.4$} (s5);

\draw[dashed, ->, >=latex,gray](s1) to [out=-70,in=-110,looseness=5.5] node[right]{~~$1$} (s1);
\draw[dashed, ->, >=latex,gray](s2) to [out=225,in=-30,looseness=1.2] node[below]{$1$} (s1);
\draw[dashed, ->, >=latex,gray](s3) to [out=225,in=-30,looseness=1.2] node[below]{$1$} (s2);
\draw[dashed, ->, >=latex,gray](s5) to [out=225,in=-30,looseness=1.2] node[below]{$1$} (s3);
\draw[dashed, ->, >=latex,gray](s6) to [out=225,in=-30,looseness=1.2] node[below]{$1$} (s5);
\end{tiny}

% Define reward
%\node [right, xshift=-1.7cm, yshift=-0.7cm] at (s1) { $r\!=\!0.01$};
%\node [right, xshift=0.3cm, yshift=0.7cm] at (s6) { $r\!=\!1$};

\end{scope}
\end{tikzpicture}
        \caption{Reward-free RiverSwim ($S=6$ states)}
        \label{fig:RS}
        \end{subfigure}
      \begin{subfigure}{0.28\linewidth}
      \centering
        \includegraphics[width=0.66\linewidth]{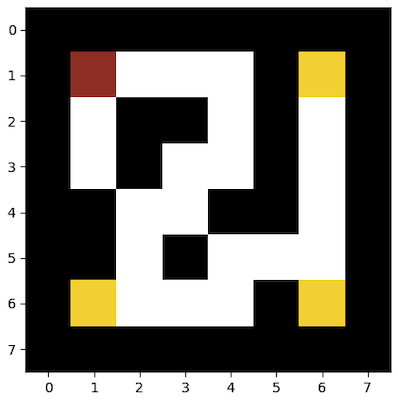}
        \caption{Corridor gridworld ($S=24$ states)}
        \label{fig:GWcorridor}
    \end{subfigure}
    \hspace*{0.2cm}
    \begin{subfigure}{0.3\linewidth}
    \centering
    \includegraphics[width=0.8\linewidth]{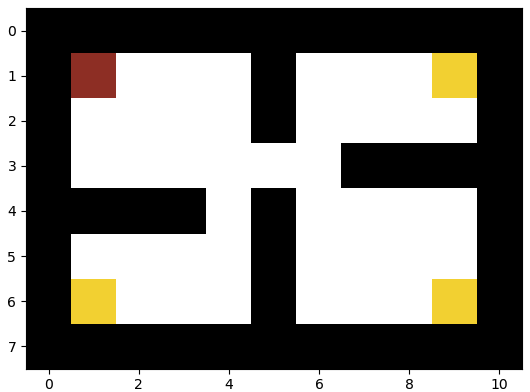}
    \caption{$4$-room gridworld ($S=43$ states)}
    \label{fig:GW4room}
    \end{subfigure}
\caption{The three domains considered in Fig.\,\ref{fig:proportion_treasure}. For the gridworlds (b) and (c), the red tile is the starting state, yellow tiles are terminal states that reset to the starting state, and black tiles are reflecting walls (see §``Details on environments'').}
\label{fig:environments}
\end{figure}

% \begin{figure*}[t!]
% 	\centering
% 	\begin{minipage}{0.13\linewidth}
% 		%\centering
% 		\includegraphics[width=0.99\linewidth]{PLOTS/legend.pdf}
% 	\end{minipage}\hfill
% 	\begin{minipage}{0.29\linewidth}
% 		\centering
% 		\includegraphics[width=0.99\linewidth]{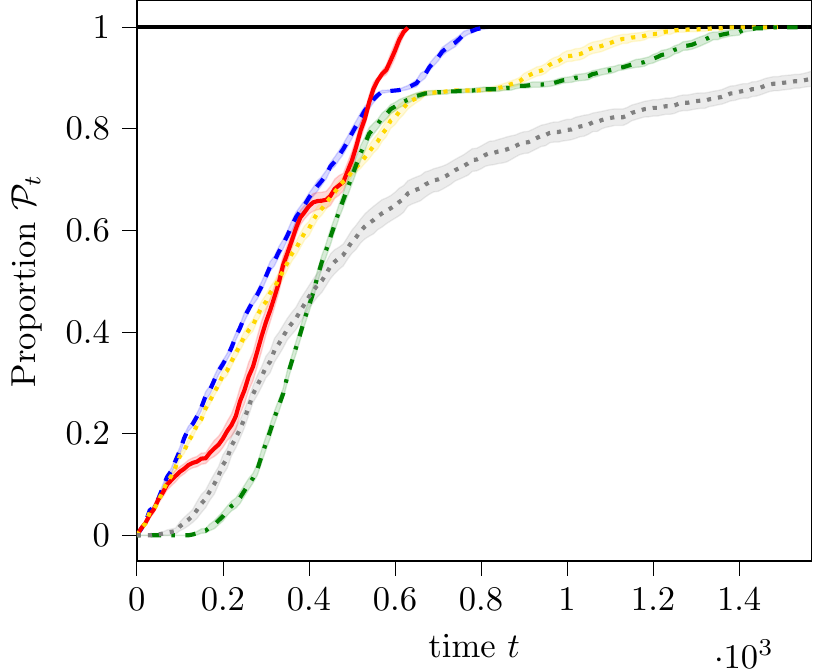}
% 	\end{minipage}\hfill
% 	\begin{minipage}{0.29\linewidth}
% 		\centering
% 		\includegraphics[width=0.99\linewidth]{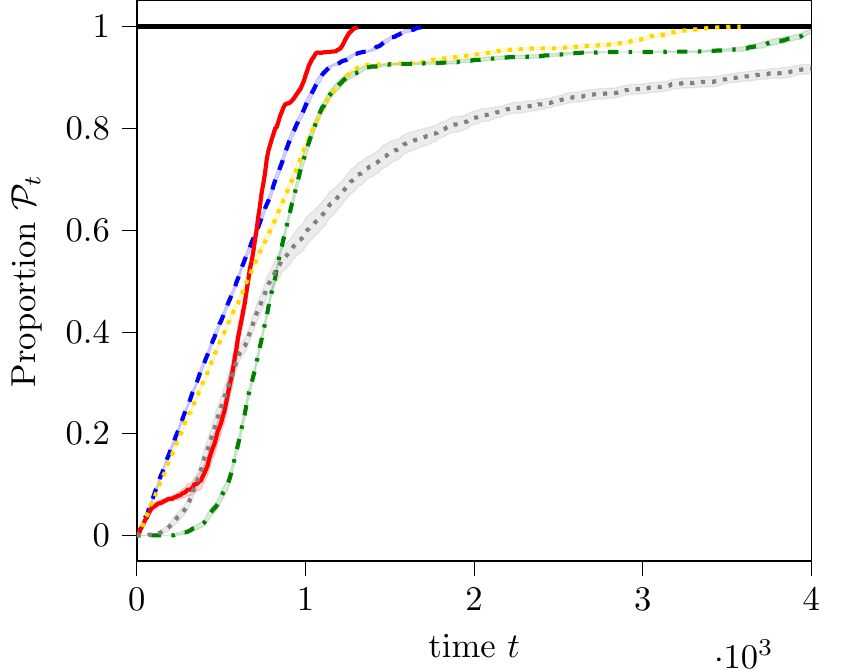}
% 	\end{minipage}\\
% 	\begin{minipage}{0.29\linewidth}
% 		\centering
% 		\includegraphics[width=0.99\linewidth]{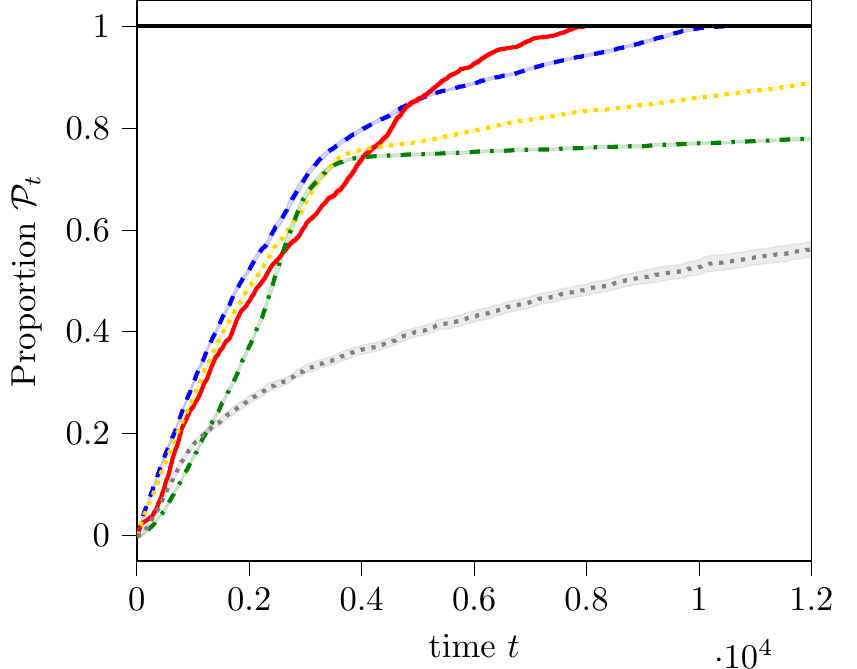}
% 	\end{minipage}\hfill
% 		\begin{minipage}{0.29\linewidth}
% 		\centering
% 		\includegraphics[width=0.99\linewidth]{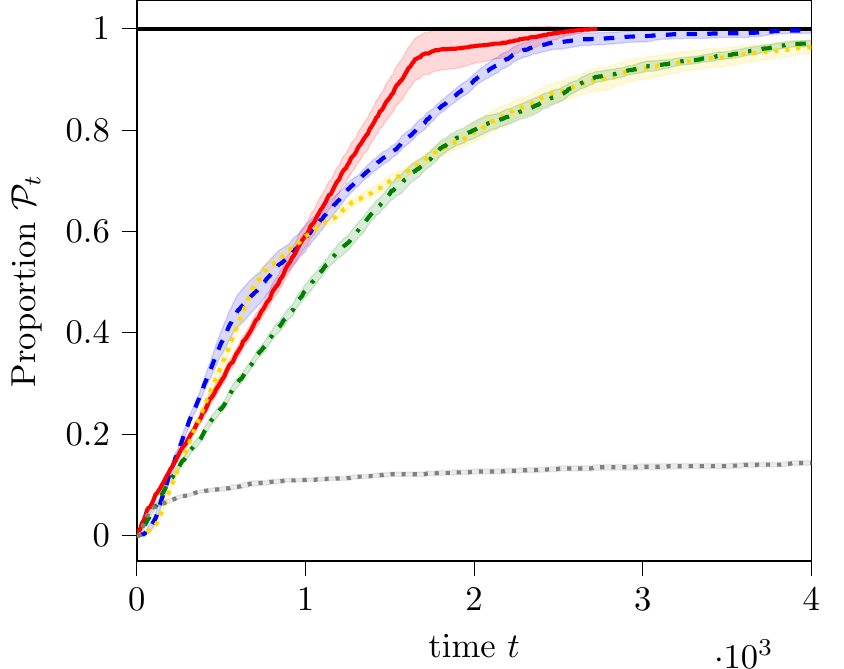}
% 	\end{minipage}
% 	\vspace{-0.1in}
% 	\caption{Proportion $\mathcal{P}_t$ of states that satisfy the sampling requirements at time $t$, averaged over 30 runs, on the \TREASURE problem with $b(s,a)=10$. \textit{Left:} (a) Gridworld with high-cost state, \textit{Center:} (b) 4-room symmetric gridworld, \textit{Right:} (c) CliffWalk-type gridworld.} %See Fig.\,\ref{fig:proportion_treasure_more} in App.\,\ref{app_exp} for CliffWalk and GridWorld with Walls.}
% 	\label{fig:proportion_treasure_more}
% 	%\vspace{-0.1in}
% \end{figure*}

This section complements the experiments reported in Sect.\,\ref{section_experiments}. We provide details about the algorithmic configurations and the environments as well as additional experiments.

\textbf{Details on Fig.\,\ref{fig:proportion_treasure} and Fig.\,\ref{fig:proportion_treasure_more}.} Fig.\,\ref{fig:proportion_treasure} reports, as a function of time $t$, the proportion $\mathcal{P}_t$ of states that at time $t$ satisfy the sampling requirements of the \TREASURETEN problem (i.e., $b(s,a)=10$). Formally,
$\mathcal{P}_t :=  \abs{ \{ s \in \cS : \forall a \in \cA, N_t(s,a) \geq b(s,a) \} } \cdot S^{-1}$. As such, all sampling requirements are met as soon as $\mathcal{P}_t = 1$, meaning that the black line $y=1$ on the y-axis characterizes our objective. Furthermore, we report in Fig.\,\ref{fig:proportion_treasure_more} results on additional domains (see below).

\textbf{Details on environments.} The three domains considered in Fig.\,\ref{fig:proportion_treasure} are given in Fig.\,\ref{fig:environments}. The first one corresponds to a reward-free version of the RiverSwim domain introduced in \citep{strehl2008analysis}, which is a stochastic chain with $6$ states and $2$ actions classically used for testing exploration algorithms. The other two domains are gridworlds. 
In Fig.\,\ref{fig:proportion_treasure_more} we test on a larger RiverSwim domain with $36$ states and three additional gridworlds that are given in Fig.\,\ref{fig:gridworlds_more}. Throughout our experiments, the gridworld domains are defined as follows. The agent can move using the cardinal actions (Right, Down, Left, Up). An action fails with probability $p_{f} = 0.1$, in which case the agent follows (uniformly) one of the other directions. The starting state is shown in red. Yellow tiles are terminal states that, when reached, deterministically reset to the starting state. The black walls act as reflectors, i.e., if the action leads against the wall, the agent stays in the current position with probability $1$. The gridworlds are all reward-free, except the one in Fig.\,\ref{fig:gridworlds_more_A} where the blue tile incurs large negative environmental reward: it is thus a \textit{trap state} which should be avoided as much as possible. Finally, in the experiments with the randomly generated Garnet environments and state-action requirements (Fig.\,\ref{fig:boxplots}), we guarantee the MDPs randomly generated to be communicating by setting $p(s_0|s, a) \geq 0.001$ for every $(s, a)$ and an arbitrary state $s_0$.

\textbf{Algorithmic details.} For all experiments and all considered algorithms, we choose a scaling factor $\alpha_p = 0.1$ of the confidence intervals of the transition probabilities (which enables to speed up the learning, see e.g., \citep{fruit2018efficient}), as well as a confidence level set to $\delta=0.1$. Recall that for \OSP, in the case of state-only requirements, a state $s$ is considered as under-sampled and is thus a goal state if $\sum_{a \in \mathcal{A}} N(s,a) < b(s)$, while in the case of state-action requirements, a state $s$ is considered as under-sampled if $\exists a \in \mathcal{A}, N(s,a) < b(s,a)$. We consider the following initial phase for \OSP (i.e., when all states are under-sampled): we select as goal states those minimizing the \say{remaining budget} $b(s) - N(s)$ for state-only requirements (or $\sum_{a \in \cA} \max\{ b(s,a) - N(s,a), 0 \}$ for state-action requirements), which has the effect of shortening the length of the initial phase. In the case of state-action requirements, once a sought-after goal state $s$ is reached, \OSP selects an under-sampled action $a$ whose gap $b(s,a) - N(s,a)$ is maximized. We note that this design choice can be observed in Fig.\,\ref{fig:proportion_treasure} and \ref{fig:proportion_treasure_more} where \OSP seeks to \say{even out} its sampling strategy, with a steady increase in $(\mathcal{P}_t)$, instead of exhausting the requirements state after state.

\begin{figure*}[t!]


	\centering
	\begin{minipage}{0.31\linewidth}
		%\centering
		\begin{center}
		\includegraphics[width=0.5\linewidth]{PLOTS/legend.pdf}
		\end{center}
		\captionof{figure}{Proportion $\mathcal{P}_t$ of states that satisfy the sampling requirements at time $t$, averaged over 30 runs, on the \TREASURETEN problem with \mbox{$b(s,a)=10$}. \textit{Top left:} RiverSwim(36) with $36$ states (see Fig.\,\ref{fig:RS}), \textit{Top right:} 10-state gridworld with high-cost state, \textit{Bottom left:} 20-state 4-room symmetric gridworld, \textit{Bottom right:} 48-state CliffWalk-type gridworld.} %See Fig.\,\ref{fig:proportion_treasure_more} in App.\,\ref{app_exp} for CliffWalk and GridWorld with Walls.}
	\label{fig:proportion_treasure_more}
	\end{minipage}\hfill
	\begin{minipage}{0.29\linewidth}
		\centering
		\includegraphics[width=0.99\linewidth]{PLOTS/proportion_RS36.pdf} \\
		\includegraphics[width=0.99\linewidth]{PLOTS/proportion_app_gridworld2.pdf}
	\end{minipage}\hfill
	\vspace{0.2in}
	\begin{minipage}{0.29\linewidth}
		\centering
		\includegraphics[width=0.99\linewidth]{PLOTS/proportion_app_gridworld0.pdf}\\
		\includegraphics[width=0.99\linewidth]{PLOTS/proportion_cliffwalk.pdf}
	\end{minipage}
\end{figure*}

\begin{figure}[t!]
    \centering
    \begin{subfigure}{0.3\linewidth}
        \centering
        \includegraphics[width=0.5\linewidth]{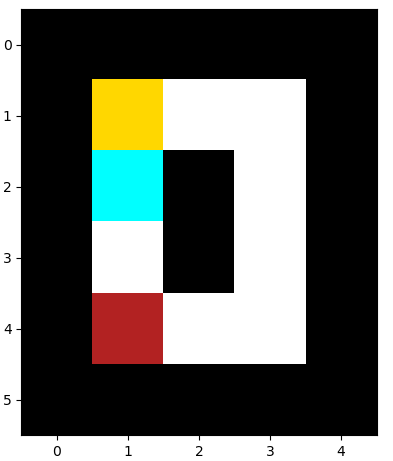}
        \caption{Gridworld with high-cost state \\ \textcolor{white}{.} \hspace{0.07in} ($S=10$ states)}
        \label{fig:gridworlds_more_A}
        \end{subfigure}
      \hspace*{0.2cm}
      \begin{subfigure}{0.3\linewidth}
      \centering
        \includegraphics[width=0.6\linewidth]{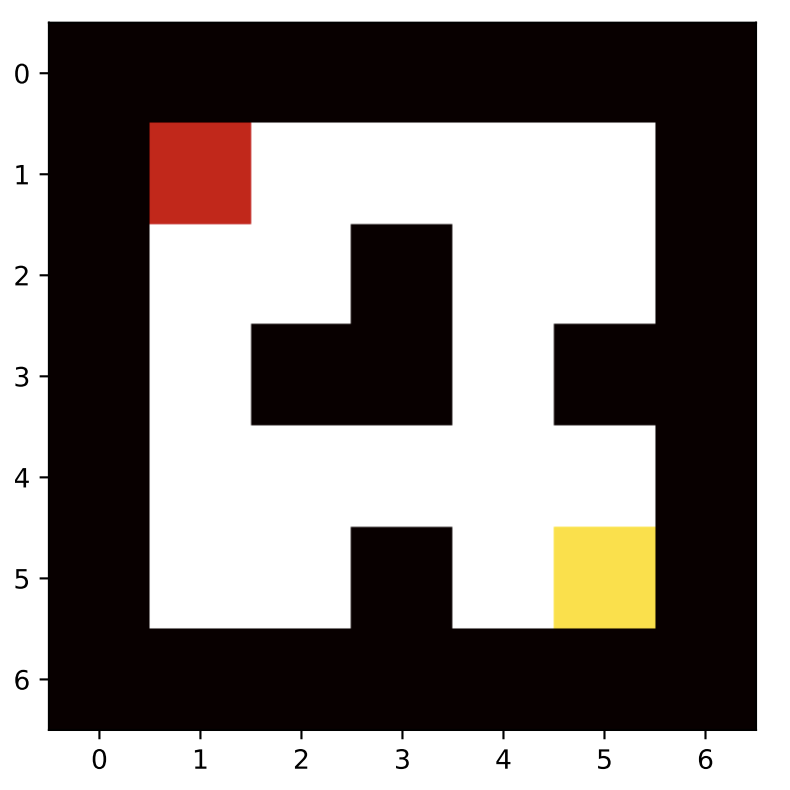}
        \caption{4-room symmetric gridworld \\ \textcolor{white}{.} \hspace{0.07in} ($S=20$ states)}
        \label{fig:gridworlds_more_B}
    \end{subfigure}
    \begin{subfigure}{0.35\linewidth}
    \flushright
    \includegraphics[width=0.84\linewidth]{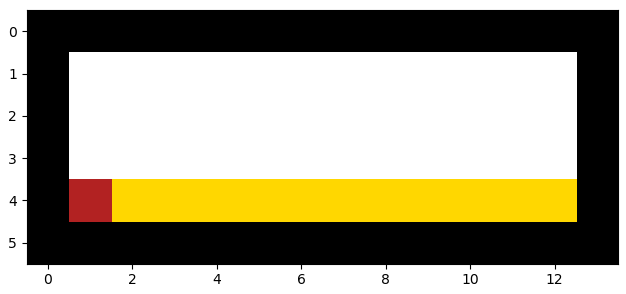}
    \caption{CliffWalk-type gridworld \\ \hspace{-0.4in} ($S=48$ states)}
    \label{fig:gridworlds_more_C}
    \end{subfigure}
\caption{The three gridworlds considered in Fig.\,\ref{fig:proportion_treasure_more}. The blue tile in (a) is a ``trap state'' that incurs large negative environmental reward and should thus be avoided as much as possible.}
\label{fig:gridworlds_more}
\end{figure}

\textbf{\OSP-for-\MODEST algorithm.} Here we detail the \OSP-for-\MODEST algorithm used in the \MODEST experiment of Fig.\,\ref{fig:modest}. The \OSP sampling requirements are computed using a decreasing \MODEST accuracy $\eta$, which enables the algorithm to be accuracy-agnostic like the \WeightedMaxEnt heuristic to which it is compared. \OSP-for-\MODEST starts at an initial accuracy of $\eta \leftarrow1$ and iteratively performs the two following steps until the algorithm ends: \textit{i)} it requires a sampling requirement of $b_t^{\MODEST}(s,a) = \alpha_b \Phi\big(\sum_{s' \in \cS} \sqrt{\widehat{\sigma}^2_{t}(s'\vert s,a)}~, ~ S\big)$, where $\Phi$ is defined after Eq.\,\ref{budget_modest} for accuracy~$\eta$ and where $\alpha_b = 0.01$ is a scaling factor to speed up the learning; and \textit{ii)} when the sampling requirements are fulfilled by \OSP, it sets $\eta \leftarrow \eta / 2$ and goes back to the first step.

\textbf{Dependencies.} For each environment of Fig.\,\ref{fig:environments} on the \TREASURETEN problem (i.e., $b(s,a)=10$, $B=10SA$), we compute in Tab.\,\ref{tab:dependencies} the sample complexity of \OSP run with known dynamics, to put aside the learning component so that its corresponding sample complexity can be bounded exactly by $BD$ or by $\sum_s b(s)D_s$ according to the analysis in Sect.\,\ref{section_osp_algorithm}. Both bounds are reported in Tab.\,\ref{tab:dependencies}: we observe that the second (more state-dependent) quantity is tighter and more preferable than the first. Despite both bounds being loose w.r.t.\,the actual algorithmic performance, they can effectively capture the difficulty of the problem (in a relative sense where the higher the bounds, the higher the sample complexity). We also recall from Sect.\,\ref{section_experiments} that there exist simple worst-case problems (see e.g., Fig.\,\ref{toy_diameter}) where these bounds are tight, i.e., where the sample complexity of \OSP (whether the dynamics are known or not) must directly scale with these diameter quantities. 
%Note that the optimal goal-oriented policy to collect a sample at state $s$ may need much less than $D$ or $D_s$ time steps, and the sample collection attempt may be preferable at some specific states among all the under-sampled states for faster future sample collection. 
Notice that running \OSP with known dynamics corresponds to deploying an optimal \textit{greedy} strategy (i.e., by minimizing each time to reach under-sampled states in a sequential fashion), which is likely not the optimal non-stationary solution (which would involve solving a sort of highly difficult, online travelling salesman problem), see App.\,\ref{remark_greedy} for additional discussion. Finally, we study the sample complexity of \OSP across similar MDPs with increasing number of states to see how that dependence pans out. Fig.\,\ref{fig:dep_states} reports the sample complexity of \OSP in randomly generated Garnet MDPs for increasing values of $S$. We observe that as expected, the sample complexity scales linearly with $S$.

\begin{figure}[t]
\begin{minipage}{0.28\linewidth}
	\captionof{table}{For the \TREASURE-10 problem, we report the quantities $BD$, $\sum_s b(s)D_s$ and the sample complexity of \OSP run with known dynamics (averaged over 30 runs), on the $3$ domains of Fig.\,\ref{fig:environments}.}%{\protect\footnotemark}} %footnotetext{}
    \label{tab:dependencies}
	\end{minipage}\hfill
	\begin{minipage}{0.7\linewidth}
\centering
\renewcommand*{\arraystretch}{1.2}
\begin{small}
\begin{tabular}{|c|c|c|c|}
  \hline
  \textit{Environment} & $BD$ & $\sum_s b(s)D_s$ & \thead{Sample comp.\,of \\ \OSP run with \\ known dynamics $p$}  \\
  \hline
  RiverSwim(6) & $1766.7$ & $958.7$ & $249.9$ \\ 
   \hline
  Corridor gridworld(24) & $24375.6$ &  $13695.2$ & $3156.5$ \\
  \hline
  $4$-room gridworld(43) & $27399.7$ &  $19048.3$ & $3342.5$ \\
  \hline
\end{tabular}
\end{small}
\end{minipage}
\end{figure}

\begin{figure}[t]
\begin{minipage}{0.50\linewidth}
\centering
\includegraphics[width=0.6\linewidth]{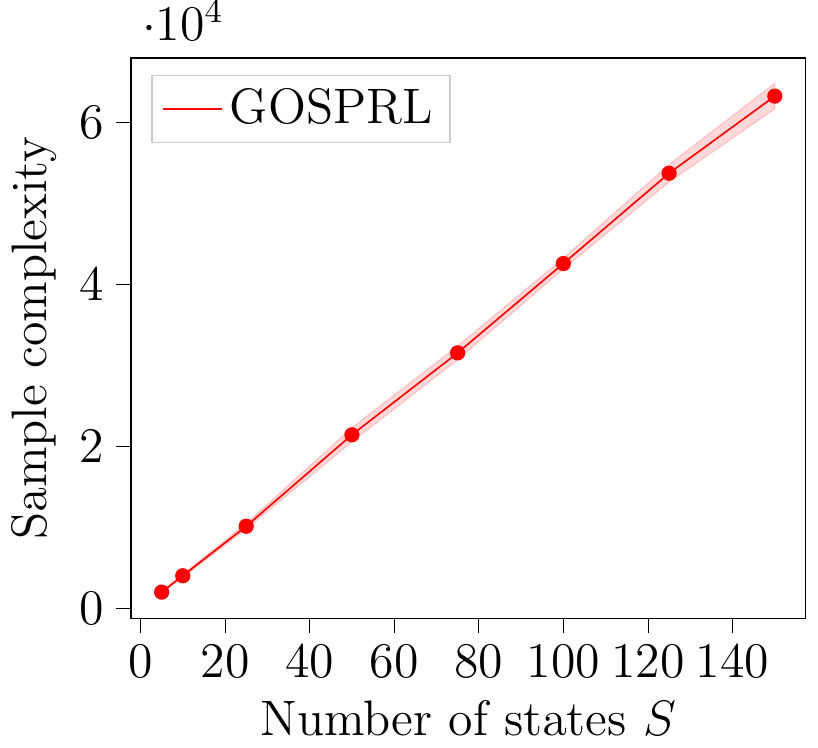}
%\vspace{-0.1in}
\captionof{figure}{\small Sample complexity of \OSP in randomly generated Garnet MDPs \textbf{for increasing values of $S$}, with all other parameters fixed ($A$, $\beta$, $\overline{U}$) as in Fig.\,\ref{fig:boxplots}. Results are averaged over $5$ Garnets, each for $12$ runs.}
\label{fig:dep_states}
\end{minipage}%
    \hfill%
    \begin{minipage}{0.44\linewidth}
    \centering
\includegraphics[width=0.8\linewidth]{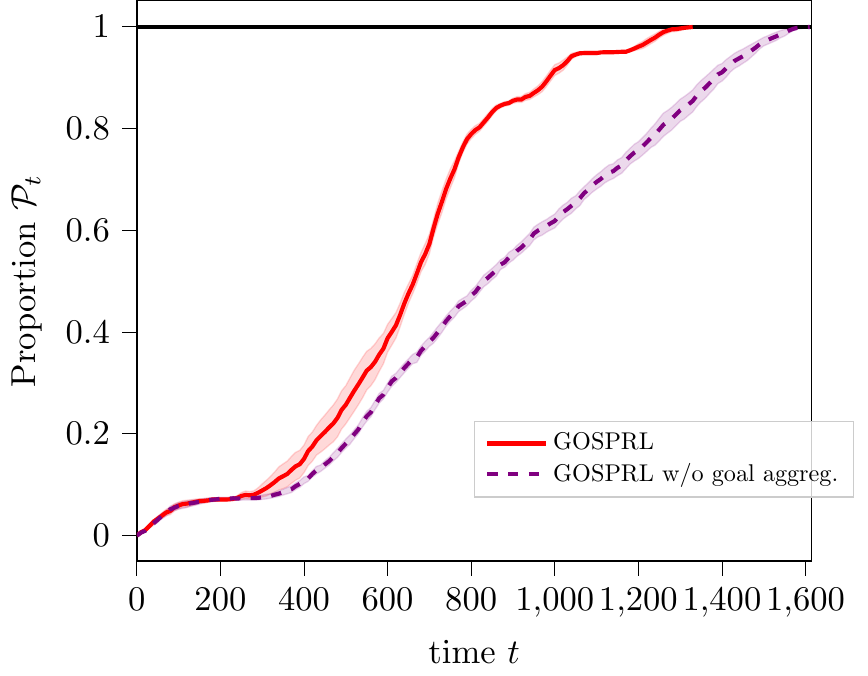}
%\vspace{-0.1in}
\captionof{figure}{\small \textbf{Impact of goal aggregation on \OSP.} Proportion $\mathcal{P}_t$ averaged over 30 runs, on the \TREASURETEN problem with $b(s,a)=10$ on the environment of Fig.\,\ref{fig:gridworlds_more_B}.}
\label{fig:aggregation}
\end{minipage}%
\end{figure}

\renewcommand\theadalign{bc}
\renewcommand\theadfont{}
\renewcommand\theadgape{\Gape[1pt]}
\renewcommand\cellgape{\Gape[1pt]}

\vspace{0.2in}

\begin{figure}[t]
\begin{minipage}{0.45\linewidth}
	\captionof{table}{\small \textbf{Impact of cost shaping on \OSP.} Environment of Fig.\,\ref{fig:gridworlds_more_A}. Sampling requirement are concentrated at the yellow terminal state $y \in \cS$, i.e., $b(y,a)=10$ for all $a \in \cA$. Cost-weighted \OSP sets a cost of $10$ (instead of $1$) at the blue trap state during each SSP planning step. Values are averaged over 30 runs.}
    \label{tab:costs}
	\end{minipage}\hfill
	\begin{minipage}{0.5\linewidth}
\centering
\renewcommand*{\arraystretch}{1.2}
\begin{small}
        \begin{tabular}{ | c | c | c |}
      \hline
      \thead{} & \thead{\OSP \\ (Alg.\,\ref{algo:SO})} & \thead{Cost-weighted \\ \OSP} \\
      \hline
      \makecell{Sample \\ complexity} &  $253.1$ & $520.0$  \\
      \hline
      \makecell{Visits to \\ trap state} & $44.6$ & $4.7$ \\
      \hline
    \end{tabular}
\end{small}
\end{minipage}
\end{figure}

\textbf{Impact of goal aggregation on \OSP.} \OSP iteratively aggregates the undersampled states into a \textit{meta-goal} for which it computes an optimistic goal-oriented policy. While it is possible to focus on specific goal states as mentioned in App.\,\ref{sub_sect_subroutines} without affecting the sample complexity guarantee, performing the goal aggregation leads to shorter and more successful sample collection attempts. We observe in Fig.\,\ref{fig:aggregation} that this indeed translates into better empirical performance. Indeed, \OSP collects the prescribed samples faster than a version of \OSP that selects uniformly at random a single goal state among all undersampled states (i.e., that does not perform goal state aggregation).

\textbf{Impact of cost shaping on \OSP.} While \OSP in Alg.\,\ref{algo:SO} considers unit costs for each SSP problem it constructs, any non-unit costs can be designed as long as they are positive and bounded. In particular, detering costs may be assigned to trap states with large negative environmental reward that the agent seeks to avoid. To study this, we consider the gridworld of Fig.\,\ref{fig:gridworlds_more_A} where the blue tile is a trap state that the agent must avoid as much as possible. For ease of exposition we consider here sampling requirements concentrated at the terminal state in yellow denoted by $y \in \cS$, i.e., $b(y,a)=10$ for any $a \in \cA$. We compare \OSP with a cost-weighted \OSP where a cost of~$10$ is set at the blue trap state during each SSP planning step. Tab.\,\ref{tab:costs} shows that while the sample complexity of cost-weighted \OSP is worsened, the number of visits to the undesirable trap state is considerably decreased w.r.t.\,\OSP. This makes sense since the shortest path from the red starting state to the sought-after yellow terminal state goes through the blue trap state, so a trade-off appears between minimizing the sample complexity and visiting undesirable states. This numerical simulation shows that \OSP can naturally adjust this trade-off by cost-weighting the successive SSP problems it tackles.

\end{document}